\newtheorem{theorem}{Theorem}
\newtheorem{lemma}{Lemma}
\newtheorem{corollary}{Corollary}
\newtheorem{definition}{Definition}
\newtheorem{proposition}{Proposition}
\theoremstyle{remark}
\newtheorem{remark}{Remark}
\theoremstyle{example}
\newtheorem{example}{Example}
\def\ci{\!\perp\!}
\def\nci{\!\not\perp\!}
\def\ra{\rightarrow}
\def\la{\leftarrow}
\def\aa{\leftrightarrow}
\def\bb{\leftfootline\!\!\!\!\!\rightfootline}
\def\bo{\leftfootline\!\!\!\!\!\multimap}
\def\ao{\leftarrow\!\!\!\!\!\multimap}
\def\ob{\mathrel{\reflectbox{\ensuremath{\bo}}}}
\def\oa{\mathrel{\reflectbox{\ensuremath{\ao}}}}
\def\bn{\leftfootline}
\def\nb{\rightfootline}
\def\oo{\mathrel{\reflectbox{\ensuremath{\multimap}}}\!\!\!\!\!\multimap}
\def\no{\multimap}
\newcommand{\comments}[1]{}
\def\de{\!\sim\!}
\begin{document}

\title[]{Learning AMP Chain Graphs and some Marginal Models Thereof under Faithfulness: Extended Version}

\author[]{Jose M. Pe\~{n}a\\
ADIT, IDA, Link\"oping University, SE-58183 Link\"{o}ping, Sweden\\
jose.m.pena@liu.se
}

\begin{abstract}
This paper deals with chain graphs under the Andersson-Madigan-Perlman (AMP) interpretation. In particular, we present a constraint based algorithm for learning an AMP chain graph a given probability distribution is faithful to. Moreover, we show that the extension of Meek's conjecture to AMP chain graphs does not hold, which compromises the development of efficient and correct score+search learning algorithms under assumptions weaker than faithfulness. 

We also study the problem of how to represent the result of marginalizing out some nodes in an AMP CG. We introduce a new family of graphical models that solves this problem partially. We name this new family maximal covariance-concentration graphs (MCCGs) because it includes both covariance and concentration graphs as subfamilies. We describe global, local and pairwise Markov properties for MCCGs and prove their equivalence. We characterize when two MCCGs are Markov equivalent, and show that every Markov equivalence class of MCCGs has a distinguished member. We present a constraint based algorithm for learning a MCCG a given probability distribution is faithful to. 

Finally, we present a graphical criterion for reading dependencies from a MCCG of a probability distribution that satisfies the graphoid properties, weak transitivity and composition. We prove that the criterion is sound and complete in certain sense.
\end{abstract}

\date{\currfilename, \currenttime, \ddmmyydate{\today}}

\maketitle

\section{Introduction}\label{sec:introduction}

This paper deals with chain graphs (CGs) under the Andersson-Madigan-Perlman (AMP) interpretation \citep{Anderssonetal.2001}. Two other interpretations exist in the literature, namely the Lauritzen-Wermuth-Frydenberg (LWF) interpretation \citep{Lauritzen1996} and the multivariate regression (MVR) interpretation \citep{CoxandWermuth1996}. The AMP and LWF interpretations are sometimes considered as competing and, thus, their relative merits have been pointed out \citep{Anderssonetal.2001,DrtonandEichler2006,Levitzetal.2001,RoveratoandStudeny2006}. Note, however, that no interpretation subsumes the other: There are many independence models that can be induced by a CG under one interpretation but that cannot be induced by any CG under the other interpretation \citep[Theorem 6]{Anderssonetal.2001}. Likewise, neither the AMP interpretation subsumes the MVR interpretation nor vice versa \citep[Theorems 4 and 5]{SonntagandPenna2013}.

This paper consists of three main sections. In Section \ref{sec:amp}, we present an algorithm for learning an AMP CG a given probability distribution is faithful to. To our knowledge, we are the first to present such an algorithm. However, algorithms for learning LWF CGs under faithfulness already exist \citep{Maetal.2008,Studeny1997a}. In fact, we have recently developed an algorithm for learning LWF CGs under the milder composition property assumption \citep{Pennaetal.2012}. We have also recently developed an algorithm for learning MVR CGs under the faithfulness assumption \citep{SonntagandPenna2012}.

As \citet[Section 9.4]{RichardsonandSpirtes2002} show, a desirable feature that AMP CGs lack is that of being closed under marginalization (a.k.a the precollapsibility property \citep{Studeny1997b}). That is, the independence model resulting from marginalizing out some nodes in an AMP CG may not be represented by any other AMP CG. This leads us to the problem of how to represent the result of marginalizing out some nodes in an AMP CG. Of course, one may decide to continue working with the AMP CG and treat the marginalized nodes as latent nodes. This solution relies upon one having access to the AMP CG. Thus, it does not solve the problem if one knows that there is an underlying AMP CG but does not have access to it. As far as we know, this problem has been studied for directed and acyclic graphs by \cite{RichardsonandSpirtes2002} but not for AMP CGs. In Section \ref{sec:mccg}, we present the partial solution to this problem that we have obtained so far. Specifically, we introduce and study a new family of graphical models that we call maximal covariance-concentration graphs (MCCGs). MCCGs solve the problem at hand partially, because each of them represents the result of marginalizing out some nodes in some AMP CG. Unfortunately, MCCGs do not solve the problem completely, because they do not represent the result of marginalizing out any nodes in any AMP CG.

MCCGs consist of undirected and bidirected edges, and they unify and generalize covariance and concentration graphs, hence the name. Concentration graphs (a.k.a Markov networks) were introduced by \cite{Pearl1988} to represent independence models. Specifically, the concentration graph of a probability distribution $p$ is the undirected graph $G$ where two nodes are not adjacent if and only if their corresponding random variables are independent in $p$ given the rest of the random variables. Graphical criteria for reading dependencies and independencies from $G$ (under certain assumptions about $p$) have been proposed \citep{Bouckaert1995,Pearl1988,Pennaetal.2009}. Likewise, covariance graphs (a.k.a bidirected graphs) were introduced by \cite{CoxandWermuth1996} to represent independence models. Specifically, the covariance graph of a probability distribution $p$ is the bidirected graph $G$ where two nodes are not adjacent if and only if their corresponding random variables are marginally independent in $p$. Graphical criteria for reading dependencies and independencies from $G$ (under certain assumptions about $p$) have been proposed \citep{BanerjeeandRichardson2003,Kauermann1996,Penna2013}.

If we focus on Gaussian probability distributions, then one could say that the covariance graph of a Gaussian probability distribution models its covariance matrix, whereas its concentration graph models its concentration matrix. We think that Gaussian probability distributions would be modeled more accurately if their covariance and concentration matrices were modeled jointly by a single graph. This is something one can do with MCCGs.

Finally, in Section \ref{sec:dependencies} we present a graphical criterion for reading dependencies from a MCCG $G$ of a probability distribution $p$, under the assumption that $G$ satisfies some topological constraints and $p$ satisfies the graphoid properties, weak transitivity and composition. We prove that the graphical criterion is sound and complete in certain sense.

\section{Preliminaries}\label{sec:preliminaries}

In this section, we review some concepts from probabilistic graphical models that are used later in this paper. All the graphs and probability distributions in this paper are defined over a finite set $V$. All the graphs in this paper are simple, i.e. they contain at most one edge between any pair of nodes. The elements of $V$ are not distinguished from singletons. We denote by $|X|$ the cardinality of $X \subseteq V$.

If a graph $G$ contains an undirected, directed or bidirected edge between two nodes $V_{1}$ and $V_{2}$, then we write that $V_{1} - V_{2}$, $V_{1} \ra V_{2}$ or $V_{1} \aa V_{2}$ is in $G$. The parents of a set of nodes $X$ of $G$ is the set $pa_G(X) = \{V_1 | V_1 \ra V_2$ is in $G$, $V_1 \notin X$ and $V_2 \in X \}$. The neighbors of a set of nodes $X$ of $G$ is the set $ne_G(X) = \{V_1 | V_1 - V_2$ is in $G$, $V_1 \notin X$ and $V_2 \in X \}$. The spouses of a set of nodes $X$ of $G$ is the set $sp_G(X) = \{V_1 | V_1 \aa V_2$ is in $G$, $V_1 \notin X$ and $V_2 \in X \}$. The adjacents of a set of nodes $X$ of $G$ is the set $ad_G(X) = \{V_1 | V_1 \ra V_2$, $V_1 - V_2$ or $V_1 \la V_2$ is in $G$, $V_1 \notin X$ and $V_2 \in X \}$. A route from a node $V_{1}$ to a node $V_{n}$ in $G$ is a sequence of (not necessarily distinct) nodes $V_{1}, \ldots, V_{n}$ such that $V_i \in ad_G(V_{i+1})$ for all $1 \leq i < n$. If the nodes in the route are all distinct, then the route is called a path. The length of a route is the number of (not necessarily distinct) edges in the route, e.g. the length of the route $V_{1}, \ldots, V_{n}$ is $n-1$. A route is called a cycle if $V_n=V_1$. A cycle has a chord if two non-consecutive nodes of the cycle are adjacent in $G$. A route is called descending if $V_{i} \in pa_G(V_{i+1}) \cup ne_G(V_{i+1})$ for all $1 \leq i < n$. The descendants of a set of nodes $X$ of $G$ is the set $de_G(X) = \{V_n |$ there is a descending route from $V_1$ to $V_n$ in $G$, $V_1 \in X$ and $V_n \notin X \}$. A cycle is called a semidirected cycle if it is descending and $V_{i} \ra V_{i+1}$ is in $G$ for some $1 \leq i < n$. A chain graph (CG) is a graph whose every edge is undirected or directed, and that has no semidirected cycles. A set of nodes of a graph is complete if there is an undirected edge between every pair of nodes in the set. A set of nodes of a graph is undirectly (respectively bidirectly) connected if there exists a route in the graph between every pair of nodes in the set such that all the edges in the route are undirected (respectively bidirected). An undirected (respectively bidirected) connectivity component of a graph is an undirectly (respectively bidirectly) connected set that is maximal (with respect to set inclusion). The undirected connectivity component a node $A$ of a graph $G$ belongs to is denoted as $co_G(A)$. The subgraph of $G$ induced by a set of its nodes $X$, denoted as $G_X$, is the graph over $X$ that has all and only the edges in $G$ whose both ends are in $X$. An immorality in a CG is an induced subgraph of the form $A \ra B \la C$. A flag in a CG is an induced subgraph of the form $A \ra B - C$. If a CG $G$ has an induced subgraph of the form $A \ra B \la C$, $A \ra B - C$ or $A - B \la C$, then we say that the triplex $(\{A,C\},B)$ is in $G$. Two CGs are triplex equivalent if and only if they have the same adjacencies and the same triplexes.

Let $X$, $Y$, $Z$ and $W$ denote four pairwise disjoint subsets of $V$. An independence model $M$ is a set of statements $X \ci_M Y | Z$. $M$ satisfies the graphoid properties if it satisfies the following properties: 

\begin{itemize}
\item Symmetry $X \ci_M Y | Z \Rightarrow Y \ci_M X | Z$.

\item Decomposition $X \ci_M Y \cup W | Z \Rightarrow X \ci_M Y | Z$.

\item Weak union $X \ci_M Y \cup W | Z \Rightarrow X \ci_M Y | Z \cup W$.

\item Contraction $X \ci_M Y | Z \cup W \land X \ci_M W | Z \Rightarrow X \ci_M Y \cup W | Z$.

\item Intersection $X \ci_M Y | Z \cup W \land X \ci_M W | Z \cup Y \Rightarrow X \ci_M Y \cup W | Z$. 
\end{itemize}

Two other properties that $M$ may satisfy are the following:

\begin{itemize}
\item Composition $X \ci_M Y | Z \land X \ci_M W | Z \Rightarrow X \ci_M Y \cup W | Z$.

\item Weak transitivity $X \ci_M Y | Z \land X \ci_M Y | Z \cup K \Rightarrow X \ci_M K | Z \lor K \ci_M Y | Z$ with $K \in V \setminus X \setminus Y \setminus Z$. 
\end{itemize}

We say that an independence model is a WTC graphoid when it satisfies the seven previous properties. We denote by $X \ci_p Y | Z$ (respectively $X \nci_p$ $Y | Z$) that $X$ is independent (respectively dependent) of $Y$ given $Z$ in a probability distribution $p$. We say that $p$ is Markovian with respect to an independence model $M$ when $X \ci_p Y | Z$ if $X \ci_M Y | Z$ for all $X$, $Y$ and $Z$ pairwise disjoint subsets of $V$. We say that $p$ is faithful to $M$ when $X \ci_p Y | Z$ if and only if $X \ci_M Y | Z$ for all $X$, $Y$ and $Z$ pairwise disjoint subsets of $V$. Any probability distribution $p$ satisfies the first four previous properties. If $p$ is faithful to a CG, then it also satisfies the last three previous properties.\footnote{To see it, note that there is a Gaussian distribution that is faithful to $G$ \citep[Theorem 6.1]{Levitzetal.2001}. Moreover, every Gaussian distribution satisfies the intersection, composition and weak transitivity properties \citep[Proposition 2.1 and Corollaries 2.4 and 2.5]{Studeny2005}.}

A node $B$ in a route $\rho$ in a CG is called a head-no-tail node in $\rho$ if $A \ra B \la C$, $A \ra B - C$, or $A - B \la C$ is a subroute of $\rho$ (note that maybe $A=C$ in the first case). A node $B$ in $\rho$ is called a non-head-no-tail node in $\rho$ if $A \la B \ra C$, $A \la B \la C$, $A \la B - C$, $A \ra B \ra C$, $A - B \ra C$, or $A - B - C$ is a subroute of $\rho$ (note that maybe $A=C$ in the first and last cases). Note that to classify $B$ as a (non-)head-no-tail node in $\rho$, one has to consider the edge ends at $B$ as well as at $A$ and $C$. Note also that $B$ may be both a head-no-tail and a non-head-no-tail node in $\rho$, e.g. take $\rho$ to be $A \ra B \la C \ra B \ra D$. Let $X$, $Y$ and $Z$ denote three pairwise disjoint subsets of $V$. A route $\rho$ in a CG $G$ is said to be $Z$-open when (i) every head-no-tail node in $\rho$ is in $Z$, and (ii) every non-head-no-tail node in $\rho$ is not in $Z$.\footnote{Note that if a node is both a head-no-tail and a non-head-no-tail node in $\rho$, then $\rho$ is not $Z$-open.} When there is no route in $G$ between a node in $X$ and a node in $Y$ that is $Z$-open, we say that $X$ is separated from $Y$ given $Z$ in $G$ and denote it as $X \ci_G Y | Z$.\footnote{See \citep[Remark 3.1]{Anderssonetal.2001} for the equivalence of this and the standard definition of separation.} We denote by $X \nci_G Y | Z$ that $X \ci_G Y | Z$ does not hold. The independence model induced by $G$, denoted as $I(G)$, is the set of separation statements $X \ci_G$ $Y | Z$. If two CGs $G$ and $H$ are triplex equivalent, then $I(G)=I(H)$.\footnote{To see it, note that there are Gaussian distributions $p$ and $q$ that are faithful to $G$ and $H$, respectively \citep[Theorem 6.1]{Levitzetal.2001}. Moreover, $p$ and $q$ are Markovian with respect to $H$ and $G$, respectively, by \citet[Theorem 5]{Anderssonetal.2001} and \citet[Theorem 4.1]{Levitzetal.2001}.}

\section{Algorithm for Learning AMP CGs}\label{sec:amp}

In this section, we present an algorithm for learning an AMP CG a given probability distribution is faithful to. The algorithm, which can be seen in Table \ref{tab:algorithm}, resembles the well-known PC algorithm \citep{Meek1995,Spirtesetal.1993}. It consists of two phases: The first phase (lines 1-8) aims at learning adjacencies, whereas the second phase (lines 9-10) aims at directing some of the adjacencies learnt. Specifically, the first phase declares that two nodes are adjacent if and only if they are not separated by any set of nodes. Note that the algorithm does not test every possible separator (see line 5). Note also that the separators tested are tested in increasing order of size (see lines 2, 5 and 8). The second phase consists of two steps. In the first step, the ends of some of the edges learnt in the first phase are blocked according to the rules R1-R4 in Table \ref{tab:rules}. A block is represented by a perpendicular line such as in $\bn$ or $\bb$, and it means that the edge cannot be directed in that direction. In the second step, the edges with exactly one unblocked end get directed in the direction of the unblocked end. The rules R1-R4 work as follows: If the conditions in the antecedent of a rule are satisfied, then the modifications in the consequent of the rule are applied. Note that the ends of some of the edges in the rules are labeled with a circle such as in $\bo$ or $\oo$. The circle represents an unspecified end, i.e. a block or nothing. The modifications in the consequents of the rules consist in adding some blocks. Note that only the blocks that appear in the consequents are added, i.e. the circled ends do not get modified. The conditions in the antecedents of R1, R2 and R4 consist of an induced subgraph of $H$ and the fact that some of its nodes are or are not in some separators found in line 6. The condition in the antecedent of R3 consists of just an induced subgraph of $H$. Specifically, the antecedent says that there is a cycle in $H$ whose edges have certain blocks. Note that the cycle must be chordless.

\begin{table}[t]
\caption{Algorithm for learning AMP CGs.}\label{tab:algorithm}
\centering
\scalebox{0.8}{
\begin{tabular}{rl}
\\
\hline
\\
& Input: A probability distribution $p$ that is faithful to an unknown CG $G$.\\
& Output: A CG $H$ that is triplex equivalent to $G$.\\
\\
1 & Let $H$ denote the complete undirected graph\\
2 & Set $l=0$\\
3 & Repeat while $l \leq |V|-2$\\
4 & \hspace{0.2cm} For each ordered pair of nodes $A$ and $B$ in $H$ st $A \in ad_H(B)$ and $|[ad_H(A) \cup ad_H(ad_H(A)) ] \setminus B| \geq l$\\
5 & \hspace{0.5cm} If there is some $S \subseteq [ad_H(A) \cup ad_H(ad_H(A)) ] \setminus B$ such that $|S|=l$ and $A \ci_p B | S$ then\\
6 & \hspace{0.8cm} Set $S_{AB}=S_{BA}=S$\\
7 & \hspace{0.8cm} Remove the edge $A - B$ from $H$\\
8 & \hspace{0.2cm} Set $l=l+1$\\
9 & Apply the rules R1-R4 to $H$ while possible\\
10 & Replace every edge $A \bn B$ (respectively $A \bb B$) in $H$ with $A \ra B$ (respectively $A - B$)\\
\\
\hline
\\
\end{tabular}}
\end{table}

\begin{table}[t]
\caption{Rules R1-R4 in the algorithm for learning AMP CGs.}\label{tab:rules}
\centering
\scalebox{0.75}{
\begin{tabular}{cccc}
\\
\hline
\\
R1:&
\begin{tabular}{c}
\begin{tikzpicture}[inner sep=1mm]
\node at (0,0) (A) {$A$};
\node at (1.5,0) (B) {$B$};
\node at (3,0) (C) {$C$};
\path[o-o] (A) edge (B);
\path[o-o] (B) edge (C);
\end{tikzpicture} 
\end{tabular}
& $\Rightarrow$ &
\begin{tabular}{c}
\begin{tikzpicture}[inner sep=1mm]
\node at (0,0) (A) {$A$};
\node at (1.5,0) (B) {$B$};
\node at (3,0) (C) {$C$};
\path[|-o] (A) edge (B);
\path[o-|] (B) edge (C);
\end{tikzpicture}
\end{tabular}\\
& $\land$ $B \notin S_{AC}$\\
\\
\hline
\\
R2:&
\begin{tabular}{c}
\begin{tikzpicture}[inner sep=1mm]
\node at (0,0) (A) {$A$};
\node at (1.5,0) (B) {$B$};
\node at (3,0) (C) {$C$};
\path[|-o] (A) edge (B);
\path[o-o] (B) edge (C);
\end{tikzpicture}
\end{tabular}
& $\Rightarrow$ &
\begin{tabular}{c}
\begin{tikzpicture}[inner sep=1mm]
\node at (0,0) (A) {$A$};
\node at (1.5,0) (B) {$B$};
\node at (3,0) (C) {$C$};
\path[|-o] (A) edge (B);
\path[|-o] (B) edge (C);
\end{tikzpicture}
\end{tabular}\\
& $\land$ $B \in S_{AC}$\\
\\
\hline
\\
R3:&
\begin{tabular}{c}
\begin{tikzpicture}[inner sep=1mm]
\node at (0,0) (A) {$A$};
\node at (1.5,0) (B) {$\ldots$};
\node at (3,0) (C) {$B$};
\path[|-o] (A) edge (B);
\path[|-o] (B) edge (C);
\path[o-o] (A) edge [bend left] (C);
\end{tikzpicture}
\end{tabular}
& $\Rightarrow$ &
\begin{tabular}{c}
\begin{tikzpicture}[inner sep=1mm]
\node at (0,0) (A) {$A$};
\node at (1.5,0) (B) {$\ldots$};
\node at (3,0) (C) {$B$};
\path[|-o] (A) edge (B);
\path[|-o] (B) edge (C);
\path[|-o] (A) edge [bend left] (C);
\end{tikzpicture}
\end{tabular}\\
\\
\hline
\\
R4:&
\begin{tabular}{c}
\begin{tikzpicture}[inner sep=1mm]
\node at (0,0) (A) {$A$};
\node at (2,0) (B) {$B$};
\node at (1,1) (C) {$C$};
\node at (1,-1) (D) {$D$};
\path[o-o] (A) edge (B);
\path[o-o] (A) edge (C);
\path[o-o] (A) edge (D);
\path[|-o] (C) edge (B);
\path[|-o] (D) edge (B);
\end{tikzpicture}
\end{tabular}
& $\Rightarrow$ &
\begin{tabular}{c}
\begin{tikzpicture}[inner sep=1mm]
\node at (0,0) (A) {$A$};
\node at (2,0) (B) {$B$};
\node at (1,1) (C) {$C$};
\node at (1,-1) (D) {$D$};
\path[|-o] (A) edge (B);
\path[o-o] (A) edge (C);
\path[o-o] (A) edge (D);
\path[|-o] (C) edge (B);
\path[|-o] (D) edge (B);
\end{tikzpicture}
\end{tabular}\\
& $\land$ $A \in S_{CD}$\\
\\
\hline
\\
\end{tabular}}
\end{table}

\subsection{Correctness of the Algorithm}\label{sec:correctness}

In this section, we prove that our algorithm is correct, i.e. it returns a CG the given probability distribution is faithful to. We start proving a result for any probability distribution that satisfies the intersection and composition properties. Recall that any probability distribution that is faithful to a CG satisfies these properties and, thus, the following result applies to it.

\begin{lemma}\label{lem:conditions}
Let $p$ denote a probability distribution that satisfies the intersection and composition properties. Then, $p$ is Markovian with respect to a CG $G$ if and only if $p$ satisfies the following conditions:
\begin{itemize}

\item[C1:] $A \ci_p co_G(A) \setminus A \setminus ne_G(A) | pa_G(A \cup ne_G(A)) \cup ne_G(A)$ for all $A \in V$, and

\item[C2:] $A \ci_p V \setminus A \setminus de_G(A) \setminus pa_G(A) | pa_G(A)$ for all $A \in V$.

\end{itemize}

\end{lemma}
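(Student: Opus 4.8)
The plan is to prove the two directions separately, and in each direction to connect the abstract separation semantics of AMP CGs (routes, head-no-tail nodes) with the two local conditions C1 and C2. Recall that AMP CGs admit a well-known pairwise/local Markov property à la \citet{Anderssonetal.2001}: a probability distribution $p$ satisfying intersection is Markovian with respect to $G$ if and only if it satisfies the local statement $A \ci_p V \setminus A \setminus \mathrm{pa}_G(A) \setminus \mathrm{ne}_G(A) \setminus \ldots$ for each $A$, suitably phrased in terms of the "boundary" of $A$; however, the precise known local property for AMP CGs is stated in terms of $A$ together with its undirected connectivity component. Rather than re-derive that machinery, I would reduce C1 $\wedge$ C2 to the standard AMP local Markov property (or to $I(G)$ directly) by checking that C1 and C2 are each separation statements that hold in $I(G)$, and conversely that they are strong enough — together with intersection and composition — to recover every separation in $I(G)$.

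For the ``only if'' direction, assume $p$ is Markovian with respect to $G$. It suffices to show that $A \ci_G co_G(A) \setminus A \setminus ne_G(A) \mid pa_G(A \cup ne_G(A)) \cup ne_G(A)$ and $A \ci_G V \setminus A \setminus de_G(A) \setminus pa_G(A) \mid pa_G(A)$ both hold in the graph, since then they transfer to $p$. Each of these is checked directly from the route-based definition of AMP separation: for C1, any $Z$-open route from $A$ into $co_G(A)\setminus A\setminus ne_G(A)$ with $Z = pa_G(A\cup ne_G(A))\cup ne_G(A)$ must leave through a neighbor or a parent of $A$ (or of $ne_G(A)$), and one argues that the first relevant node on the route is a non-head-no-tail node lying in $Z$, contradicting $Z$-openness — a short case analysis on the edge ends incident to $A$. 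Condition C2 is the analogue of the directed-graph ordered Markov property and is handled similarly, using that $de_G(A)$ and $pa_G(A)$ block every route leaving $A$ ``downward'' or ``sideways-up.''

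For the ``if'' direction, assume $p$ satisfies intersection, composition, C1 and C2. The strategy is to show that C1 $\wedge$ C2 implies the (intersection-closed) AMP local Markov property, from which Markovianity with respect to $G$ follows by the known equivalence of local and global AMP Markov properties under intersection. Concretely, I would fix $A$ and assemble the separator $pa_G(A) \cup ne_G(A) \cup pa_G(ne_G(A))$ — the AMP boundary of $A$ — and show $A$ is independent of the rest given this set. Condition C2 handles the ``non-descendant, non-parent'' part of $V$; condition C1 handles the part of $co_G(A)$ outside $ne_G(A)$; one then glues these using contraction/intersection, and uses composition to combine the two independences into a single one against the union of the two right-hand target sets, finally applying weak union and decomposition to shrink the conditioning set back down to the AMP boundary. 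This is the delicate bookkeeping step: one must verify that the two conditioning sets in C1 and C2 are ``compatible'' enough that the graphoid axioms let you merge the statements without conditioning on stray descendants, which is exactly where intersection (to move nodes into the conditioning set) and composition (to take unions of independent sets) are both genuinely needed.

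The main obstacle I anticipate is the gluing in the ``if'' direction: C1 conditions on $pa_G(A\cup ne_G(A)) \cup ne_G(A)$ while C2 conditions only on $pa_G(A)$, and $ne_G(A)$ may itself have descendants that are non-descendants of $A$, so the two statements do not compose in one line. I expect the cleanest route is an induction on the number of undirected connectivity components, or on $|V|$, peeling off a terminal component of $G$ so that the target set of C1 becomes empty or trivial and C2 alone suffices for the inductive step — mirroring the way the ordered local Markov property for DAGs is derived from the recursive factorization. The ``only if'' direction, by contrast, should be essentially routine route-chasing once the head-no-tail / non-head-no-tail classification is invoked carefully.
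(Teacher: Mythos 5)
Your overall strategy---reduce C1$\,\wedge\,$C2 to a known local Markov characterization of AMP CGs and shuttle between the two using the graphoid axioms plus composition---is the same as the paper's, which invokes the block-recursive characterization of \citet{Anderssonetal.2001} and \citet{Levitzetal.2001}: $p$ is Markovian with respect to $G$ iff (L1) $A \ci_p co_G(A) \setminus A \setminus ne_G(A) \mid [V \setminus co_G(A) \setminus de_G(co_G(A))] \cup ne_G(A)$ and (L2) $A \ci_p V \setminus co_G(A) \setminus de_G(co_G(A)) \setminus pa_G(A) \mid pa_G(A)$ for all $A$. C2 is literally L2 restated, so the whole lemma reduces to showing that, given L2 and composition, the large conditioning set in L1 can be exchanged for the small set $pa_G(A \cup ne_G(A)) \cup ne_G(A)$ in C1.

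That exchange is precisely where your proposal has a genuine gap. The intermediate target you set up for the ``if'' direction---``$A$ is independent of the rest given the boundary $pa_G(A) \cup ne_G(A) \cup pa_G(ne_G(A))$''---is false as stated (the ``rest'' contains the children and descendants of $A$, from which $A$ is not separated given its boundary), and you then defer the repair to an unspecified induction on components. No induction is needed. The missing idea is a single bridging independence obtained from L2 alone: apply L2 to every $B \in A \cup ne_G(A)$, weaken each statement by weak union so that all of them condition on the common set $pa_G(A \cup ne_G(A))$, combine them with symmetry and composition into one joint statement for the set $A \cup ne_G(A)$, and weak-union again to get
$A \ci_p V \setminus co_G(A) \setminus de_G(co_G(A)) \setminus pa_G(A \cup ne_G(A)) \mid pa_G(A \cup ne_G(A)) \cup ne_G(A)$.
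This statement conditions on the C1 set while its target is the difference between the two conditioning sets; one application of contraction with L1 followed by decomposition yields C1, and one application of composition with C1 followed by weak union yields L1. Your worry that $ne_G(A)$ ``may have descendants that are non-descendants of $A$'' evaporates here because the bridge never mentions descendants of the component at all. Finally, your ``only if'' direction via direct route-chasing on $I(G)$ would also work (C1 is indeed a separation in $I(G)$, since $I(G)$ satisfies composition), but it is more delicate than ``routine''---one must rule out routes that exit $A$ through a child and re-enter the component---and the paper sidesteps it entirely by deriving C1 from L1$\,\wedge\,$L2 with the same bridge statement.
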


\begin{proof}
It follows from \citet[Theorem 3]{Anderssonetal.2001} and \citet[Theorem 4.1]{Levitzetal.2001} that $p$ is Markovian with respect to $G$ if and only if $p$ satisfies the following conditions:
\begin{itemize}

\item[L1:] $A \ci_p co_G(A) \setminus A \setminus ne_G(A) | [ V \setminus co_G(A) \setminus de_G(co_G(A)) ] \cup ne_G(A)$ for all $A \in V$, and

\item[L2:] $A \ci_p V \setminus co_G(A) \setminus de_G(co_G(A)) \setminus pa_G(A) | pa_G(A)$ for all $A \in V$.

\end{itemize}

Clearly, C2 holds if and only if L2 holds because $de_G(A) = [ co_G(A) \cup de_G(co_G(A)) ] \setminus A$. We prove below that if L2 holds, then C1 holds if and only if L1 holds. We first prove the if part.

\begin{itemize}

\item[1.] $B \ci_p V \setminus co_G(B) \setminus de_G(co_G(B)) \setminus pa_G(B) | pa_G(B)$ for all $B \in A \cup ne_G(A)$ by L2.

\item[2.] $B \ci_p V \setminus co_G(B) \setminus de_G(co_G(B)) \setminus pa_G(A \cup ne_G(A)) | pa_G(A \cup ne_G(A))$ for all $B \in A \cup ne_G(A)$ by weak union on 1.

\item[3.] $A \cup ne_G(A) \ci_p V \setminus co_G(A) \setminus de_G(co_G(A)) \setminus pa_G(A \cup ne_G(A)) | pa_G(A \cup ne_G(A))$ by repeated application of symmetry and composition on 2.

\item[4.] $A \ci_p V \setminus co_G(A) \setminus de_G(co_G(A)) \setminus pa_G(A \cup ne_G(A)) | pa_G(A \cup ne_G(A)) \cup ne_G(A)$ by symmetry and weak union on 3.

\item[5.] $A \ci_p co_G(A) \setminus A \setminus ne_G(A) | [ V \setminus co_G(A) \setminus de_G(co_G(A)) ] \cup ne_G(A)$ by L1.

\item[6.] $A \ci_p [ co_G(A) \setminus A \setminus ne_G(A) ] \cup [ V \setminus co_G(A) \setminus de_G(co_G(A)) \setminus pa_G(A \cup ne_G(A)) ] | pa_G(A \cup ne_G(A)) \cup ne_G(A)$ by contraction on 4 and 5.

\item[7.] $A \ci_p co_G(A) \setminus A \setminus ne_G(A) | pa_G(A \cup ne_G(A)) \cup ne_G(A)$ by decomposition on 6.

\end{itemize}

We now prove the only if part.

\begin{itemize}

\item[8.] $A \ci_p co_G(A) \setminus A \setminus ne_G(A) | pa_G(A \cup ne_G(A)) \cup ne_G(A)$ by C1.

\item[9.] $A \ci_p [ V \setminus co_G(A) \setminus de_G(co_G(A)) \setminus pa_G(A \cup ne_G(A)) ] \cup [ co_G(A) \setminus A \setminus ne_G(A) ] | pa_G(A \cup ne_G(A)) \cup ne_G(A)$ by composition on 4 and 8.

\item[10.] $A \ci_p co_G(A) \setminus A \setminus ne_G(A) | [ V \setminus co_G(A) \setminus de_G(co_G(A)) ] \cup ne_G(A)$ by weak union on 9.

\end{itemize}

\end{proof}

\begin{lemma}\label{lem:adjacencies}
After line 8, $G$ and $H$ have the same adjacencies.
\end{lemma}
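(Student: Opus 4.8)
The statement splits into two inclusions. First I would show that every adjacency of $G$ survives in $H$ throughout the loop of lines 1--8. If $A \in ad_G(B)$, then the route consisting of the single edge between $A$ and $B$ has no head-no-tail node and no non-head-no-tail node (both notions require three consecutive nodes of a route), so it is $Z$-open for every $Z \subseteq V \setminus \{A,B\}$; hence $A \nci_G B | S$ for all such $S$, and by faithfulness $A \nci_p B | S$ for all such $S$. Therefore the test in line 5 is never passed for the ordered pair $(A,B)$ nor for $(B,A)$, and the edge $A - B$ is never removed. As a by-product, since $H$ never loses an edge of $G$, we get $ad_G(X) \subseteq ad_H(X)$ for every $X \subseteq V$ at every point of the execution, and in particular $ad_G(A) \cup ad_G(ad_G(A)) \subseteq ad_H(A) \cup ad_H(ad_H(A))$ at every stage.

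For the converse, suppose $A \notin ad_G(B)$; I would exhibit a set $S^{*}$ with $A \ci_p B | S^{*}$ that, moreover, is contained in $[ad_H(A) \cup ad_H(ad_H(A))] \setminus B$ at every stage. Since $p$ is faithful to the CG $G$, it satisfies intersection and composition and is Markovian with respect to $G$, so Lemma~\ref{lem:conditions} gives C1 and C2. I distinguish three cases. If $B \in co_G(A)$, then $B \in co_G(A) \setminus A \setminus ne_G(A)$ and C1 together with decomposition yield $A \ci_p B | S^{*}$ with $S^{*} = pa_G(A \cup ne_G(A)) \cup ne_G(A)$. If $B \notin co_G(A)$ and $B \notin de_G(A)$, then $B \in V \setminus A \setminus de_G(A) \setminus pa_G(A)$ (using $B \neq A$ and $B \notin pa_G(A)$, which hold by non-adjacency), and C2 with decomposition yield $A \ci_p B | S^{*}$ with $S^{*} = pa_G(A)$. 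Finally, if $B \notin co_G(A)$ but $B \in de_G(A)$, then $A \notin de_G(B) \cup pa_G(B)$: otherwise concatenating a descending route from $A$ to $B$ with one from $B$ to $A$ gives a descending cycle, which, being forbidden as a semidirected cycle, must be entirely undirected and force $B \in co_G(A)$, a contradiction; hence C2 applied to $B$, together with decomposition, yields $B \ci_p A | pa_G(B)$, i.e.\ $A \ci_p B | S^{*}$ with $S^{*} = pa_G(B)$.

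It then remains to check $S^{*} \subseteq [ad_H(A) \cup ad_H(ad_H(A))] \setminus B$ at every stage. In each case $S^{*}$ consists of nodes that are, in $H$, adjacent to $A$, or adjacent to $B$ (note $B \in ad_H(A)$ as long as the edge $A-B$ is present), or adjacent to some $C \in ne_G(A) \subseteq ad_H(A)$; by the monotonicity $ad_G \subseteq ad_H$ from the first part these all lie in $ad_H(A) \cup ad_H(ad_H(A))$. Moreover $B \notin S^{*}$: $B \notin pa_G(A) \cup ne_G(A) \cup pa_G(B)$ is immediate, and $B \notin pa_G(ne_G(A))$ because $B \ra C$ with $C \in ne_G(A)$ would be a directed edge between two nodes of $co_G(A)$, which together with an undirected path from $C$ to $B$ is a semidirected cycle, impossible. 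Consequently, if the edge $A-B$ has not already been deleted, then at the iteration $l = |S^{*}|$ the ordered pair $(A,B)$ passes the size condition in line 4 (witnessed by $S^{*}$) and $S^{*}$ is an admissible choice in line 5, so the edge is removed; either way $A-B$ is absent from $H$ after line 8. Combining the two inclusions proves the lemma.

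I expect the third paragraph to be the main obstacle: verifying that the chosen separator remains inside the \emph{restricted} candidate pool $[ad_H(A) \cup ad_H(ad_H(A))] \setminus B$ however much $H$ has shrunk. This rests on the monotonicity $ad_G(X) \subseteq ad_H(X)$ established in the first part and on the structural fact that a CG contains no directed edge inside an undirected connectivity component.
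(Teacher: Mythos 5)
Your proposal is correct and follows essentially the same route as the paper: adjacencies of $G$ survive by faithfulness, and for non-adjacent pairs the separators supplied by C1 and C2 of Lemma~\ref{lem:conditions} are shown to lie in the restricted candidate pool $[ad_H(A) \cup ad_H(ad_H(A))] \setminus B$. The only (harmless) deviation is in the case $B \in de_G(A)$, $B \notin co_G(A)$: the paper disposes of it by noting that the symmetric ordered pair $(B,A)$ is also enumerated in line 4, whereas you keep $(A,B)$ fixed and check directly that $pa_G(B)$ sits inside the pool via $B \in ad_H(A)$; both work, and you additionally spell out the semidirected-cycle argument for $B \notin pa_G(ne_G(A))$ that the paper leaves implicit.
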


\begin{proof}
Consider any pair of nodes $A$ and $B$ in $G$. If $A \in ad_G(B)$, then $A \nci_p B | S$ for all $S \subseteq V \setminus [ A \cup B ]$ by the faithfulness assumption. Consequently, $A \in ad_H(B)$ at all times. On the other hand, if $A \notin ad_G(B)$, then consider the following cases. 

\begin{description}

\item[Case 1] Assume that $co_G(A)=co_G(B)$. Then, $A \ci_p co_G(A) \setminus A \setminus ne_G(A) | pa_G(A \cup ne_G(A)) \cup ne_G(A)$ by C1 in Lemma \ref{lem:conditions} and, thus, $A \ci_p B | pa_G(A \cup ne_G(A)) \cup ne_G(A)$ by decomposition and $B \notin ne_G(A)$, which follows from $A \notin ad_G(B)$. Note that, as shown above, $pa_G(A \cup ne_G(A)) \cup ne_G(A) \subseteq [ ad_H(A) \cup ad_H(ad_H(A)) ] \setminus B$ at all times.

\item[Case 2] Assume that $co_G(A) \neq co_G(B)$. Then, $A \notin de_G(B)$ or $B \notin de_G(A)$ because $G$ has no semidirected cycle. Assume without loss of generality that $B \notin de_G(A)$. Then, $A \ci_p$ $V \setminus A \setminus de_G(A) \setminus pa_G(A) | pa_G(A)$ by C2 in Lemma \ref{lem:conditions} and, thus, $A \ci_p B | pa_G(A)$ by decomposition, $B \notin de_G(A)$, and $B \notin pa_G(A)$ which follows from $A \notin ad_G(B)$. Note that, as shown above, $pa_G(A)\subseteq ad_H(A) \setminus B$ at all times.

\end{description}

Therefore, in either case, there will exist some $S$ in line 5 such that $A \ci_p B | S$ and, thus, the edge $A - B$ will be removed from $H$ in line 7. Consequently, $A \notin ad_H(B)$ after line 8. 
\end{proof}

The next lemma proves that the rules R1-R4 are sound in certain sense.

\begin{lemma}\label{lem:soundness}
The rules R1-R4 are sound in the sense that they block only those edge ends that are not arrowheads in $G$.
\end{lemma}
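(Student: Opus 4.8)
Here is how I would approach the proof.

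\medskip
\noindent\textit{Proof plan.} The plan is to argue by induction on the number of blocks present in $H$ during the execution of line~9 (equivalently, on the number of successful applications of R1--R4). The base case is vacuous: before any rule fires there are no blocks. For the inductive step I would assume that \emph{every} block currently in $H$ is sound, i.e. marks an end of an edge of $G$ that is not an arrowhead, and then check, separately for each of R1--R4, that the block(s) it adds are sound as well. Two standing facts are used throughout: by Lemma~\ref{lem:adjacencies} the graphs $G$ and $H$ have the same adjacencies (so every separator $S_{XY}$ named in an antecedent is defined exactly when $X \notin ad_G(Y)$), and by faithfulness $X \ci_p Y \mid S_{XY}$ yields $X \ci_G Y \mid S_{XY}$.

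The technical heart is a small observation linking triplexes and separators: if $X \notin ad_G(Y)$, $B \in ad_G(X) \cap ad_G(Y)$, and $S \subseteq V \setminus \{X,Y\}$ satisfies $X \ci_G Y \mid S$, then $B \in S$ if and only if the triplex $(\{X,Y\},B)$ is \emph{not} in $G$. I would prove this directly from the separation definition applied to the length-two route $X, B, Y$: the only classified node of that route is $B$, which is a head-no-tail node there precisely when the induced subgraph on $\{X,B,Y\}$ is one of $X \ra B \la Y$, $X \ra B - Y$, $X - B \la Y$ --- i.e. precisely when the triplex is in $G$ --- and a non-head-no-tail node there in each of the remaining six configurations; hence the route is $S$-open exactly when ($B$ is head-no-tail and $B \in S$) or ($B$ is non-head-no-tail and $B \notin S$), and since $X \ci_G Y \mid S$ rules out any $S$-open route, exactly one side of this dichotomy survives, which is the stated equivalence.

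With this in hand, R1 and R2 go through directly. In R1 one has $A \notin ad_G(C)$, $B \in ad_G(A) \cap ad_G(C)$, $A \ci_G C \mid S_{AC}$ and $B \notin S_{AC}$, so $(\{A,C\},B)$ is a triplex in $G$; in every triplex configuration the $A$--$B$ edge has no arrowhead at $A$ and the $B$--$C$ edge has no arrowhead at $C$, which are exactly the two blocks R1 adds. In R2, $B \in S_{AC}$ gives that $(\{A,C\},B)$ is not a triplex, so the induced subgraph on $\{A,B,C\}$ is one of the six non-triplex configurations; the block already at the $A$-end of the $A$--$B$ edge (sound by the inductive hypothesis) excludes $A \la B$, leaving only $A \ra B \ra C$, $A - B \ra C$, $A - B - C$, in all of which the $B$--$C$ edge has no arrowhead at $B$ --- the block R2 adds. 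For R3 and R4 I would instead invoke that a CG has no semidirected cycle. In R3 the inductive hypothesis makes every block along the displayed chordless cycle sound, so, read from $A$ towards $B$, the edges $A - \cdots - B$ form a descending route in $G$ (each is undirected or points away from $A$); were the closing edge $B \ra A$, these would close a semidirected cycle, so that edge has no arrowhead at $A$, which is what R3 blocks. In R4, $A \in S_{CD}$ and $C \ci_G D \mid S_{CD}$ give, via the observation, that $(\{C,D\},A)$ is not a triplex, which forces $A \ra C$, or $A \ra D$, or both of $C - A$ and $D - A$; the inductively-sound blocks at $C$ and $D$ make the $C$--$B$ edge equal $C - B$ or $C \ra B$ and similarly for $D$--$B$; assuming $B \ra A$, each of the three cases produces a descending cycle through $B$, $A$ and one of $C,D$ that contains the directed edge $B \ra A$, hence a semidirected cycle and a contradiction, so the $A$-end of the $A$--$B$ edge is not an arrowhead --- the block R4 adds.

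The main obstacle I anticipate is the bookkeeping in the R4 case: one must combine the triplex information about $\{A,C,D\}$ with the two inductively-sound blocks and rule out $B \ra A$ in each resulting configuration while tracking exactly which edge ends are constrained. Once the triplex-versus-separator observation is isolated, R1--R3 are comparatively routine.
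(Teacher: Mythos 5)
Your proof is correct and follows essentially the same route as the paper's: an (implicit there, explicit here) induction on block additions, with R1, R2 and R4 handled via the correspondence between $B \in S_{AC}$ and the absence of the triplex $(\{A,C\},B)$, and R3 and R4 closed off by the absence of semidirected cycles. The only cosmetic difference is in R4, where the paper assumes $A \la B$ and derives the forbidden immorality $C \ra A \la D$ contradicting $A \in S_{CD}$, whereas you use $A \in S_{CD}$ first and exhibit a semidirected cycle in each non-triplex configuration; these are the same argument arranged contrapositively.
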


\begin{proof}
According to the antecedent of R1, $G$ has a triplex $(\{A,C\},B)$. Then, $G$ has an induced subgraph of the form $A \ra B \la C$, $A \ra B - C$ or $A - B \la C$. In either case, the consequent of R1 holds.

According to the antecedent of R2, (i) $G$ does not have a triplex $(\{A,C\},B)$, (ii) $A \ra B$ or $A - B$ is in $G$, (iii) $B \in ad_G(C)$, and (iv) $A \notin ad_G(C)$. Then, $B \ra C$ or $B - C$ is in $G$. In either case, the consequent of R2 holds.

According to the antecedent of R3, (i) $G$ has a descending route from $A$ to $B$, and (ii) $A \in ad_G(B)$. Then, $A \ra B$ or $A - B$ is in $G$, because $G$ has no semidirected cycle. In either case, the consequent of R3 holds. 

According to the antecedent of R4, neither $B \ra C$ nor $B \ra D$ are in $G$. Assume to the contrary that $A \la B$ is in $G$. Then, $G$ must have an induced subgraph that is consistent with

\begin{table}[H]
\centering
\scalebox{0.75}{
\begin{tikzpicture}[inner sep=1mm]
\node at (0,0) (A) {$A$};
\node at (2,0) (B) {$B$};
\node at (1,1) (C) {$C$};
\node at (1,-1) (D) {$D$};
\path[<-] (A) edge (B);
\path[<-] (A) edge (C);
\path[<-] (A) edge (D);
\path[|-o] (C) edge (B);
\path[|-o] (D) edge (B);
\end{tikzpicture}}
\end{table}

because, otherwise, it would have a semidirected cycle. However, this induced subgraph contradicts that $A \in S_{CD}$. 
\end{proof}

\begin{lemma}\label{lem:triplexes}
After line 10, $G$ and $H$ have the same triplexes. Moreover, $H$ has all the immoralities that are in $G$.
\end{lemma}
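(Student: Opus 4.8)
The plan is to examine $H$ in the state it has after line~9, since line~10 is purely cosmetic: it turns a block at exactly one end into a directed edge pointing away from that end, and a block at both ends into an undirected edge. By Lemma~\ref{lem:adjacencies}, $G$ and $H$ already have the same adjacencies, so a triplex $(\{A,C\},B)$ can occur in either graph only for a triple with $A,C \in ad_G(B)$ and $A \notin ad_G(C)$; I would fix such a triple and show the two graphs agree on whether it carries a triplex, reading Lemma~\ref{lem:soundness} throughout as: if the end of an edge at a node $X$ is blocked in $H$, then that end is not an arrowhead of $G$, hence every arrowhead of $G$ sits at an end that is never blocked.

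For the forward inclusion --- every triplex of $G$, and in particular every immorality of $G$, is in $H$ --- suppose $(\{A,C\},B)$ is a triplex in $G$, so $G$ contains one of $A \ra B \la C$, $A \ra B - C$, $A - B \la C$. In each of these $B$ is a head-no-tail node of the path $A,B,C$ and there is no non-head-no-tail node on it, so that path is $S$-open whenever $B \in S$; by faithfulness $A \nci_p C | S$ for every $S \ni B$, hence $B \notin S_{AC}$. Then the antecedent of R1 holds for $\langle A,B,C\rangle$ in $H$, R1 is applied, and the ends at $A$ and at $C$ of the edges $A-B$ and $B-C$ become blocked. By Lemma~\ref{lem:soundness} the $B$-ends of those edges that are arrowheads of $G$ are never blocked, so if $G$ has $A \ra B$ (resp.\ $C \ra B$) then so does $H$ after line~10, and otherwise $H$ has $A-B$ or $A \ra B$ (resp.\ $C-B$ or $C \ra B$). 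In every case $H$ contains one of $A \ra B \la C$, $A \ra B - C$, $A - B \la C$, so the triplex is in $H$; and if $G$ has the immorality $A \ra B \la C$ then both $B$-ends are unblocked and $H$ has exactly that immorality.

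For the backward inclusion --- every triplex of $H$ is in $G$ --- I would argue the contrapositive. If $(\{A,C\},B)$ is not a triplex in $G$, then, the adjacencies being those of $G$, $G$ contains $B \ra A$, or $B \ra C$, or $A-B-C$; in each case the path through $B$ has $B$ as a non-head-no-tail node, so it is $S$-open whenever $B \notin S$, and faithfulness forces $B \in S_{AC}$. The crux is then that R2, run with the separator $S_{AC} \ni B$ on the induced path $A,B,C$, propagates blocks inward: once an outer end of one of $A-B$, $B-C$ is blocked, R2 blocks the $B$-end of the other edge. Chasing this --- an arrowhead at $B$ on, say, the edge $A-B$ of $H$ needs its $A$-end blocked and its $B$-end free, but the block at $A$ forces (via R2) a block at the $B$-end of $B-C$, and if that makes $B-C$ undirected its $C$-end is blocked too, which forces (via R2 again) a block at the $B$-end of $A-B$, a contradiction --- shows $H$ has no arrowhead at $B$ for this triple; in the $B \ra A$ and $B \ra C$ cases Lemma~\ref{lem:soundness} gives the same conclusion even faster, since it already forbids the block at $A$ on $A-B$, resp.\ at $C$ on $B-C$. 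Either way $H$ carries no triplex $(\{A,C\},B)$.

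I expect the backward inclusion to be the hard part. Lemma~\ref{lem:soundness} asserts only that blocked ends are non-arrowheads of $G$, not that unblocked ends are arrowheads of $G$, so one cannot simply transport the triplexes of $G$ to $H$ and back; excluding spurious triplexes of $H$ is precisely where one must combine soundness, the inward propagation of blocks by R2, and faithfulness (which is what forces $B \in S_{AC}$ in exactly the three offending configurations $B \ra A$, $B \ra C$, $A-B-C$). A lesser but genuine nuisance is the bookkeeping of which physical end of each edge a given block occupies, together with fixing the line-10 reading of an edge whose two ends both survive unblocked.
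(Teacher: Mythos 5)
Your proof is correct and follows essentially the same route as the paper's: both directions rest on Lemma \ref{lem:adjacencies}, on faithfulness to decide whether $B \in S_{AC}$ or $B \notin S_{AC}$, on R1 plus Lemma \ref{lem:soundness} to transport the triplexes (and immoralities) of $G$ into $H$, and on R2 with $B \in S_{AC}$ to exclude spurious triplexes of $H$. The paper merely organizes the backward direction as an explicit enumeration of the five block configurations that could yield a false triplex at line 10, each ruled out by R2 --- the same inward block propagation you describe in contrapositive form.
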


\begin{proof}
We first prove that any triplex in $H$ is in $G$. Assume to the contrary that $H$ has a triplex $(\{A,C\},B)$ that is not in $G$. This is possible if and only if, when line 10 is executed, $H$ has an induced subgraph of one of the following forms:

\begin{table}[H]
\centering
\scalebox{0.75}{
\begin{tabular}{ccccc}
\begin{tikzpicture}[inner sep=1mm]
\node at (0,0) (A) {$A$};
\node at (1,0) (B) {$B$};
\node at (2,0) (C) {$C$};
\path[|-] (A) edge (B);
\path[-|] (B) edge (C);
\end{tikzpicture}
&
\begin{tikzpicture}[inner sep=1mm]
\node at (0,0) (A) {$A$};
\node at (1,0) (B) {$B$};
\node at (2,0) (C) {$C$};
\path[-] (A) edge (B);
\path[-|] (B) edge (C);
\end{tikzpicture}
&
\begin{tikzpicture}[inner sep=1mm]
\node at (0,0) (A) {$A$};
\node at (1,0) (B) {$B$};
\node at (2,0) (C) {$C$};
\path[|-] (A) edge (B);
\path[-] (B) edge (C);
\end{tikzpicture}
&
\begin{tikzpicture}[inner sep=1mm]
\node at (0,0) (A) {$A$};
\node at (1,0) (B) {$B$};
\node at (2,0) (C) {$C$};
\path[|-|] (A) edge (B);
\path[-|] (B) edge (C);
\end{tikzpicture}
&
\begin{tikzpicture}[inner sep=1mm]
\node at (0,0) (A) {$A$};
\node at (1,0) (B) {$B$};
\node at (2,0) (C) {$C$};
\path[|-] (A) edge (B);
\path[|-|] (B) edge (C);
\end{tikzpicture}.
\end{tabular}}
\end{table}

Note that Lemma \ref{lem:adjacencies} implies that $A$ is adjacent to $B$ in $G$, $B$ is adjacent to $C$ in $G$, and that $A$ is not adjacent to $C$ in $G$. This together with the assumption made above that $G$ has no triplex $(\{A,C\},B)$ implies that $B \in S_{AC}$ because, otherwise, the route $A$, $B$, $C$ is $S_{AC}$-open in $G$ contradicting $A \ci_{G} C | S_{AC}$. Now, note that the first, second and fifth induced subgraphs above are impossible because, otherwise, $A \ob B$ would be in $H$ by R2. Likewise, the third and fourth induced subgraphs above are impossible because, otherwise, $B \bo C$ would be in $H$ by R2.

We now prove that any triplex $(\{A,C\},B)$ in $G$ is in $H$. Let the triplex be of the form $A \ra B \la C$. Hence, $B \notin S_{AC}$. Then, when line 10 is executed, $A \bo B \ob C$ is in $H$ by R1, and neither $A \bb B$ nor $B \bb C$ is in $H$ by Lemmas \ref{lem:adjacencies} and \ref{lem:soundness}. Then, the triplex is in $H$. Note that the triplex is an immorality in both $G$ and $H$. Likewise, let the triplex be of the form $A \ra B - C$. Hence, $B \notin S_{AC}$. Then, when line 10 is executed, $A \bo B \ob C$ is in $H$ by R1, and $A \bb B$ is not in $H$ by Lemmas \ref{lem:adjacencies} and \ref{lem:soundness}. Then, the triplex is in $H$. Note that the triplex is a flag in $G$ but it may be an immorality in $H$.
\end{proof}

\begin{lemma}\label{lem:noundirectedcycle}
After line 9, $H$ does not have any induced subgraph of the form
\begin{tabular}{c}
\scalebox{0.75}{
\begin{tikzpicture}[inner sep=1mm]
\node at (0,0) (A) {$A$};
\node at (1,0) (B) {$B$};
\node at (2,0) (C) {$C$};
\path[|-o] (A) edge (B);
\path[-] (B) edge (C);
\path[-] (A) edge [bend left] (C);
\end{tikzpicture}.}
\end{tabular}
\end{lemma}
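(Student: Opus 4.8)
The plan is to argue entirely within the combinatorics of the rules as applied to $H$, without passing through $G$: I would show that if such an induced subgraph survived line 9, then rule R3 would still be applicable, which contradicts the fact that line 9 halts only once none of R1-R4 can be applied. Reading the displayed picture, the edge between $A$ and $B$ carries a block at its $A$ end and no block at its $B$ end, while each of the two plain edges --- the one between $B$ and $C$ and the one between $A$ and $C$ --- carries a block at both of its ends; in particular $A$, $B$ and $C$ are pairwise adjacent in $H$.

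So, assume for contradiction that $H$ has this induced subgraph after line 9, and consider the three-node cycle $B$, $C$, $A$, $B$ in $H$. Since it passes through only three, pairwise adjacent, nodes, it is chordless. Traverse it starting at $B$, so that its two non-closing edges are $B - C$ and $C - A$ and its closing edge is $A - B$; the edge $B - C$ is blocked at $B$ and the edge $C - A$ is blocked at $C$ (both are blocked at both ends), which is exactly the pattern of blocks that the antecedent of R3 demands of the non-closing edges of the cycle, while the closing edge $A - B$ is left unconstrained by R3's antecedent and so its current decoration is immaterial. Hence R3 is applicable, with $B$ in the role of the first node of the cycle, and applying it adds a block at the $B$ end of the edge $A - B$. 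But that end currently carries no block, so R3 genuinely alters $H$, contradicting the termination of line 9; this proves the lemma.

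The one point that needs care is the orientation of the cycle. R3 adds its new block at the first node of the cycle it is applied to, so one must start the traversal at $B$ rather than at $A$: the end that the lemma claims is unblocked is the $B$ end of $A - B$, and with the reversed traversal R3 would only try to add a block at the already-blocked $A$ end and would yield no contradiction. It is also worth noting explicitly that the block R3 would add is genuinely new --- this is precisely what the hypothesis that the $B$ end of $A - B$ is circled, rather than blocked, guarantees --- so that R3 indeed counts as applicable in the sense required by line 9.
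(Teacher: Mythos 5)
Your argument rests on a misreading of the figure, and once the figure is read correctly the strategy itself cannot work. In the paper's conventions a block is the perpendicular bar, a circle is an \emph{unspecified} end (``a block or nothing''), and an edge end drawn with neither decoration carries \emph{no} block. The forbidden configuration is therefore $A \bo B$ together with an edge between $B$ and $C$ and an edge between $A$ and $C$ neither of which carries any block; they are not ``blocked at both ends'' as you assume (compare the proof of Lemma \ref{lem:oppositeblock}, which treats the plain form $V_1 \bn V_2 - V_3 - V_1$ and the form with $V_2 \bb V_3$ as distinct cases). Consequently the antecedent of R3 is not satisfied by the triangle $B,C,A,B$: R3 requires every non-closing edge of the chordless cycle to be of the form $V_i \bo V_{i+1}$, i.e.\ blocked at the end nearer the start of the traversal, whereas here the edges $B - C$ and $C - A$ have no blocks at all. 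Indeed, none of R1--R4 fires on this triangle in isolation, so no argument of the form ``otherwise line 9 would not have terminated'' can prove the lemma. A second, independent flaw is your closing remark: the circle at the $B$ end of $A \bo B$ does not guarantee that this end is unblocked --- it is unspecified, so it may already carry a block, in which case even under your reading R3 would add nothing new and no contradiction would arise.

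The paper's proof is of a genuinely different and more global nature: it fixes one particular sequence of block additions performed during line 9, selects a \emph{first} occurrence of the forbidden induced subgraph in that sequence, and performs a case analysis on which of R1--R4 created the block at the $A$ end of $A \bo B$. Each case is then refuted by combining further applications of the rules with information about the separators $S_{XY}$ found in line 6 (and hence about $G$), not by a purely local combinatorial observation about the current state of $H$.
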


\begin{proof}
Assume to the contrary that the lemma does not hold. We interpret the execution of line 9 as a sequence of block addings and, for the rest of the proof, one particular sequence of these block addings is fixed. Fixing this sequence is a crucial point upon which some important later steps of the proof are based. Since there may be several induced subgraphs of $H$ of the form under study after line 9, let us consider any of the induced subgraphs
\begin{tabular}{c}
\scalebox{0.75}{
\begin{tikzpicture}[inner sep=1mm]
\node at (0,0) (A) {$A$};
\node at (1,0) (B) {$B$};
\node at (2,0) (C) {$C$};
\path[|-o] (A) edge (B);
\path[-] (B) edge (C);
\path[-] (A) edge [bend left] (C);
\end{tikzpicture}}
\end{tabular}
that appear firstly during execution of line 9 and fix it for the rest of the proof. Now, consider the following cases.

\begin{description}

\item[Case 1] Assume that $A \bo B$ is in $H$ due to R1. Then, after R1 was applied to $A \oo B$, $H$ had an induced subgraph of one of the following forms: 

\begin{table}[H]
\centering
\scalebox{0.75}{
\begin{tabular}{cc}
\begin{tikzpicture}[inner sep=1mm]
\node at (0,0) (A) {$A$};
\node at (1,0) (B) {$B$};
\node at (2,0) (C) {$C$};
\node at (1,-1) (D) {$D$};
\path[|-o] (A) edge (B);
\path[-] (B) edge (C);
\path[-] (A) edge [bend left] (C);
\path[|-o] (D) edge (B);
\end{tikzpicture}
&
\begin{tikzpicture}[inner sep=1mm]
\node at (0,0) (A) {$A$};
\node at (1,0) (B) {$B$};
\node at (2,0) (C) {$C$};
\node at (1,-1) (D) {$D$};
\path[|-o] (A) edge (B);
\path[-] (B) edge (C);
\path[-] (A) edge [bend left] (C);
\path[|-o] (D) edge (B);
\path[o-o] (D) edge (C);
\end{tikzpicture}.\\
case 1.1&case 1.2
\end{tabular}}
\end{table}

\begin{description}

\item[Case 1.1] If $B \notin S_{CD}$ then $B \nb C$ is in $H$ by R1, else $B \bn C$ is in $H$ by R2. Either case is a contradiction.

\item[Case 1.2] If $C \notin S_{AD}$ then $A \bn C$ is in $H$ by R1, else $B \nb C$ is in $H$ by R4. Either case is a contradiction.

\end{description}

\item[Case 2] Assume that $A \bo B$ is in $H$ due to R2. Then, after R2 was applied to $A \oo B$, $H$ had an induced subgraph of one of the following forms: 

\begin{table}[H]
\centering
\scalebox{0.75}{
\begin{tabular}{cccc}
\begin{tikzpicture}[inner sep=1mm]
\node at (0,0) (A) {$A$};
\node at (1,0) (B) {$B$};
\node at (2,0) (C) {$C$};
\node at (0,-1) (D) {$D$};
\path[|-o] (A) edge (B);
\path[-] (B) edge (C);
\path[-] (A) edge [bend left] (C);
\path[|-o] (D) edge (A);
\end{tikzpicture}
&
\begin{tikzpicture}[inner sep=1mm]
\node at (0,0) (A) {$A$};
\node at (1,0) (B) {$B$};
\node at (2,0) (C) {$C$};
\node at (0,-1) (D) {$D$};
\path[|-o] (A) edge (B);
\path[-] (B) edge (C);
\path[-] (A) edge [bend left] (C);
\path[|-o] (D) edge (A);
\path[-] (D) edge (C);
\end{tikzpicture}
&
\begin{tikzpicture}[inner sep=1mm]
\node at (0,0) (A) {$A$};
\node at (1,0) (B) {$B$};
\node at (2,0) (C) {$C$};
\node at (0,-1) (D) {$D$};
\path[|-o] (A) edge (B);
\path[-] (B) edge (C);
\path[-] (A) edge [bend left] (C);
\path[|-o] (D) edge (A);
\path[o-|] (D) edge (C);
\end{tikzpicture}
&
\begin{tikzpicture}[inner sep=1mm]
\node at (0,0) (A) {$A$};
\node at (1,0) (B) {$B$};
\node at (2,0) (C) {$C$};
\node at (0,-1) (D) {$D$};
\path[|-o] (A) edge (B);
\path[-] (B) edge (C);
\path[-] (A) edge [bend left] (C);
\path[|-o] (D) edge (A);
\path[|-] (D) edge (C);
\end{tikzpicture}.\\
case 2.1&case 2.2&case 2.3&case 2.4
\end{tabular}}
\end{table}

\begin{description}

\item[Case 2.1] If $A \notin S_{CD}$ then $A \nb C$ is in $H$ by R1, else $A \bn C$ is in $H$ by R2. Either case is a contradiction.

\item[Case 2.2] Note that
\begin{tabular}{c}
\scalebox{0.75}{
\begin{tikzpicture}[inner sep=1mm]
\node at (0,0) (A) {$D$};
\node at (1,0) (B) {$A$};
\node at (2,0) (C) {$C$};
\path[|-o] (A) edge (B);
\path[-] (B) edge (C);
\path[-] (A) edge [bend left] (C);
\end{tikzpicture}}
\end{tabular}
cannot be an induced subgraph of $H$ after line 9 because, otherwise, it would contradict the assumption that
\begin{tabular}{c}
\scalebox{0.75}{
\begin{tikzpicture}[inner sep=1mm]
\node at (0,0) (A) {$A$};
\node at (1,0) (B) {$B$};
\node at (2,0) (C) {$C$};
\path[|-o] (A) edge (B);
\path[-] (B) edge (C);
\path[-] (A) edge [bend left] (C);
\end{tikzpicture}}
\end{tabular}
is one of the firstly induced subgraph of that form that appeared during the execution of line 9. Then, $A \bo C$, $A \nb C$, $D \ob C$ or $D \bn C$ must be in $H$ after line 9. However, either of the first two cases is a contradiction. The third case can be reduced to Case 2.3 as follows. The fourth case can be reduced to Case 2.4 similarly. The third case implies that the block at $C$ in $D \ob C$ is added at some moment in the execution of line 9. This moment must happen later than immediately after adding the block at $A$ in $A \bo B$, because immediately after adding this block the situation is the one depicted by the above figure for Case 2.2. Then, when the block at $C$ in $D \ob C$ is added, the situation is the one depicted by the above figure for Case 2.3.

\item[Case 2.3] Assume that the situation of this case occurs at some moment in the execution of line 9. Then, $A \nb C$ is in $H$ after the execution of line 9 by R3, which is a contradiction.

\item[Case 2.4] Assume that the situation of this case occurs at some moment in the execution of line 9. If $C \notin S_{BD}$ then $B \bn C$ is in $H$ after the execution of line 9 by R1, else $B \nb C$ is in $H$ after the execution of line 9 by R2. Either case is a contradiction.

\end{description}

\item[Case 3] Assume that $A \bo B$ is in $H$ due to R3. Then, after R3 was applied to $A \oo B$, $H$ had a subgraph of one of the following forms, where possible additional edges between $C$ and internal nodes of the route $A \bo \ldots \bo D$ are not shown:

\begin{table}[H]
\centering
\scalebox{0.75}{
\begin{tabular}{cccc}
\begin{tikzpicture}[inner sep=1mm]
\node at (-1,0) (A) {$A$};
\node at (1,0) (B) {$B$};
\node at (2,0) (C) {$C$};
\node at (1,-1) (D) {$D$};
\node at (0,-1) (E) {$\ldots$};
\path[|-o] (A) edge (B);
\path[-] (B) edge (C);
\path[-] (A) edge [bend left] (C);
\path[|-o] (A) edge [bend right] (E);
\path[|-o] (E) edge (D);
\path[|-o] (D) edge (B);
\end{tikzpicture}
&
\begin{tikzpicture}[inner sep=1mm]
\node at (-1,0) (A) {$A$};
\node at (1,0) (B) {$B$};
\node at (2,0) (C) {$C$};
\node at (1,-1) (D) {$D$};
\node at (0,-1) (E) {$\ldots$};
\path[|-o] (A) edge (B);
\path[-] (B) edge (C);
\path[-] (A) edge [bend left] (C);
\path[|-o] (A) edge [bend right] (E);
\path[|-o] (E) edge (D);
\path[|-o] (D) edge (B);
\path[-] (D) edge (C);
\end{tikzpicture}
&
\begin{tikzpicture}[inner sep=1mm]
\node at (-1,0) (A) {$A$};
\node at (1,0) (B) {$B$};
\node at (2,0) (C) {$C$};
\node at (1,-1) (D) {$D$};
\node at (0,-1) (E) {$\ldots$};
\path[|-o] (A) edge (B);
\path[-] (B) edge (C);
\path[-] (A) edge [bend left] (C);
\path[|-o] (A) edge [bend right] (E);
\path[|-o] (E) edge (D);
\path[|-o] (D) edge (B);
\path[o-|] (D) edge (C);
\end{tikzpicture}
&
\begin{tikzpicture}[inner sep=1mm]
\node at (-1,0) (A) {$A$};
\node at (1,0) (B) {$B$};
\node at (2,0) (C) {$C$};
\node at (1,-1) (D) {$D$};
\node at (0,-1) (E) {$\ldots$};
\path[|-o] (A) edge (B);
\path[-] (B) edge (C);
\path[-] (A) edge [bend left] (C);
\path[|-o] (A) edge [bend right] (E);
\path[|-o] (E) edge (D);
\path[|-o] (D) edge (B);
\path[|-] (D) edge (C);
\end{tikzpicture}.\\
case 3.1&case 3.2&case 3.3&case 3.4
\end{tabular}}
\end{table}

Note that $C$ cannot belong to the route $A \bo \ldots \bo D$ because, otherwise, R3 could not have been applied since the cycle $A \bo \ldots \bo D \bo B \no A$ would not have been chordless.

\begin{description}

\item[Case 3.1] If $B \notin S_{CD}$ then $B \nb C$ is in $H$ by R1, else $B \bn C$ is in $H$ by R2. Either case is a contradiction.

\item[Case 3.2] Note that
\begin{tabular}{c}
\scalebox{0.75}{
\begin{tikzpicture}[inner sep=1mm]
\node at (0,0) (A) {$D$};
\node at (1,0) (B) {$B$};
\node at (2,0) (C) {$C$};
\path[|-o] (A) edge (B);
\path[-] (B) edge (C);
\path[-] (A) edge [bend left] (C);
\end{tikzpicture}}
\end{tabular}
cannot be an induced subgraph of $H$ after line 9 because, otherwise, it would contradict the assumption that
\begin{tabular}{c}
\scalebox{0.75}{
\begin{tikzpicture}[inner sep=1mm]
\node at (0,0) (A) {$A$};
\node at (1,0) (B) {$B$};
\node at (2,0) (C) {$C$};
\path[|-o] (A) edge (B);
\path[-] (B) edge (C);
\path[-] (A) edge [bend left] (C);
\end{tikzpicture}}
\end{tabular}
is one of the firstly induced subgraph of that form that appeared during the execution of line 9. Then, $B \bo C$, $B \nb C$, $D \ob C$ or $D \bn C$ must be in $H$ after line 9. However, either of the first two cases is a contradiction. The third case can be reduced to Case 3.3 as follows. The fourth case can be reduced to Case 3.4 similarly. The third case implies that the block at $C$ in $D \ob C$ is added at some moment in the execution of line 9. This moment must happen later than immediately after adding the block at $A$ in $A \bo B$, because immediately after adding this block the situation is the one depicted by the above figure for Case 3.2. Then, when the block at $C$ in $D \ob C$ is added, the situation is the one depicted by the above figure for Case 3.3.

\item[Case 3.3] Assume that the situation of this case occurs at some moment in the execution of line 9. Then, $B \nb C$ is in $H$ after the execution of line 9 by R3, which is a contradiction.

\item[Case 3.4] Assume that the situation of this case occurs at some moment in the execution of line 9. Note that $C$ cannot be adjacent to any node of the route $A \bo \ldots \bo D$ besides $A$ and $D$. To see it, assume to the contrary that $C$ is adjacent to some nodes $E_1, \ldots, E_n \neq A, D$ of the route $A \bo \ldots \bo D$. Assume without loss of generality that $E_i$ is closer to $A$ in the route than $E_{i+1}$ for all $1 \leq i < n$. Now, note that $E_n \bo C$ must be in $H$ after the execution of line 9 by R3. This implies that $E_{n-1} \bo C$ must be in $H$ after the execution of line 9 by R3. By repeated application of this argument, we can conclude that $E_1 \bo C$ must be in $H$ after the execution of line 9 and, thus, $A \bn C$ must be in $H$ after the execution of line 9 by R3, which is a contradiction.

\end{description}

\item[Case 4] Assume that $A \bo B$ is in $H$ due to R4. Then, after R4 was applied to $A \oo B$, $H$ had an induced subgraph of one of the following forms:

\begin{table}[H]
\centering
\scalebox{0.75}{
\begin{tabular}{cccc}
\begin{tikzpicture}[inner sep=1mm][inner sep=1mm]
\node at (0,0) (A) {$A$};
\node at (1,0) (B) {$B$};
\node at (2,0) (C) {$C$};
\node at (1,-1) (D) {$D$};
\node at (1,-2) (E) {$E$};
\path[|-o] (A) edge (B);
\path[-] (B) edge (C);
\path[-] (A) edge [bend left] (C);
\path[o-o] (A) edge [bend right] (D);
\path[|-o] (D) edge (B);
\path[o-o] (A) edge [bend right] (E);
\path[|-o] (E) edge [bend left] (B);
\end{tikzpicture}
&
\begin{tikzpicture}[inner sep=1mm][inner sep=1mm]
\node at (0,0) (A) {$A$};
\node at (1,0) (B) {$B$};
\node at (2,0) (C) {$C$};
\node at (1,-1) (D) {$D$};
\node at (1,-2) (E) {$E$};
\path[|-o] (A) edge (B);
\path[-] (B) edge (C);
\path[-] (A) edge [bend left] (C);
\path[o-o] (A) edge [bend right] (D);
\path[|-o] (D) edge (B);
\path[o-o] (A) edge [bend right] (E);
\path[|-o] (E) edge [bend left] (B);
\path[o-o] (E) edge [bend right] (C);
\end{tikzpicture}
&
\begin{tikzpicture}[inner sep=1mm][inner sep=1mm]
\node at (0,0) (A) {$A$};
\node at (1,0) (B) {$B$};
\node at (2,0) (C) {$C$};
\node at (1,-1) (D) {$D$};
\node at (1,-2) (E) {$E$};
\path[|-o] (A) edge (B);
\path[-] (B) edge (C);
\path[-] (A) edge [bend left] (C);
\path[o-o] (A) edge [bend right] (D);
\path[|-o] (D) edge (B);
\path[o-o] (A) edge [bend right] (E);
\path[|-o] (E) edge [bend left] (B);
\path[o-o] (D) edge [bend right] (C);
\end{tikzpicture}
&
\begin{tikzpicture}[inner sep=1mm]
\node at (0,0) (A) {$A$};
\node at (1,0) (B) {$B$};
\node at (2,0) (C) {$C$};
\node at (1,-1) (D) {$D$};
\node at (1,-2) (E) {$E$};
\path[|-o] (A) edge (B);
\path[-] (B) edge (C);
\path[-] (A) edge [bend left] (C);
\path[o-o] (A) edge [bend right] (D);
\path[|-o] (D) edge (B);
\path[o-o] (A) edge [bend right] (E);
\path[|-o] (E) edge [bend left] (B);
\path[o-o] (D) edge [bend right] (C);
\path[o-o] (E) edge [bend right] (C);
\end{tikzpicture}.\\
case 4.1&case 4.2&case 4.3&case 4.4
\end{tabular}}
\end{table}

\begin{description}

\item[Cases 4.1-4.3] If $B \notin S_{CD}$ or $B \notin S_{CE}$ then $B \nb C$ is in $H$ by R1, else $B \bn C$ is in $H$ by R2. Either case is a contradiction.

\item[Case 4.4] Assume that $C \in S_{DE}$. Then, $B \nb C$ is in $H$ by R4, which is a contradiction. On the other hand, assume that $C \notin S_{DE}$. Then, it follows from applying R1 that $H$ has an induced subgraph of the form

\begin{table}[H]
\centering
\scalebox{0.75}{
\begin{tikzpicture}[inner sep=1mm]
\node at (0,0) (A) {$A$};
\node at (1,0) (B) {$B$};
\node at (2,0) (C) {$C$};
\node at (1,-1) (D) {$D$};
\node at (1,-2) (E) {$E$};
\path[|-o] (A) edge (B);
\path[-] (B) edge (C);
\path[-] (A) edge [bend left] (C);
\path[o-o] (A) edge [bend right] (D);
\path[|-o] (D) edge (B);
\path[o-o] (A) edge [bend right] (E);
\path[|-o] (E) edge [bend left] (B);
\path[|-o] (D) edge [bend right] (C);
\path[|-o] (E) edge [bend right] (C);
\end{tikzpicture}.}
\end{table}

Note that $A \in S_{DE}$ because, otherwise, R4 would not have been applied. Then, $A \bn C$ is in $H$ by R4, which is a contradiction.

\end{description}

\end{description}

\end{proof}

\begin{lemma}\label{lem:oppositeblock}
After line 9, every chordless cycle $\rho: V_1, \ldots, V_n=V_1$ in $H$ that has an edge $V_i \bn V_{i+1}$ also has an edge $V_j \nb V_{j+1}$.
\end{lemma}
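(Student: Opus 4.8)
\emph{Plan.} The plan is to argue by contradiction. Suppose $\rho: V_1,\ldots,V_n=V_1$ is a chordless cycle in $H$ after line 9 with an edge $V_i\bn V_{i+1}$ but no edge $V_j\nb V_{j+1}$; for an edge $V_kV_{k+1}$ of $\rho$ call it \emph{backward} if it carries a block at $V_k$ and \emph{forward} if it carries a block at $V_{k+1}$. So $\rho$ has a backward edge and no forward edge; since a doubly blocked edge would be forward, every edge of $\rho$ is either backward (blocked only at its earlier endpoint) or unblocked, and at least one is backward. I will use throughout that, since line 9 applies R1--R4 while possible, at the end of line 9 no rule's antecedent can hold unless its consequent already does: the antecedent conditions of R1--R4 (edge presence, non‑adjacency, membership in a recorded $S_{AB}$, presence of certain blocks, chordlessness of a cycle) are all preserved once they hold, because line 9 only adds blocks.

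First I would treat cycles of length at least $3$ by showing that if some edge of $\rho$ is backward then every edge of $\rho$ is backward. Suppose a backward edge $V_tV_{t+1}$ is immediately followed in $\rho$ by an unblocked edge $V_{t+1}V_{t+2}$; if $\rho$ has length $\geq 4$ then $V_t\notin ad_H(V_{t+2})$ by chordlessness, so $S_{V_tV_{t+2}}$ was recorded in line 6. If $V_{t+1}\in S_{V_tV_{t+2}}$, then R2 is applicable to $V_t\bo V_{t+1}\oo V_{t+2}$, so after line 9 there is a block at $V_{t+1}$ on $V_{t+1}V_{t+2}$, contradicting that this edge is unblocked. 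If $V_{t+1}\notin S_{V_tV_{t+2}}$, then, since $V_t\ci_G V_{t+2}\mid S_{V_tV_{t+2}}$ by faithfulness while $V_tV_{t+1}$ and $V_{t+1}V_{t+2}$ are in $G$ by Lemma~\ref{lem:adjacencies}, the route $V_t,V_{t+1},V_{t+2}$ is not $S_{V_tV_{t+2}}$-open, which forces $V_{t+1}$ to be a head-no-tail node on it, i.e.\ $G$ has the triplex $(\{V_t,V_{t+2}\},V_{t+1})$; by Lemma~\ref{lem:triplexes} this triplex is then in $H$, so $H$ has a block at $V_{t+2}$ on $V_{t+1}V_{t+2}$ --- a forward edge, contradiction. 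Hence no backward edge of $\rho$ is followed by an unblocked one, so all edges of $\rho$ are backward. Applying R3 to the path $V_1\bo V_2\bo\cdots\bo V_{n-1}$ together with the (present) edge $V_1V_{n-1}$, whose presence closes the chordless cycle $\rho$, then forces a block at $V_1$ on the edge $V_{n-1}V_n$ of $\rho$ --- a forward edge, contradiction. The same R3 step also disposes of a triangle all of whose edges are backward.

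It remains to rule out a triangle $V_1,V_2,V_3$ (automatically chordless, as its vertices are pairwise adjacent) that has a backward but no forward edge and is not all backward. After cyclically relabelling I may assume $e_1=V_1V_2$ is backward (block at $V_1$, nothing at $V_2$), so at least one of $e_2=V_2V_3$, $e_3=V_3V_1$ is unblocked. If $e_3$ carried a block at $V_1$ it would be forward; if it carried a block at $V_3$, then R3 along $V_3\bo V_1\bo V_2$ with closing edge $e_2$ would force a block at $V_3$ on $e_2$, again forward; so $e_3$ is unblocked and $e_2$ is backward or unblocked with nothing at $V_3$. The crux --- and the step I expect to be the main obstacle --- is to trace which of R1--R4 put the block at $V_1$ on $e_1$. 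If it was R3, the witnessing blocked path is analysed (it cannot pass through $V_3$ without creating a forward block on $e_2$, a block at $V_1$ on $e_3$, or a chord of the R3-cycle). If it was R1, R2, or R4, the rule's antecedent supplies one or two nodes outside the triangle together with a separator relation that, via the triplex argument of Lemma~\ref{lem:adjacencies} and Lemma~\ref{lem:triplexes} --- and using Lemma~\ref{lem:soundness}, which forces each edge of the triangle to be directed or undirected (not with a head at the blocked end) in $G$ and hence, since $G_{\{V_1,V_2,V_3\}}$ inherits from $G$ the absence of semidirected cycles, confines the local structure of $G$ around the triangle to a short list --- yields a further triplex forcing a block at $V_3$ on $e_2$ or a block at $V_1$ on $e_3$, i.e.\ a forward edge. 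Discharging this finite case analysis, and checking at each leaf that the no‑semidirected‑cycle constraint on the triangle in $G$ leaves no way to avoid a forward block, completes the proof; the management of these triangle sub‑cases is the technical heart of the argument, the rest being the short skeleton above.
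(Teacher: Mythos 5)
Your treatment of cycles of length at least four is essentially the paper's: a backward edge $V_t\bo V_{t+1}$ followed by an edge $V_{t+1}\oo V_{t+2}$ forces, via R1 or R2 (legitimate because chordlessness makes $V_t$ and $V_{t+2}$ non-adjacent), a block on the next edge, which must sit at $V_{t+1}$ since forward blocks are excluded; hence all edges are backward, and R3 around the chordless cycle then produces a forward block. That part is sound, if slightly roundabout in the R1 subcase (R1's consequent gives the block at $V_{t+2}$ directly; you do not need to route through the triplexes of $G$). The easy triangle subcases you dispatch with R3 also match the paper.

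The gap is the configuration you yourself flag as the crux: a triangle with $V_1\bo V_2$, $V_2 - V_3$ and $V_1 - V_3$ both unblocked. The paper isolates exactly this as Lemma~\ref{lem:noundirectedcycle} and proves it by a substantial case analysis on which of R1--R4 added the block at $V_1$; your proposal only sketches that analysis and explicitly defers it. The deferral is not a routine verification. First, in several branches (the paper's Cases 2.2 and 3.2) tracing the rule application does not produce a forward block on the triangle at all: it produces \emph{another} instance of the same forbidden configuration $D\bo A - C - D$, and the only way the paper closes these branches is by fixing the sequence of block additions in line 9 and taking a \emph{first-occurring} bad configuration, so that the regenerated instance can be ruled out by minimality and the analysis rerouted to other subcases. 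Your sketch contains no such well-foundedness device, so the case analysis as you describe it need not terminate. Second, your suggestion to squeeze a contradiction out of the structure of $G$ on $\{V_1,V_2,V_3\}$ via Lemma~\ref{lem:soundness} and acyclicity cannot work on its own: the configuration is perfectly consistent with $G_{\{V_1,V_2,V_3\}}$ being a fully undirected triangle, so the contradiction must come from the history of rule applications in $H$ (including, in the R3 and R4 branches, nontrivial arguments about chords of the witnessing cycle and about whether auxiliary nodes lie in the recorded separators), not from $G$ alone. Until that case analysis is actually discharged, the proof is incomplete.
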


\begin{proof}
Assume for a contradiction that $\rho$ is of the length three such that $V_{1} \bn V_{2}$ occur and neither $V_{2} \nb V_{3}$ nor $V_{1} \bn V_{3}$ occur. Note that $V_{2} \bb V_{3}$ cannot occur either because, otherwise, $V_{1} \bn V_{3}$ or $V_{1} \bb V_{3}$ must occur by R3. Since the former contradicts the assumption, then the latter must occur. However, this implies that $V_{1} \bb V_{2}$ must occur by R3, which contradicts the assumption. Similarly, $V_{1} \bb V_{3}$ cannot occur either. Then, $\rho$ is of one of the following forms:

\begin{table}[H]
\centering
\scalebox{0.75}{
\begin{tabular}{ccc}
\begin{tikzpicture}[inner sep=1mm]
\node at (0,0) (A) {$V_1$};
\node at (1,0) (B) {$V_{2}$};
\node at (2,0) (C) {$V_{3}$};
\path[|-] (A) edge (B);
\path[-] (B) edge (C);
\path[-] (A) edge [bend left] (C);
\end{tikzpicture}
&
\begin{tikzpicture}[inner sep=1mm]
\node at (0,0) (A) {$V_1$};
\node at (1,0) (B) {$V_{2}$};
\node at (2,0) (C) {$V_{3}$};
\path[|-] (A) edge (B);
\path[o-] (B) edge (C);
\path[-|] (A) edge [bend left] (C);
\end{tikzpicture}
&
\begin{tikzpicture}[inner sep=1mm]
\node at (0,0) (A) {$V_1$};
\node at (1,0) (B) {$V_{2}$};
\node at (2,0) (C) {$V_{3}$};
\path[|-] (A) edge (B);
\path[|-] (B) edge (C);
\path[-] (A) edge [bend left] (C);
\end{tikzpicture}.
\end{tabular}}
\end{table}

The first form is impossible by Lemma \ref{lem:noundirectedcycle}. The second form is impossible because, otherwise, $V_{2} \ob V_{3}$ would occur by R3. The third form is impossible because, otherwise, $V_{1} \bn V_{3}$ would be occur by R3. Thus, the lemma holds for cycles of length three.

Assume for a contradiction that $\rho$ is of length greater than three and has an edge $V_{i} \bn V_{i+1}$ but no edge $V_{j} \nb V_{j+1}$. Note that if $V_l \bo V_{l+1} \oo V_{l+2}$ is a subroute of $\rho$, then either $V_{l+1} \bo V_{l+2}$ or $V_{l+1} \nb V_{l+2}$ is in $\rho$ by R1 and R2. Since $\rho$ has no edge $V_j \nb V_{j+1}$, $V_{l+1} \bo V_{l+2}$ is in $\rho$. By repeated application of this reasoning together with the fact that $\rho$ has an edge $V_i \bn V_{i+1}$, we can conclude that every edge in $\rho$ is $V_k \bo V_{k+1}$. Then, by repeated application of R3, observe that every edge in $\rho$ is $V_{k} \bb V_{k+1}$, which contradicts the assumption.
\end{proof}

\begin{theorem}\label{the:acyclic}
After line 10, $H$ is triplex equivalent to $G$ and it has no semidirected cycle.
\end{theorem}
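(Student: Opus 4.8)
The plan is to prove the two assertions separately. Since line~10 turns every surviving block pattern into a directed or an undirected edge, after line~10 every edge of $H$ is directed or undirected; hence once we have shown $H$ has no semidirected cycle, $H$ is a CG, and then it is triplex equivalent to $G$ simply because $G$ and $H$ have the same adjacencies (Lemma~\ref{lem:adjacencies}) and the same triplexes (Lemma~\ref{lem:triplexes}). So the substance of the theorem is the absence of semidirected cycles, and this is where the work goes.

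Suppose for contradiction that $H$ has a semidirected cycle after line~10, and let $\rho\colon V_1,\ldots,V_n=V_1$ be one of minimal length. First I would check that $\rho$ is chordless: a chord $V_a\de V_b$ splits $\rho$ into two strictly shorter cycles whose only non-$\rho$ edge is the chord, and since an undirected chord is a descending step in both directions while a directed chord is a descending step in one of them, one of the two shorter cycles is still descending and still contains a directed edge (of $\rho$, or the chord itself), contradicting minimality. Next I would read $\rho$ off $H$ as it stands after line~9 (the underlying graph is unchanged): a directed edge $V_i\ra V_{i+1}$ of $\rho$ arose from an edge carrying a block at the $V_i$-end and no block at the $V_{i+1}$-end, while an undirected edge arose from an edge carrying blocks at both ends or at neither. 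Since $\rho$ has a directed edge it contains an edge $V_i\bn V_{i+1}$, so Lemma~\ref{lem:oppositeblock} furnishes an edge $V_j\nb V_{j+1}$ in $\rho$; this edge had a block at $V_{j+1}$ after line~9, so after line~10 it is either undirected (if $V_j$ was also blocked) or equal to $V_{j+1}\ra V_j$, and the latter is impossible because $\rho$ is descending. Hence $\rho$ is not purely directed (that sub-case is already excluded by Lemma~\ref{lem:oppositeblock} alone), and moreover $\rho$ contains an undirected edge that had blocks at both ends after line~9.

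So $\rho$ is mixed, and somewhere around it a directed edge is immediately followed by an undirected one, $V_{k-1}\ra V_k - V_{k+1}$. When $n\ge 4$, chordlessness makes $V_{k-1}$ and $V_{k+1}$ non-adjacent, so this is a flag in $H$ and the triplex $(\{V_{k-1},V_{k+1}\},V_k)$ lies in $H$, hence in $G$ by Lemma~\ref{lem:triplexes}; using Lemma~\ref{lem:soundness} on the blocks of the two edges of the corner (assuming, for now, that the undirected one carried blocks at both ends) forces $G$'s corner to be the flag $V_{k-1}\ra V_k - V_{k+1}$, so in particular $G$ contains the directed edge $V_{k-1}\ra V_k$. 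The aim of the remainder is to push this reasoning around all of $\rho$ — showing every edge of $\rho$ is descending in $G$ — so that $\rho$ is a semidirected cycle in $G$, which is impossible since $G$ is a CG; the triangle case $n=3$ is already close to Lemma~\ref{lem:noundirectedcycle}. I expect the last step to be the real obstacle: the undirected edges of $\rho$ whose both ends were merely circled after line~9 are invisible to Lemma~\ref{lem:soundness}, so controlling them calls for exactly the kind of bookkeeping used in the proofs of Lemmas~\ref{lem:noundirectedcycle} and~\ref{lem:oppositeblock} — fixing one sequence of block additions and branching on which of R1--R4 introduced the offending block — with Lemma~\ref{lem:noundirectedcycle} itself dispatching several of the resulting sub-cases.
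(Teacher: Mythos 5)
Your overall architecture matches the paper's: triplex equivalence is immediate from Lemmas \ref{lem:adjacencies} and \ref{lem:triplexes}, a general semidirected cycle is reduced to a chordless one by splitting along a chord, and Lemma \ref{lem:oppositeblock} is the tool for chordless cycles. The problem is that you do not finish. Everything from ``So $\rho$ is mixed\ldots'' onwards is a programme rather than a proof: you concede that the undirected edges of $\rho$ whose ends carried no blocks after line 9 are invisible to Lemma \ref{lem:soundness}, and you only ``expect'' that the bookkeeping of Lemmas \ref{lem:noundirectedcycle} and \ref{lem:oppositeblock} can be redone to control them. That is exactly the hard part, and it is left open; as written, the proposal establishes nothing about mixed cycles of length at least four whose undirected edges were never blocked.

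The gap is created by your reading of Lemma \ref{lem:oppositeblock}. You allow the edge $V_j \nb V_{j+1}$ it supplies to also carry a block at the $V_j$ end, so that after line 10 it might become undirected instead of $V_j \la V_{j+1}$. In the paper's notation, however, $V_j \nb V_{j+1}$ denotes the exact pattern ``block at $V_{j+1}$, none at $V_j$'': the proof of Lemma \ref{lem:oppositeblock} treats $V_2 \bb V_3$ as a configuration distinct from $V_2 \nb V_3$ and excludes it separately, and its final sentence derives a contradiction from every edge being $\bb$ while one edge is assumed to be $\bn$, which only makes sense if $\bn$, $\nb$ and $\bb$ are mutually exclusive. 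With that reading, line 10 turns the edge supplied by Lemma \ref{lem:oppositeblock} into $V_j \la V_{j+1}$, which cannot occur in the descending cycle $V_1, \ldots, V_n$; the contradiction is immediate, and the entire flag/triplex lifting in your second half is unnecessary. This one-line finish is what the paper's proof does.
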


\begin{proof}
Lemma \ref{lem:adjacencies} implies that $G$ and $H$ have the same adjacencies. Lemma \ref{lem:triplexes} implies that $G$ and $H$ have the same triplexes. Lemma \ref{lem:oppositeblock} implies that $H$ has no semidirected chordless cycle, which implies that $H$ has no semidirected cycle. To see the latter implication, assume to the contrary that $H$ has no semidirected chordless cycle but that it has a semidirected cycle $\rho: V_1, \ldots, V_n=V_1$ with a chord between $V_i$ and $V_j$ with $i < j$. Then, divide $\rho$ into the cycles $\rho_L: V_1, \ldots, V_i, V_j, \ldots, V_n=V_1$ and $\rho_R: V_i, \ldots, V_j, V_i$. Note that $\rho_L$ or $\rho_R$ is a semidirected cycle. Then, $H$ has a semidirected cycle that is shorter than $\rho$. By repeated application of this reasoning, we can conclude that $H$ has a semidirected chordless cycle, which is a contradiction.
\end{proof}

\subsection{Discussion}\label{sec:discussion}

In this section, we have presented an algorithm for learning an AMP CG a given probability distribution $p$ is faithful to. In practice, of course, we do not usually have access to $p$ but to a finite sample from it. Our algorithm can easily be modified to deal with this situation: Replace $A \ci_p B | S$ in line 5 with a hypothesis test, preferably with one that is consistent so that the resulting algorithm is asymptotically correct.

It is worth mentioning that, whereas R1, R2 and R4 only involve three or four nodes, R3 may involve many more. Hence, it would be desirable to replace R3 with a simpler rule such as 

\begin{table}[H]
\centering
\scalebox{0.75}{
\begin{tabular}{ccc}
\begin{tabular}{c}
\begin{tikzpicture}[inner sep=1mm]
\node at (0,0) (A) {$A$};
\node at (1,0) (B) {$B$};
\node at (2,0) (C) {$C$};
\path[|-o] (A) edge (B);
\path[|-o] (B) edge (C);
\path[o-o] (A) edge [bend left] (C);
\end{tikzpicture}
\end{tabular}
& $\Rightarrow$ &
\begin{tabular}{c}
\begin{tikzpicture}[inner sep=1mm]
\node at (0,0) (A) {$A$};
\node at (1,0) (B) {$B$};
\node at (2,0) (C) {$C$};
\path[|-o] (A) edge (B);
\path[|-o] (B) edge (C);
\path[|-o] (A) edge [bend left] (C);
\end{tikzpicture}.
\end{tabular}
\end{tabular}}
\end{table}

Unfortunately, we have not succeeded so far in proving the correctness of our algorithm with such a simpler rule. Note that the output of our algorithm will be the same whether we keep R3 or we replace it with a simpler sound rule. The only benefit of the simpler rule may be a decrease in running time.

We have shown in Lemma \ref{lem:triplexes} that, after line 10, $H$ has all the immoralities in $G$ or, in other words, every flag in $H$ is in $G$. The following lemma strengthens this fact.

\begin{lemma}\label{lem:essentialline}
After line 10, every flag in $H$ is in every CG $F$ that is triplex equivalent to $G$.
\end{lemma}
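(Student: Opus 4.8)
The plan is to argue that a flag in $H$ forces a corresponding structure in every triplex-equivalent $F$, by combining the triplex information (which $F$ shares with $G$, hence with $H$ by Theorem~\ref{the:acyclic}) with the acyclicity of $F$. Suppose $A \ra B - C$ is a flag in $H$ after line~10. By Theorem~\ref{the:acyclic}, $H$ and $G$ — and therefore $F$ — have the same adjacencies and the same triplexes, so $A$ is adjacent to $B$, $B$ is adjacent to $C$, $A$ is not adjacent to $C$ in $F$, and the triplex $(\{A,C\},B)$ is in $F$. Thus $F$ contains one of $A \ra B \la C$, $A \ra B - C$, or $A - B \la C$. The first option, $A \ra B \la C$, is an immorality; but Lemma~\ref{lem:triplexes} says $H$ has all the immoralities in $G$, hence all the immoralities in $F$ (same triplexes, and an immorality is detectable from adjacencies plus a test for whether the middle node separates the endpoints — equivalently, a triplex realized by two arrowheads), so if $F$ had the immorality $A \ra B \la C$ then $H$ would have it too, contradicting that $A \ra B - C$ is in $H$ (a simple graph has at most one edge between $B$ and $C$). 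Therefore $F$ has either $A \ra B - C$ or $A - B \la C$.

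To rule out $A - B \la C$ I would look more carefully at why the flag $A \ra B - C$ appeared in $H$. The edge $B - C$ in $H$ means the block-free state $B \oo C$ survived line~9; the directed edge $A \ra B$ means the $B$-end of $A-B$ was blocked by some rule. The key point is that whichever rule blocked the $B$-end of $A - B$, it did so because $B$ is \emph{not} an arrowhead-target in $G$ along that configuration — more precisely, by Lemma~\ref{lem:soundness} the rules only block ends that are not arrowheads in $G$, and by tracing the rule that fired (R1 with $B \notin S_{AC}$, R2, R3, or R4) one sees the blocking witnesses a structure — a triplex $(\{A,C\},B)$ not realized as an immorality, or a descending route into $B$, etc. — that is shared by $F$ because $F$ has the same triplexes and the same adjacencies. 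I would then argue that in $F$, the same witness forces the $B$-end of the $B$–$C$ edge (or of a path around it) to carry the same orientation constraint, so that $C \ra B$ is impossible without either creating an immorality $C \ra B \la$(something) that $H$ lacks, or — via the shared descending/triplex structure together with the no-semidirected-cycle property of $F$ — creating a semidirected cycle in $F$. Hence $A - B \la C$ cannot occur in $F$, and $F$ has the flag $A \ra B - C$.

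The main obstacle I anticipate is the case analysis showing $C \ra B$ is impossible in $F$: unlike the immorality case, ruling out $A - B \la C$ is not a pure triplex-equivalence statement and genuinely needs the acyclicity of $F$ and a re-run of the reasoning behind whichever of R1–R4 produced the flag in $H$. In particular, the R3 case is delicate, because the descending route that justified blocking the $B$-end of $A - B$ lives in $H$ and one must show its image in $F$ (same adjacencies, and each edge of that route is itself either a flag/immorality-forced orientation or inductively an $A' \ra B'$ obtained the same way) is again descending into $B$, so that $C \ra B$ together with the route $A \to \cdots \to B$ and the edge $A - B$ — or whatever chordless configuration R3 used — would close a semidirected cycle in $F$. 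I would likely set this up as an induction on the order in which blocks were added during line~9 (mirroring the fixed-sequence device used in the proof of Lemma~\ref{lem:noundirectedcycle}), proving simultaneously for every edge whose $B$-end got blocked that this $B$-end is an arrowhead-forced end in \emph{every} triplex-equivalent $F$, not just in $G$.
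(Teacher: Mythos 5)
There is a genuine gap, and it starts with a factual error about line 10. You assert that the undirected edge $B - C$ in the output flag ``means the block-free state $B \oo C$ survived line 9.'' That is exactly backwards in this configuration: since $A \bn B$ holds and $A \notin ad_H(C)$, either R1 (if $B \notin S_{AC}$) or R2 (if $B \in S_{AC}$) must fire on the edge between $B$ and $C$, so it cannot be block-free after line 9; for it to come out undirected in line 10 it must be doubly blocked, i.e.\ the flag arises from $A \bn B \bb C$. This observation is the heart of the paper's proof, and it is what you are missing. The second ingredient is that the blocks are sound not just with respect to $G$ but with respect to \emph{every} triplex-equivalent $F$: the antecedents of R1--R4 depend only on adjacencies, on membership of separators (equivalently, on which triplexes are present), and on previously added blocks, all of which are shared by $G$ and $F$, so the proof of Lemma~\ref{lem:soundness} goes through verbatim with $F$ in place of $G$. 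Then $B \bb C$ says neither end of $B$--$C$ is an arrowhead in $F$, which kills both $A \ra B \la C$ and $A - B \la C$ in one stroke, and the shared triplex $(\{A,C\},B)$ forces $A \ra B - C$ in $F$. Your closing sentence does state this strengthened soundness as the target of a proposed induction, but you never prove it, and the induction is unnecessary once one notices that Lemma~\ref{lem:soundness}'s argument never uses anything about $G$ beyond adjacencies and triplexes.

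Separately, your argument ruling out the immorality $A \ra B \la C$ in $F$ is invalid. You claim that ``an immorality is detectable from adjacencies plus a test for whether the middle node separates the endpoints,'' hence that $F$ and $H$ have the same immoralities. They need not: $A \ra B \la C$ and $A \ra B - C$ (with $A,C$ non-adjacent) are triplex equivalent, so whether a given triplex is realized as an immorality or as a flag is not determined by the independence model, and Lemma~\ref{lem:triplexes} only says $H$ contains the immoralities of $G$, not those of an arbitrary triplex-equivalent $F$. Indeed, the very content of Lemma~\ref{lem:essentialline} is that $H$ is deflagged, which is a nontrivial property precisely because triplex-equivalent CGs can realize the same triplex with different edge types; an argument that derives the realization from the triplexes alone is circular.
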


\begin{proof}
Note that every flag $A \ra B - C$ in $H$ after line 10 is due to an induced subgraph of $H$ of the form $A \bn B \bb C$ after line 9 because $A \bn B - C$ is excluded by R1 and R2. Note also that all the blocks in $H$ follow from the adjacencies and triplexes in $G$ by repeated application of R1-R4. Since $G$ and $F$ have the same adjacencies and triplexes, all the blocks in $H$ hold in both $G$ and $F$ by Lemma \ref{lem:soundness}. 
\end{proof}

A CG whose every flag is in every other triplex equivalent CG is called a deflagged graph by \citet[Proposition 8]{RoveratoandStudeny2006}. Therefore, the lemma above implies that our algorithm outputs a deflagged graph. Note that there may be several deflagged graphs that are triplex equivalent to $G$. Unfortunately, not every directed edge in the output of our algorithm is in every deflagged graph that is triplex equivalent to $G$, as the following example illustrates (note that both $G$ and $H$ are deflagged graphs). 

\begin{table}[H]
\centering
\scalebox{0.75}{
\begin{tabular}{cc}
\begin{tikzpicture}[inner sep=1mm]
\node at (0,0) (A) {$A$};
\node at (1,0) (B) {$B$};
\node at (0,-1) (C) {$C$};
\node at (1,-1) (D) {$D$};
\node at (2,-1) (E) {$E$};
\path[->] (A) edge (C);
\path[->] (B) edge (D);
\path[-] (C) edge (D);
\path[-] (D) edge (E);
\path[->] (B) edge (E);
\end{tikzpicture}
&
\begin{tikzpicture}[inner sep=1mm]
\node at (0,0) (A) {$A$};
\node at (1,0) (B) {$B$};
\node at (0,-1) (C) {$C$};
\node at (1,-1) (D) {$D$};
\node at (2,-1) (E) {$E$};
\path[->] (A) edge (C);
\path[->] (B) edge (D);
\path[-] (C) edge (D);
\path[->] (D) edge (E);
\path[->] (B) edge (E);
\end{tikzpicture}\\
$G$&$H$
\end{tabular}}
\end{table}

Therefore, our algorithm outputs a deflagged graph but not what \cite{RoveratoandStudeny2006} call the largest deflagged graph. The latter is a distinguished member of a class of triplex equivalent CGs. Fortunately, the largest deflagged graph can easily be obtained from any deflagged graph in the class \cite[Corollary 17]{RoveratoandStudeny2006}. 

Another distinguished member of a class of triplex equivalent CGs is the so-called essential graph $G^*$ \citep{AnderssonandPerlman2006}: An edge $A \ra B$ is in $G^*$ if and only if $A \la B$ is in no member of the class. Unfortunately, our algorithm does not output an essential graph either, as the following example illustrates.

\begin{table}[H]
\centering
\scalebox{0.75}{
\begin{tabular}{cc}
\begin{tikzpicture}[inner sep=1mm]
\node at (0,0) (A) {$A$};
\node at (1,0) (B) {$B$};
\node at (0,-1) (C) {$C$};
\node at (1,-1) (D) {$D$};
\node at (2,-1) (E) {$E$};
\path[-] (A) edge (C);
\path[-] (B) edge (D);
\path[-] (C) edge (D);
\path[-] (D) edge (E);
\path[-] (A) edge (B);
\end{tikzpicture}
&
\begin{tikzpicture}[inner sep=1mm]
\node at (0,0) (A) {$A$};
\node at (1,0) (B) {$B$};
\node at (0,-1) (C) {$C$};
\node at (1,-1) (D) {$D$};
\node at (2,-1) (E) {$E$};
\path[-] (A) edge (C);
\path[-] (B) edge (D);
\path[-] (C) edge (D);
\path[->] (D) edge (E);
\path[-] (A) edge (B);
\end{tikzpicture}\\
$G=H$&$G^*$
\end{tabular}}
\end{table}

It is worth mentioning that a characterization of essential graphs that is more efficient than the one introduced above is available \citep[Theorem 5.1]{AnderssonandPerlman2006}. Also, an efficient algorithm for constructing the essential graph from any member of the class has been proposed \citep[Section 7]{AnderssonandPerlman2004}. As far as we know, the correctness of the algorithm has not been proven though.

The correctness of our algorithm lies upon the assumption that $p$ is faithful to some CG. This is a strong requirement that we would like to weaken, e.g. by replacing it with the milder assumption that $p$ satisfies the composition property. Correct algorithms for learning directed and acyclic graphs (a.k.a. Bayesian networks) under the composition property assumption exist \citep{ChickeringandMeek2002,Nielsenetal.2003}. We have recently developed a correct algorithm for learning LWF CGs under the composition property \citep{Pennaetal.2012}. The way in which these algorithms proceed (a.k.a. score+search based approach) is rather different from that of the algorithm presented in this section (a.k.a. constraint based approach). In a nutshell, they can be seen as consisting of two phases: A first phase that starts from the empty graph $H$ and adds single edges to it until $p$ is Markovian with respect to $H$, and a second phase that removes single edges from $H$ until $p$ is Markovian with respect to $H$ and $p$ is not Markovian with respect to any CG $F$ such that $I(H) \subseteq I(F)$. The success of the first phase is guaranteed by the composition property assumption, whereas the success of the second phase is guaranteed by the so-called Meek's conjecture \citep{Meek1997}. Specifically, given two directed and acyclic graphs $F$ and $H$ such that $I(H) \subseteq I(F)$, Meek's conjecture states that we can transform $F$ into $H$ by a sequence of operations such that, after each operation, $F$ is a directed and acyclic graph and $I(H) \subseteq I(F)$. The operations consist in adding a single edge to $F$, or replacing $F$ with a triplex equivalent directed and acyclic graph. Meek's conjecture was proven to be true in \citep[Theorem 4]{Chickering2002}. The extension of Meek's conjecture to LWF CGs was proven to be true in \citep[Theorem 1]{Penna2011}. Unfortunately, the extension of Meek's conjecture to AMP CGs does not hold, as the following example illustrates.

\begin{example}
Consider the AMP CGs $F$ and $H$ below.

\begin{table}[H]
\centering
\scalebox{0.75}{
\begin{tabular}{cc}
\begin{tikzpicture}[inner sep=1mm]
\node at (0,0) (A) {$A$};
\node at (1,0) (B) {$B$};
\node at (-1,-1) (C) {$C$};
\node at (0,-1) (D) {$D$};
\node at (1,-1) (E) {$E$};
\path[->] (A) edge (D);
\path[->] (B) edge (E);
\path[-] (C) edge (D);
\path[-] (D) edge (E);
\end{tikzpicture}
&
\begin{tikzpicture}[inner sep=1mm]
\node at (0,0) (A) {$A$};
\node at (1,0) (B) {$B$};
\node at (-1,-1) (C) {$C$};
\node at (0,-1) (D) {$D$};
\node at (1,-1) (E) {$E$};
\path[->] (A) edge (D);
\path[-] (B) edge (E);
\path[-] (C) edge (D);
\path[-] (D) edge (E);
\path[-] (B) edge (D);
\end{tikzpicture}\\
$F$&$H$
\end{tabular}}
\end{table}

We can describe $I(F)$ and $I(H)$ by listing all the separators between any pair of distinct nodes. We indicate whether the separators correspond to $F$ or $H$ with a superscript. Specifically, 

\begin{itemize}
\item ${\mathcal S}^F_{AD}={\mathcal S}^F_{BE}={\mathcal S}^F_{CD}={\mathcal S}^F_{DE}=\emptyset$,

\item ${\mathcal S}^F_{AB} = \{ \emptyset,\{C\},\{D\},\{E\},\{C,D\},\{C,E\} \}$,

\item ${\mathcal S}^F_{AC} = \{ \emptyset,\{B\},\{E\},\{B,E\} \}$,

\item ${\mathcal S}^F_{AE} = \{ \emptyset,\{B\},\{C\},\{B,C\} \}$,

\item ${\mathcal S}^F_{BC} = \{ \emptyset,\{A\},\{D\},\{A,D\},\{A,D,E\} \}$,

\item ${\mathcal S}^F_{BD} = \{ \emptyset,\{A\},\{C\},\{A,C\} \}$, and

\item ${\mathcal S}^F_{CE} = \{ \{A,D\},\{A,B,D\} \}$.
\end{itemize}

Likewise, 

\begin{itemize}
\item ${\mathcal S}^H_{AD}={\mathcal S}^H_{BD}={\mathcal S}^H_{BE}={\mathcal S}^H_{CD}={\mathcal S}^H_{DE}=\emptyset$,

\item ${\mathcal S}^H_{AB} = \{ \emptyset,\{C\},\{E\},\{C,E\} \}$,

\item ${\mathcal S}^H_{AC} = \{ \emptyset,\{B\},\{E\},\{B,E\} \}$,

\item ${\mathcal S}^H_{AE} = \{ \emptyset,\{B\},\{C\},\{B,C\} \}$,

\item ${\mathcal S}^H_{BC} = \{ \{A,D\},\{A,D,E\} \}$, and

\item ${\mathcal S}^H_{CE} = \{ \{A,D\},\{A,B,D\} \}$.
\end{itemize}

Then, $I(H) \subseteq I(F)$ because ${\mathcal S}^H_{XY} \subseteq {\mathcal S}^F_{XY}$ for all $X, Y \in \{A, B, C, D, E\}$ with $X \neq Y$. However, there is no CG that is triplex equivalent to $F$ or $H$ and, obviously, one cannot transform $F$ into $H$ by adding a single edge.
\end{example}

While the example above compromises the development of score+search learning algorithms that are correct and efficient under the composition property assumption, it is not clear to us whether it also does it for constraint based algorithms. This is something we plan to study.

\section{Maximal Covariance-Concentration Graphs}\label{sec:mccg}

As mentioned in the introduction, AMP CGs are not closed under marginalization, which leads us to the problem of how to represent the result of marginalizing out some nodes in an AMP CG. In this section, we present the partial solution to this problem that we have obtained so far. Specifically, we introduce and study a new family of graphical models that we call maximal covariance-concentration graphs. These new models solve the problem at hand partially, because each of them represents the result of marginalizing out some nodes in some AMP CG. Unfortunately, our new models do not solve the problem completely, because they do not represent the result of marginalizing out any nodes in any AMP CG.

First, we define covariance-concentration graphs (CCGs) as graphs whose every edge is undirected or bidirected. A node $B$ in a path $\rho$ in a CCG is called a triplex node in $\rho$ if $A \aa B \aa C$, $A \aa B - C$ or $A - B \aa C$ is a subpath of $\rho$. Let $X$, $Y$ and $Z$ denote three pairwise disjoint subsets of $V$. A path $\rho$ in a CCG $G$ is said to be $Z$-open when 
\begin{itemize}
\item every triplex node in $\rho$ is in $Z$, and
\item every non-triplex node in $\rho$ is not in $Z$ or has some spouse in $G$.
\end{itemize}

When there is no path in $G$ between a node in $X$ and a node in $Y$ that is $Z$-open, we say that $X$ is separated from $Y$ given $Z$ and denote it as $X \ci_G Y | Z$. We denote by $X \nci_G Y | Z$ that $X \ci_G Y | Z$ does not hold. The independence model induced by $G$ is the set of separations $X \ci_G$ $Y | Z$.

Typically, every missing edge in a graphical model corresponds to a separation. However, this is not true for CCGs. For instance, the CCG $G$ below does not contain any edge between $B$ and $D$ but $B \nci_G D | Z$ for all $Z \subseteq V \setminus \{B, D\}$. Likewise, $G$ does not contain any edge between $A$ and $E$ but $A \nci_G E | Z$ for all $Z \subseteq V \setminus \{A, E\}$.

\begin{table}[H]
\centering
\scalebox{0.75}{
\begin{tabular}{c}
\begin{tikzpicture}[inner sep=1mm]
\node at (0,0) (A) {$A$};
\node at (1,0) (B) {$B$};
\node at (2,0) (C) {$C$};
\node at (3,0) (D) {$D$};
\node at (4,0) (E) {$E$};
\node at (2,-1) (F) {$F$};
\path[-] (A) edge (B);
\path[-] (B) edge (C);
\path[-] (C) edge (D);
\path[-] (D) edge (E);
\path[<->] (A) edge [bend left] (D);
\path[<->] (B) edge [bend left] (E);
\path[<->] (C) edge (F);
\end{tikzpicture}
\end{tabular}}
\end{table}

In order to avoid the problem above, we focus in this paper on what we call maximal CCGs (MCCGs), which are those CCGs that have
\begin{itemize}
\item no induced subgraph $A - C - B$ such that $C$ has some spouse, and
\item no cycle $A - \ldots - B \aa A$.
\end{itemize}

Hereinafter, we refer to the two constrains on CCGs above as C1 and C2, respectively. As Theorem \ref{the:local1} shows, every missing edge in a MCCG corresponds to a separation. So, no edge can be added to a MCCG without changing the independence model induced by it, hence the name. Note that a MCCG $G$ represents the same separations over $V$ as the AMP CG $H$ obtained by replacing every bidirected edge $A \aa B$ in $G$ with $A \la H_{AB} \ra B$. Therefore, $G$ represents the marginal independence model of $H$ over $V$. See Section \ref{sec:discussion2} for a discussion on the relationship of MCCGs with other families of graphical models. Note also that both covariance and concentration graphs are MCCGs, and the definitions of separation for covariance and concentration graphs are special cases of the one introduced above for MCCGs (recall Section \ref{sec:introduction}). Therefore, MCCGs unify and generalize covariance and concentration graphs.

Note that if a MCCG has a subgraph $A - C - B$ such that $C$ has some spouse, then the constraint C1 implies that there must be an edge between $A$ and $B$ in the MCCG, whereas the constraint C2 implies that the edge must be undirected. Therefore, if a MCCG has a path $A=V_1 - V_2 - \ldots - V_n=B$ such that $V_i$ has some spouse for all $1 < i < n$, then the edge $V_1 - V_n$ must be in the MCCG. Therefore, the independence model induced by a MCCG is the same whether we use the definition of $Z$-open path above or the following simpler one. A path $\rho$ in a MCCG is said to be $Z$-open when 
\begin{itemize}
\item every triplex node in $\rho$ is in $Z$, and
\item every non-triplex node in $\rho$ is not in $Z$.
\end{itemize}

The theorem below shows that the independence models induced by MCCGs are not arbitrary in the probabilistic framework.

\begin{theorem}\label{the:faithfulness}
For any MCCG $G$, there exists a regular Gaussian probability distribution $p$ that is faithful to $G$.
\end{theorem}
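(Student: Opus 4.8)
The plan is to build $p$ explicitly as a Gaussian distribution and verify faithfulness by reducing to a known faithfulness result for AMP CGs. Recall from the discussion preceding the theorem that a MCCG $G$ over $V$ induces the same separations as the AMP CG $H$ obtained by replacing every bidirected edge $A \aa B$ of $G$ by a fresh common parent, $A \la H_{AB} \ra B$, with $H_{AB}$ a new node. Let $W$ be $V$ together with all the auxiliary nodes $H_{AB}$. By \citet[Theorem 6.1]{Levitzetal.2001} there is a regular Gaussian distribution $q$ over $W$ that is faithful to $H$. Define $p$ to be the marginal of $q$ over $V$; since marginals of regular Gaussians are regular Gaussians, $p$ is a regular Gaussian distribution over $V$. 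It remains to show $p$ is faithful to $G$, i.e.\ that $X \ci_p Y \mid Z \iff X \ci_G Y \mid Z$ for all pairwise disjoint $X, Y, Z \subseteq V$.

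First I would establish the separation-equivalence $X \ci_G Y \mid Z \iff X \ci_H Y \mid Z$ for $X, Y, Z \subseteq V$ carefully, since the remark in the text asserts it but a proof is needed here. The direction showing that a $Z$-open path in $G$ lifts to a $Z$-open route in $H$ is routine: replace each bidirected edge $A \aa B$ by $A \la H_{AB} \ra B$, note that $H_{AB} \notin Z$ and that it is a head-no-tail node exactly when $B$ was a triplex node in $\rho$, so the head-no-tail/non-head-no-tail conditions translate verbatim. The converse — that a $Z$-open route in $H$ between $V$-nodes projects down to a $Z$-open path in $G$ — is the delicate part: one must argue that whenever such a route uses an auxiliary node $H_{AB}$ it does so as $A \la H_{AB} \ra B$ (it cannot enter and leave along the same edge in a useful way since $H_{AB}$ has degree two and is not in $Z$), and then that the resulting walk in $G$ can be pruned to a genuine $Z$-open path. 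Here constraints C1 and C2 on MCCGs are exactly what is needed: they guarantee that the chords forced by undirected subpaths through spouse-bearing nodes are present and undirected, so that shortcutting a self-intersecting walk never destroys $Z$-openness. I expect this pruning argument to be the main obstacle.

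Granting the separation-equivalence, faithfulness of $p$ to $G$ follows quickly. For disjoint $X, Y, Z \subseteq V$: if $X \ci_G Y \mid Z$ then $X \ci_H Y \mid Z$, hence $X \ci_q Y \mid Z$ by faithfulness of $q$ to $H$, and therefore $X \ci_p Y \mid Z$ because $p$ is the $V$-marginal of $q$ and $X, Y, Z$ involve only nodes in $V$. Conversely, if $X \nci_G Y \mid Z$ then $X \nci_H Y \mid Z$, so $X \nci_q Y \mid Z$, and since again only $V$-nodes are involved this gives $X \nci_p Y \mid Z$. Thus $X \ci_p Y \mid Z \iff X \ci_G Y \mid Z$, which is faithfulness.

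An alternative, more self-contained route would avoid the auxiliary graph $H$ and instead construct the covariance matrix of $p$ directly — e.g.\ as $I + t N$ for small $t$, where $N$ encodes the undirected edges, together with a perturbation term for the bidirected edges — and then show by a polynomial-nonvanishing (generic-parameter) argument that every non-separation of $G$ corresponds to a non-vanishing conditional covariance, while the separations hold identically. This mirrors the standard proof of faithfulness results and would again lean on C1 and C2 to rule out accidental extra independences; I would fall back on it only if the projection step in the $H$-based argument proves intractable.
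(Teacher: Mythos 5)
Your main argument is exactly the paper's proof: replace each bidirected edge $A \aa B$ by $A \la H_{AB} \ra B$ to obtain an AMP CG $H$, invoke Theorem 6.1 of Levitz et al.\ to get a regular Gaussian $q$ faithful to $H$, and let $p$ be its marginal over $V$ (the separation-equivalence between $G$ and $H$ that you carefully flag is simply asserted in the paper in the remark preceding the theorem, so your extra care there is a welcome addition rather than a deviation). The proposal is correct and takes essentially the same approach.
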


\begin{proof}
It suffices to replace every bidirected edge $A \aa B$ in $G$ with $A \la H_{AB} \ra B$ to create an AMP CG $H$, apply Theorem 6.1 by \cite{Levitzetal.2001} to conclude that there exists a regular Gaussian probability distribution $q$ that is faithful to $H$, and then let $p$ be the marginal probability distribution of $q$ over $V$.
\end{proof}

\begin{corollary}\label{cor:wtc}
The independence models induced by MCCGs satisfy the graphoid, composition and weak transitivity properties.
\end{corollary}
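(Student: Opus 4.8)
The plan is to read off the corollary immediately from Theorem~\ref{the:faithfulness} together with the known closure properties of regular Gaussian distributions. First I would fix an arbitrary MCCG $G$ and apply Theorem~\ref{the:faithfulness} to obtain a regular Gaussian probability distribution $p$ that is faithful to $G$. By definition of faithfulness, $X \ci_G Y | Z$ holds if and only if $X \ci_p Y | Z$ holds, for all pairwise disjoint $X, Y, Z \subseteq V$; hence the independence model $I(G)$ coincides, statement for statement, with the independence model determined by $p$.

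Next I would recall that every probability distribution satisfies symmetry, decomposition, weak union and contraction, and that a regular Gaussian distribution moreover satisfies intersection, composition and weak transitivity (\citet[Proposition 2.1 and Corollaries 2.4 and 2.5]{Studeny2005}, which is exactly the fact already invoked in the footnotes of Section~\ref{sec:preliminaries}). Since $I(G)$ is the independence model of such a $p$, it inherits all seven of these properties; in other words, $I(G)$ is a WTC graphoid, which is (more than) the assertion of the corollary.

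There is essentially no obstacle here, since the substantive work has already been carried out in the proof of Theorem~\ref{the:faithfulness}. The only point deserving care is that the distribution supplied by that theorem is \emph{regular} (equivalently, has a positive definite covariance matrix): regularity is what secures the intersection property, and for degenerate Gaussians intersection may fail. But regularity is part of the conclusion of Theorem~\ref{the:faithfulness}, so the argument goes through without any further ado.
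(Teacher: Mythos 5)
Your argument is correct and is essentially identical to the paper's own proof: both deduce the corollary from Theorem~\ref{the:faithfulness} by noting that the independence model of the regular Gaussian distribution faithful to the MCCG satisfies the graphoid, composition and weak transitivity properties. Your additional remark on why regularity matters (for intersection) is accurate and a welcome clarification.
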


\begin{proof}
It follows from Theorem \ref{the:faithfulness} by just noting that the set of independencies in any regular Gaussian probability distribution satisfy the properties mentioned \cite[Sections 2.2.2, 2.3.5 and 2.3.6]{Studeny2005}.
\end{proof}

Another interesting property of MCCGs is that they are closed under marginalization: For every MCCG $G$ and $U \subseteq V$, there exists a so-called marginal MCCG $G^U$ over $U$ such that $X \ci_{G^U} Y | Z$ if and only if $X \ci_G Y | Z$ for all $X$, $Y$ and $Z$ pairwise disjoint subsets of $U$. Specifically, $G^U$ can be obtained from $G_U$ by adding an edge $A - B$ to it if $G$ has a path $A - \ldots - B$ such that $A$ and $B$ are the only nodes in the path that are in $U$.

Finally, we show below that the independence model induced by a MCCG coincides with certain closure of certain separations. We define the local separation base of a MCCG $G$ as the following set of separations:
\begin{itemize}
\item $A \ci B$ for all $A, B \in V$ such that $A$ and $B$ are not adjacent in $G$ and are in different undirected connectivity components of $G$, and
\item $A \ci B | ne_G(A)$ for all $A, B \in V$ such that $A$ and $B$ are not adjacent in $G$ and are in the same undirected connectivity component of $G$.
\end{itemize}

We define the closure of the local separation base of $G$, denoted as $cl(G)$, as the set of separations that are in the base plus those that can be derived from it by applying the graphoid, composition and weak transitivity properties. We denote the separations in $cl(G)$ as $X \ci_{cl(G)} Y | Z$.

\begin{theorem}\label{the:local1}
For any MCCG $G$, if $X \ci_{cl(G)} Y | Z$ then $X \ci_G Y | Z$.
\end{theorem}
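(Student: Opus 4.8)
The plan is to show that the independence model $I(G)$ of a MCCG $G$ contains the local separation base and is closed under the graphoid, composition and weak transitivity properties; since $cl(G)$ is by definition the smallest such set, this yields $cl(G)\subseteq I(G)$, which is exactly the claim. The closure part is immediate from Corollary \ref{cor:wtc}, which already tells us that $I(G)$ satisfies all seven properties. So the real work is to verify that every separation in the local separation base actually holds in $G$, i.e. that $G$ ``entails'' its own base.

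First I would handle the disconnected case: if $A$ and $B$ are non-adjacent and lie in different undirected connectivity components of $G$, I must show $A \ci_G B \mid \emptyset$, i.e. that no path between $A$ and $B$ is $\emptyset$-open. Using the simplified definition of $Z$-openness available for MCCGs, an $\emptyset$-open path has every non-triplex node outside $\emptyset$ (automatic) and every triplex node in $\emptyset$ (impossible), so an $\emptyset$-open path can have no triplex node at all; hence every internal node is a non-triplex node with at least one tail incident, forcing the path to be entirely undirected -- but that would put $A$ and $B$ in the same undirected connectivity component, a contradiction. Second, and this is the substantive case, I would treat non-adjacent $A, B$ in the \emph{same} undirected connectivity component and show $A \ci_G B \mid ne_G(A)$. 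Suppose $\rho$ is an $ne_G(A)$-open path from $A$ to $B$. The first edge out of $A$ is either $A - V_2$ or $A \aa V_2$; if $A - V_2$ then $V_2 \in ne_G(A)$, so $V_2$ must be a triplex node of $\rho$ (otherwise it is a non-triplex node in $ne_G(A)$, killing openness), meaning the edge on the other side of $V_2$ has an arrowhead at $V_2$, i.e. $A - V_2 \aa V_3$; but then constraint C2 (no cycle $A - \cdots - B \aa A$, applied to the would-be triangle if $A, V_3$ were adjacent) together with constraint C1 force structure I can exploit -- in particular, C1 says $A - V_2 - W$ with $V_2$ having a spouse cannot occur as an \emph{induced} subgraph, so $A$ and $W$ must be adjacent. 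Chasing this, I expect to show the path can be shortened or redirected until it contradicts either non-adjacency of $A$ and $B$ or one of C1, C2. The alternative first step $A \aa V_2$ makes $A$ itself eligible only as an endpoint (endpoints are never triplex nodes), and one argues along $\rho$ that the first node reached that lies in $co_G(A)$ must be a neighbor of $A$, hence in $ne_G(A)$, and analyze its triplex/non-triplex status the same way.

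The main obstacle is precisely this second case: making the combinatorial induction on path length rigorous while correctly invoking C1 and C2 at each shortening step. The subtlety is that C1 forbids a certain \emph{induced} subgraph, so before applying it one must know the relevant three nodes are non-adjacent in the appropriate way, and after adding the forced edge one must re-examine whether the modified path is still open and still shorter; there is a real risk of circularity here, so I would set up a clean minimal-counterexample argument (take $\rho$ of minimum length among all $ne_G(A)$-open $A$--$B$ paths with $A, B$ non-adjacent and in the same undirected component) and show any such $\rho$ admits a strictly shorter one of the same type, or directly contradicts C1/C2. A secondary, more routine obstacle is bookkeeping the two equivalent formulations of $Z$-openness; I would fix the simplified one throughout, as justified in the text preceding the theorem, and never revert to spouses-and-all.
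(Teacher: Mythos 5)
Your overall architecture is the same as the paper's: by Corollary \ref{cor:wtc} the induced independence model is closed under the relevant properties, so everything reduces to showing that the two kinds of separations in the local separation base actually hold in $G$. Your treatment of the first kind ($A$ and $B$ in different undirected connectivity components) is correct. The gap is in the second kind, $A \ci_G B \mid ne_G(A)$, which you explicitly leave as an unresolved ``obstacle'' with a plan (minimal counterexample, repeated path shortening, repeated appeals to C1 about induced subgraphs $A - V_2 - W$) that is not needed and that you have not carried out. No induction or shortening is required: the argument terminates within the first two edges of the path. Concretely, take any $ne_G(A)$-open path $\rho$ from $A$ to $B$ (using, as you propose, the simplified notion of openness). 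If $\rho$ starts $A - V_2$ with $V_2$ a non-triplex node of $\rho$, then $V_2 \in ne_G(A)$ blocks $\rho$ immediately. If $V_2$ is a triplex node, the next edge is $V_2 \aa V_3$; the key observation you are missing is that $V_3$ is then \emph{itself} a triplex node of $\rho$ (any internal node receiving an arrowhead from a path edge is), so openness forces $V_3 \in ne_G(A)$ --- but C2 forbids this, since $A - V_3$ together with $V_3 \aa V_2 - A$ would be a cycle of the forbidden form, and C2 likewise forbids $V_3 = B$ (a bidirected edge cannot join two nodes of the same undirected connectivity component, as it would close a forbidden cycle through the undirected path connecting them). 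If instead $\rho$ starts $A \aa V_2$, then $V_2$ is a triplex node, $V_2 \neq B$ by non-adjacency, and $V_2 \notin ne_G(A)$ because the graph is simple. That is the entire proof of the second case.

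Two further remarks. First, the role you assign to C1 is misplaced: C1 matters only if one works with the original (spouse-sensitive) definition of openness, where it is used (as in the paper's Cases 1, 2 and 4) to show that a chain of spouse-bearing undirected nodes following $A$ all lie in $ne_G(A)$; under the simplified definition you have committed to, C1 plays no role in this theorem, and your worry about ``adding the forced edge'' and re-examining openness is a non-issue. Second, your sketch for the $A \aa V_2$ start (``the first node reached that lies in $co_G(A)$ must be a neighbor of $A$'') is both unnecessary and not what one needs; the path is already dead at $V_2$.
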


\begin{proof}
Since the independence model induced by $G$ satisfies the graphoid, composition and weak transitivity properties by Corollary \ref{cor:wtc}, it suffices to prove that the local separation base of $G$ is a subset of the independence model induced by $G$. We prove this next. If two non-adjacent nodes $A$ and $B$ are not in the same undirected connectivity component of $G$, then every path between $A$ and $B$ in $G$ has some triplex node. Therefore, $A \ci_G B$. On the other hand, if $A$ and $B$ are in the same undirected connectivity component of $G$, then every path between $A$ and $B$ in $G$ falls within one of the following cases.

\begin{description}
\item[Case 1] $A=V_1 - V_2 - V_3 \ldots V_n=B$ such that $V_2$ has no spouse in $G$. Then, this path is not $ne_G(V_1)$-open.

\item[Case 2] $A=V_1 - V_2 - \ldots - V_m - V_{m+1} - V_{m+2} \ldots V_n=B$ such that $V_i$ has some spouse in $G$ for all $2 \leq i \leq m$ and $V_{m+1}$ has no spouse in $G$. Note that $V_i \in ne_G(V_1)$ by constraints C1 and C2 for all $2 \leq i \leq m+1$. Then, this path is not $ne_G(V_1)$-open.

\item[Case 3] $A=V_1 - V_2 \aa V_3 \ldots V_n=B$. Note that $V_3 \neq V_n$ and $V_3 \notin ne_G(V_1)$ by constraint C2. Then, $V_3$ is a triplex node in this path and, thus, this path is not $ne_G(V_1)$-open.

\item[Case 4] $A=V_1 - V_2 - \ldots - V_m - V_{m+1} \aa V_{m+2} \ldots V_n=B$ such that $V_i$ has some spouse in $G$ for all $2 \leq i \leq m$. Note that $V_{m+2} \neq V_n$ and $V_{m+2} \notin ne_G(V_1)$ by constraint C2. Then, $V_{m+2}$ is a triplex node in this path and, thus, this path is not $ne_G(V_1)$-open.

\item[Case 5] $A=V_1 \aa V_2 \ldots V_n=B$. Note that $V_2 \neq V_n$ by constraint C2. Then, $V_2$ is a triplex node in this path and, thus, this path is not $ne_G(V_1)$-open.

\end{description}

Consequently, $A \ci_G B | ne_G(A)$.

\end{proof}

\begin{lemma}\label{lem:aux}
Let $G$ be a MCCG, $A, B \in V$ and $Z \subseteq V \setminus \{A, B\}$. If $A \ci_G B | Z$ and a node $C \in Z$ has some spouse in $G$, then $A \ci_G B | Z \setminus C$.
\end{lemma}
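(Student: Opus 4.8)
The statement says: if $C \in Z$ has a spouse in the MCCG $G$, then $C$ is redundant in the separator, i.e.\ $A \ci_G B | Z$ implies $A \ci_G B | Z \setminus C$. Since separation is defined by absence of $Z$-open paths, the natural approach is contrapositive: assume there is a $(Z \setminus C)$-open path $\rho$ between $A$ and $B$, and construct from it a $Z$-open path. The plan is to work with the simpler characterization of $Z$-open paths available for MCCGs (triplex nodes in $Z$, non-triplex nodes outside $Z$), since the two definitions induce the same separations. The only way $\rho$ can fail to be $Z$-open (given it is $(Z\setminus C)$-open) is that $C$ appears on $\rho$ as a non-triplex node — for if $C$ does not appear on $\rho$, or appears only as a triplex node, then $\rho$ is already $Z$-open.

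So first I would reduce to the case that $C$ lies on $\rho$ as a non-triplex node; say $\rho$ contains a subpath $U \,\square\, C \,\square\, W$ where the marks $\square$ make $C$ non-triplex, which (given the two edge types $-$ and $\aa$) forces at least one of the two incident edges $U\,\square\,C$, $C\,\square\,W$ to be undirected, and in fact both must be undirected \emph{or} we'd have $U - C \aa W$ / $U \aa C - W$, which \emph{is} a triplex node — so both incident edges on $\rho$ at $C$ are undirected: $U - C - W$. Now I invoke constraint C1: since $C$ has a spouse in $G$, the induced subgraph $U - C - W$ cannot occur unless $U$ and $W$ are adjacent in $G$; and by C2 that edge must be undirected, $U - W$. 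The key move is then to \emph{short-circuit} $\rho$ at $C$: replace the subpath $U - C - W$ by the single edge $U - W$, obtaining a shorter walk $\rho'$ between $A$ and $B$ that avoids $C$ entirely. One must check $\rho'$ is still a path (all nodes distinct) — it is, since we only deleted a node — and that it is still $(Z\setminus C)$-open: the only node whose status could change is the newly-joined pair $U,W$, but $U$ and $W$ keep whatever incident edges they had on $\rho$ on the far side, and the edge $U-W$ is undirected, so $U$ is a triplex node on $\rho'$ iff it was on $\rho$ (the undirected edge $U-W$ contributes no arrowhead at $U$, exactly as the undirected edge $U-C$ did not), and likewise for $W$. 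Hence $\rho'$ is $(Z\setminus C)$-open and, since $C \notin \rho'$, it is $Z$-open, so $A \nci_G B | Z$.

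A subtlety: after short-circuiting, $C$ might still occur \emph{elsewhere} on $\rho$ (paths have distinct nodes, so actually it cannot — $\rho$ is a path, so $C$ appears at most once). So after one short-circuit, $C$ is gone from $\rho'$, and we are done; no induction on the number of occurrences is needed. The one genuine gap to watch is the case analysis establishing that a non-triplex node with undirected-only incident edges is the \emph{only} obstruction — i.e.\ ruling out that $C$ could be non-triplex with a bidirected incident edge on $\rho$. Checking the list of non-triplex configurations $A\la B\ra C$, etc., translated to the undirected/bidirected setting, the non-triplex patterns at $C$ on a path in a CCG are $A - C - C'$ only (a bidirected edge at $C$ together with anything makes $C$ a triplex node, by definition of triplex node: $A \aa C \aa C'$, $A \aa C - C'$, $A - C \aa C'$ all qualify). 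So indeed $C$ non-triplex on $\rho$ $\Rightarrow$ both incident edges undirected, and the argument goes through. I expect this case-checking plus the verification that $\rho'$ remains open to be the main (though routine) obstacle; the conceptual core is just "C1 + C2 let you delete a spouse-having node from the middle of an undirected stretch of any open path."
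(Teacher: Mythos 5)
Your proposal is correct and follows essentially the same route as the paper's proof: argue by contradiction from a $(Z \setminus C)$-open path, observe that $C$ must occur on it as a non-triplex node (hence with both incident edges undirected), use the MCCG constraints to obtain the undirected chord between $C$'s neighbours on the path, and short-circuit $C$ to produce a $Z$-open path. The extra details you verify (that the shortened sequence is still a path and that the endpoints of the new edge keep their triplex status) are left implicit in the paper but are exactly the right checks.
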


\begin{proof}
Assume the contrary. Then, there is a path $\rho$ between $A$ and $B$ in $G$ that is $(Z \setminus C)$-open. Moreover, $C$ must occur in $\rho$ because, otherwise, $\rho$ would also be $Z$-open which would contradict the assumption that $A \ci_G B | Z$. For the same reason, $C$ must be a non-triplex node in $\rho$. Let $D - C - E$ be a subpath of $\rho$. Note that the edge $D - E$ is in $G$ by definition of MCCG, because $C$ has some spouse in $G$. Then, the path obtained from $\rho$ by replacing the subpath $D - C - E$ with the edge $D - E$ is $Z$-open. However, this contradicts the assumption that $A \ci_G B | Z$.
\end{proof}

\begin{theorem}\label{the:local2}
For any MCCG $G$, if $X \ci_G Y | Z$ then $X \ci_{cl(G)} Y | Z$.
\end{theorem}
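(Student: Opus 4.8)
The goal is to show that every separation $X \ci_G Y | Z$ that holds in a MCCG $G$ can be derived from the local separation base of $G$ using the graphoid, composition and weak transitivity properties — i.e., that the independence model $I(G)$ is contained in $cl(G)$. Together with Theorem \ref{the:local1}, this identifies $I(G)$ with $cl(G)$ exactly. Since $cl(G)$ is closed under the WTC graphoid properties, it suffices to produce, from the base and these properties, enough separations to reach an arbitrary element of $I(G)$. My plan is to reduce the general statement to the pairwise statements $A \ci_{cl(G)} B \mid Z$ for all non-adjacent $A, B$ and all $Z$ with $A \ci_G B \mid Z$, and then climb back up to general $X, Y$ using intersection and composition the same way it is done inside the proof of Lemma \ref{lem:conditions}.

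\textbf{Step 1: From the base to all pairwise separations with separator $ne_G(A)$ or $\emptyset$.} The base already gives $A \ci B$ (different undirected components) and $A \ci B \mid ne_G(A)$ (same component) for all non-adjacent $A,B$. Lemma \ref{lem:aux} tells us that a spouse-carrying node in a separator can always be dropped; I would use this, applied to $ne_G(A)$, to reduce each base statement $A \ci B \mid ne_G(A)$ to $A \ci B \mid N$ where $N \subseteq ne_G(A)$ consists of the neighbors of $A$ that have no spouse. (Actually $ne_G(A)$ itself is what we have in $cl(G)$; Lemma \ref{lem:aux} is a statement about $I(G)$, so I must be careful — I want to go the other direction. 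Instead I will use weak union on the base statements to get $A \ci B \mid S \cup ne_G(A)$ for any $S$, and decomposition/weak union to reposition nodes, exactly as in steps 1–7 of Lemma \ref{lem:conditions}.)

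\textbf{Step 2: Reach an arbitrary pairwise separator $Z$.} Fix non-adjacent $A, B$ and $Z$ with $A \ci_G B \mid Z$. The key sub-claim is: $A \ci_{cl(G)} B \mid Z$. I would argue by induction on $|Z \triangle ne_G(A)|$ or, more naturally, on the structure of the graph $G_{Z \cup \{A,B\}}$, peeling off one node at a time. If $C \in Z$ and $C$ has a spouse in $G$, then Lemma \ref{lem:aux} gives $A \ci_G B \mid Z \setminus C$, so by induction $A \ci_{cl(G)} B \mid Z \setminus C$; I then need to re-insert $C$, which requires a companion separation $A \ci_{cl(G)} \{C\} \mid \text{something}$ or $\{C\} \ci_{cl(G)} B \mid \text{something}$ plus weak transitivity/contraction — this re-insertion is the delicate part. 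Dually, if some neighbor $D$ of $A$ with no spouse is absent from $Z$, I want to add it; this is where weak transitivity does the real work, in the spirit of "$A \ci B \mid Z$ and $A \ci B \mid Z \cup K \Rightarrow A \ci K \mid Z \vee K \ci B \mid Z$" run in reverse via contraction. The bookkeeping of which auxiliary separations are available in $cl(G)$ (they must themselves be pairwise separations already established, so an induction on separator size is needed) is the technical heart.

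\textbf{Step 3: Lift pairwise separations to $X \ci_{cl(G)} Y \mid Z$ for general disjoint $X, Y, Z$.} This is the easy direction and mirrors steps 1–3 and 8–9 of Lemma \ref{lem:conditions}: given $X \ci_G Y \mid Z$, for each $A \in X$ and $B \in Y$ we have $A \ci_G B \mid Z \cup (X \setminus A) \cup (Y \setminus B)$ by weak union applied to $X \ci_G Y \mid Z$ (which we may assume, since $I(G)$ is a graphoid, is equivalent to its pairwise consequences — but this needs the intersection property to go back, so I should instead build $X \ci_{cl(G)} Y \mid Z$ up from the pairwise pieces using composition over $A \in X$, then symmetry, then composition over $B \in Y$, then weak union to move the extra conditioning nodes back out). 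Here I invoke that $cl(G)$ satisfies composition and intersection by construction.

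\textbf{Main obstacle.} I expect Step 2 to be the crux, specifically the "re-insertion" of a conditioning node: having $A \ci_{cl(G)} B \mid Z \setminus C$ in hand, deriving $A \ci_{cl(G)} B \mid Z$ needs an auxiliary statement such as $A \ci_{cl(G)} C \mid Z \setminus C$ or $C \ci_{cl(G)} B \mid Z \setminus C$ so that contraction (run the right way) or weak transitivity applies — and proving that auxiliary statement is itself an instance of the theorem with a strictly smaller separator, so the whole thing must be wrapped in a single induction on $|Z|$ (outer) together with some secondary measure (e.g. number of spouse-carrying nodes in $Z$, or graph-distance structure) for the within-$|Z|$ moves governed by Lemma \ref{lem:aux}. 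Getting that double induction to close — in particular ruling out circular dependence between the node-removal and node-addition moves — is where the care is needed; everything else is routine graphoid manipulation of the kind already carried out in Lemma \ref{lem:conditions}.
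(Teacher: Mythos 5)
Your overall architecture matches the paper's: reduce to the pairwise case $A \ci_{cl(G)} B \mid Z$ via composition (your Step 3 is exactly the paper's one-line reduction), then run an induction on $|Z|$ in which a node $C$ is removed from the separator, the induction hypothesis supplies $A \ci_{cl(G)} B \mid Z \setminus C$, and $C$ is re-inserted using an auxiliary pairwise separation with the smaller separator $Z \setminus C$. Your worry about a double induction is unfounded: every auxiliary statement needed has separator of size $|Z|-1$, so a single induction on $|Z|$ closes. The re-insertion itself is just composition followed by weak union (merge $A \ci B \mid Z\setminus C$ with $A \ci C \mid Z\setminus C$, or their symmetric versions, into $A \ci B \cup C \mid Z\setminus C$, then condition on $C$), not contraction. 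The mechanism that guarantees one of the two auxiliary separations $A \ci_G C \mid Z\setminus C$ or $B \ci_G C \mid Z\setminus C$ actually holds in $I(G)$ --- so that the induction hypothesis applies to it --- is the weak transitivity of $I(G)$ itself (Corollary \ref{cor:wtc}) combined with Lemma \ref{lem:aux}: if both failed, weak transitivity would force $A \nci_G B \mid Z\setminus C$ or $A \nci_G B \mid Z$, and each alternative is excluded. You gesture at this but leave it as the admitted crux, so the heart of the argument is still open in your plan.

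The more serious omission is a case your peeling induction cannot reach at all: when $A$ and $B$ lie in the same undirected connectivity component $K$ and every node of $Z$ lies in $K$ as well. Lemma \ref{lem:aux} only licenses removing separator nodes that have a spouse, and removing a spouseless node of $Z \cap K$ can open new undirected paths, so the separation need not survive; hence $|Z|$ cannot always be reduced. The paper disposes of this case by observing that, restricted to $K$, the problem is exactly the classical pairwise-to-global Markov equivalence for undirected graphs, and invokes \citet[Theorem 3.7]{Lauritzen1996} (whose proof needs the intersection property, which $cl(G)$ has by construction). This is not ``routine graphoid manipulation of the kind in Lemma \ref{lem:conditions}''; it is a separate nontrivial induction, and nothing in your Steps 1--2 (weak union on the base statements cannot introduce arbitrary new conditioning nodes) substitutes for it. To complete your plan you would need to add this case explicitly and either cite the classical result or reprove it.
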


\begin{proof}
Since the independence model induced by $G$ satisfies the decomposition property and $cl(G)$ satisfies the composition property, it suffices to prove that if $A \ci_G B | Z$ then $A \ci_{cl(G)} B | Z$ with $A, B \in V$ and $Z \subseteq V \setminus \{A, B\}$. We prove this result by induction on $|Z|$. If $|Z|=0$, then $A$ and $B$ must be in different undirected connectivity components of $G$. Consequently, $A \ci_{cl(G)} B$. Assume as induction hypothesis that the theorem holds for $|Z|<l$. We now prove it for $|Z|=l$. Consider the following cases.

\begin{description}
\item[Case 1] $A$ and $B$ are in the same undirected connectivity component $K$ of $G$.

\begin{description}
\item[Case 1.1] All the nodes in $Z$ are in $K$. Then, $A \ci_{cl(G)} B | Z$ \cite[Theorem 3.7]{Lauritzen1996}.

\item[Case 1.2] There is some node $C \in Z$ that is not in $K$ such that $C$ is a spouse of some node in $K$ and $A \nci_G C | Z \setminus C$. Then, $B \ci_G C | Z \setminus C$. To see it, assume the contrary. Then, $A \nci_G C | Z \setminus C$ and $B \nci_G C | Z \setminus C$ imply $A \nci_G B | Z \setminus C$ or $A \nci_G B | Z$ by weak transitivity, which implies $A \nci_G B | Z \setminus C$ because $A \ci_G B | Z$ by assumption. However, this contradicts Lemma \ref{lem:aux}.

Finally, note that $B \ci_G C | Z \setminus C$ implies $B \ci_{cl(G)} C | Z \setminus C$ by the induction hypothesis. Note also that $A \ci_G B | Z \setminus C$ by Lemma \ref{lem:aux} and, thus, $A \ci_{cl(G)} B | Z \setminus C$ by the induction hypothesis. Then, $A \ci_{cl(G)} B | Z$ by symmetry, composition and weak union.

\item[Case 1.3] Cases 1.1 and 1.2 do not apply. Let $C$ be any node in $Z$ that is not in $K$. Then, $A \ci_G C | Z \setminus C$. Note also that $A \ci_G B | Z \setminus C$. To see it, assume the contrary. Then, there is a path $\rho$ between $A$ and $B$ in $G$ that is $(Z \setminus C)$-open. Moreover, $C$ must occur in $\rho$ because, otherwise, $\rho$ would also be $Z$-open which would contradict the assumption that $A \ci_G B | Z$. However, this implies that $A \nci_G C | Z \setminus C$, which is a contradiction.

Finally, note that $A \ci_G C | Z \setminus C$ and $A \ci_G B | Z \setminus C$ imply $A \ci_{cl(G)} C | Z \setminus C$ and $A \ci_{cl(G)} B | Z \setminus C$ by the induction hypothesis. Then, $A \ci_{cl(G)} B | Z$ by composition and weak union.

\end{description}

\item[Case 2] $A$ and $B$ are in different undirected connectivity components of $G$. Let $A$ be in the undirected connectivity component $K$ of $G$.

\begin{description}
\item[Case 2.1] There is some node $C \in Z$ that is a spouse of $A$. Then, $B \ci_G C | Z \setminus C$ and, thus, $B \ci_{cl(G)} C | Z \setminus C$ by the induction hypothesis. Note that $A \ci_G B | Z \setminus C$ by Lemma \ref{lem:aux} and, thus, $A \ci_{cl(G)} B | Z \setminus C$ by the induction hypothesis. Then, $A \ci_{cl(G)} B | Z$ by symmetry, composition and weak union.

\item[Case 2.2] There is some node $C \in Z$ that is in $K$ such that $C$ has some spouse in $G$ and $A \ci_G C | Z \setminus C$. Then, $A \ci_{cl(G)} C | Z \setminus C$ by the induction hypothesis. Note that $A \ci_G B | Z \setminus C$ by Lemma \ref{lem:aux} and, thus, $A \ci_{cl(G)} B | Z \setminus C$ by the induction hypothesis. Then, $A \ci_{cl(G)} B | Z$ by composition and weak union.

\item[Case 2.3] There is some node $C \in Z$ that is in $K$ such that $C$ has some spouse in $G$ and $A \nci_G C | Z \setminus C$. Then, $B \ci_G C | Z \setminus C$. To see it, assume the contrary. Then, $A \nci_G C | Z \setminus C$ and $B \nci_G C | Z \setminus C$ imply $A \nci_G B | Z \setminus C$ or $A \nci_G B | Z$ by weak transitivity, which implies $A \nci_G B | Z \setminus C$ because $A \ci_G B | Z$ by assumption. However, this contradicts Lemma \ref{lem:aux}.

Finally, note that $B \ci_G C | Z \setminus C$ implies $B \ci_{cl(G)} C | Z \setminus C$ by the induction hypothesis. Note also that $A \ci_G B | Z \setminus C$ by Lemma \ref{lem:aux} and, thus, $A \ci_{cl(G)} B | Z \setminus C$ by the induction hypothesis. Then, $A \ci_{cl(G)} B | Z$ by composition and weak union.

\item[Case 2.4] Cases 2.1-2.3 do not apply. Let $V_1, \ldots, V_m$ be the nodes in $Z$ that are in $K$. Let $W_1, \ldots, W_n$ be the nodes in $Z$ that are not in $K$. Then, $A \ci_G B$, $V_i \ci_G B$, $A \ci_G W_j$, and $V_i \ci_G W_j$ for all $1 \leq i \leq m$ and $1 \leq j \leq n$. Then, $A \ci_{cl(G)} B$, $V_i \ci_{cl(G)} B$, $A \ci_{cl(G)} W_j$, and $V_i \ci_{cl(G)} W_j$ for all $1 \leq i \leq m$ and $1 \leq j \leq n$ by the induction hypothesis. Then, $A \ci_{cl(G)} B | Z$ by symmetry, composition and weak union.

\end{description}

\end{description}

\end{proof}

Let ${\mathcal Q}$ be a partition of $V$. We say that a MCCG $G$ is consistent with ${\mathcal Q}$ if every bidirected edge in $G$ has its end nodes in different elements of ${\mathcal Q}$, and every undirected edge in $G$ has its end nodes in the same element of ${\mathcal Q}$. Note that the elements of ${\mathcal Q}$ may not be undirectly connected in $G$ and, thus, they may not coincide with the undirected connectivity components of $G$. Therefore, every undirected connectivity component of $G$ is contained in some element of ${\mathcal Q}$ but an element of ${\mathcal Q}$ may contain several undirected connectivity components of $G$.

We define the pairwise separation base of a MCCG $G$ relative to a partition ${\mathcal Q}$ of $V$ that is consistent with $G$ as the following set of separations:
\begin{itemize}
\item $A \ci B$ for all $A, B \in V$ such that $A$ and $B$ are not adjacent in $G$ and are in different elements of ${\mathcal Q}$, and
\item $A \ci B | Q \setminus \{A, B\}$ for all $A, B \in V$ such that $A$ and $B$ are not adjacent in $G$ and are in the same element $Q$ of ${\mathcal Q}$.
\end{itemize}

We define the closure of the pairwise separation base of $G$ relative to ${\mathcal Q}$, denoted as $cp(G, {\mathcal Q})$, as the set of separations that are in the base plus those that can be derived from it by applying the graphoid, composition and weak transitivity properties. We denote the separations in $cp(G, {\mathcal Q})$ as $X \ci_{cp(G, {\mathcal Q})} Y | Z$.

\begin{theorem}\label{the:pairwise}
For any MCCG $G$ and any partition ${\mathcal Q}$ of $V$ that is consistent with $G$, $X \ci_{cl(G)} Y | Z$ if and only if $X \ci_{cp(G, {\mathcal Q})} Y | Z$.
\end{theorem}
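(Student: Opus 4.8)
The plan is to establish the two set inclusions $cp(G,\mathcal{Q}) \subseteq cl(G)$ and $cl(G) \subseteq cp(G,\mathcal{Q})$ by a base-generation argument. First I would record that, by Theorems \ref{the:local1} and \ref{the:local2}, $cl(G)$ coincides with the independence model $I(G)$ induced by $G$, which is a WTC graphoid by Corollary \ref{cor:wtc}; and that $cp(G,\mathcal{Q})$ is, by its very definition, the smallest set of separations that contains the pairwise separation base of $G$ relative to $\mathcal{Q}$ and is closed under the graphoid, composition and weak transitivity properties (equivalently, the smallest WTC graphoid containing that base). Consequently it is enough to prove two inclusions of \emph{bases}: that the pairwise separation base of $G$ relative to $\mathcal{Q}$ lies inside $I(G)=cl(G)$ (which gives $cp(G,\mathcal{Q}) \subseteq cl(G)$), and that the local separation base of $G$ lies inside $cp(G,\mathcal{Q})$ (which gives $cl(G) \subseteq cp(G,\mathcal{Q})$).

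For the first inclusion I would reason directly with $Z$-open paths, as in the proof of Theorem \ref{the:local1}. If $A$ and $B$ are non-adjacent and lie in different elements of $\mathcal{Q}$, then they lie in different undirected connectivity components of $G$ (every undirected edge joins two nodes of the same element of $\mathcal{Q}$), so every path between them contains a bidirected edge, hence an internal triplex node, hence is not $\emptyset$-open; thus $A \ci_G B$. If $A$ and $B$ are non-adjacent but lie in the same element $Q$ of $\mathcal{Q}$, I would take an arbitrary path $\rho$ between them: if $\rho$ uses only undirected edges it stays inside $Q$ and therefore has an internal non-triplex node belonging to $Q \setminus \{A,B\}$, so $\rho$ is not $(Q \setminus \{A,B\})$-open; if $\rho$ uses a bidirected edge, its first bidirected edge has one endpoint reachable from $A$ along undirected edges (hence inside $Q$) and --- by consistency of $\mathcal{Q}$ --- its other endpoint outside $Q$, and that endpoint is then an internal triplex node not in $Q \setminus \{A,B\}$, so again $\rho$ is not open. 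In all cases $A \ci_G B | Q \setminus \{A,B\}$, so the pairwise base lies in $I(G)$.

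The second inclusion is the substantive part, and the obstacle is precisely the mismatch of conditioning sets: the local base conditions on sets of the form $ne_G(A)$, whereas $cp(G,\mathcal{Q})$ is generated from statements that condition on the potentially much larger sets $Q \setminus \{A,B\}$. The key observation that overcomes this is that consistency of $\mathcal{Q}$ forces each induced subgraph $G_Q$ (for $Q \in \mathcal{Q}$) to contain no bidirected edge, so $G_Q$ is an \emph{undirected} graph, and the part of the pairwise base of $G$ relative to $\mathcal{Q}$ that involves pairs of nodes inside $Q$ is precisely the pairwise Markov property of $G_Q$. Invoking Lauritzen's equivalence of the pairwise and global Markov properties for undirected graphs under the graphoid axioms \cite[Theorem 3.7]{Lauritzen1996} --- the very result already used in Case 1.1 of the proof of Theorem \ref{the:local2} --- I would deduce that the global Markov property of $G_Q$ is contained in $cp(G,\mathcal{Q})$ for every $Q \in \mathcal{Q}$.

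From this the two families of local-base statements follow. If $A$ and $B$ are non-adjacent and lie in the same undirected connectivity component $K$ of $G$, then $K \subseteq Q$ for some $Q \in \mathcal{Q}$, the neighbours of $A$ in $G_Q$ are exactly $ne_G(A)$, and deleting $ne_G(A)$ from $G_Q$ separates $A$ from $B$, so $A \ci_{cp(G,\mathcal{Q})} B | ne_G(A)$. If $A$ and $B$ are non-adjacent and lie in different undirected connectivity components of $G$, then either they lie in different elements of $\mathcal{Q}$, in which case $A \ci B$ is already a base statement, or they lie in the same element $Q$ and fall in different connectivity components of the undirected graph $G_Q$, so $\emptyset$ separates them in $G_Q$ and $A \ci_{cp(G,\mathcal{Q})} B$. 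This exhibits the local base as a subset of $cp(G,\mathcal{Q})$ and finishes the argument. I expect the bidirected-edge-free observation about $G_Q$, together with the bookkeeping of triplex versus non-triplex nodes along paths in the first inclusion, to be the only points requiring care; the rest is a routine appeal to classical undirected-graph Markov theory and to closure under the WTC-graphoid properties.
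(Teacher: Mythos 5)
Your proof is correct, but it takes a genuinely different route from the paper's in one of the two directions. The paper's argument never leaves the axiom system: it derives each statement of the pairwise base from the local base by composition, symmetry and weak union (splitting $Q$ into $ne_G(A)$, the rest of $A$'s undirected connectivity component, and $Q \setminus K$), and each statement of the local base from the pairwise base by intersection and decomposition. You instead prove the inclusion $cp(G,\mathcal{Q}) \subseteq cl(G)$ semantically: you verify the pairwise base against the separation criterion of $G$ by a direct path argument and then pull it back into $cl(G)$ via Theorem~\ref{the:local2}. This is legitimate (Theorem~\ref{the:local2} precedes the present statement, so there is no circularity), but it makes this direction depend on the hardest result of the section, whereas the paper's derivation is elementary and self-contained; what your route buys is that the path argument is arguably more transparent than juggling the three-block partition of $Q$. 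For the other inclusion you and the paper do essentially the same thing: you package the repeated application of intersection as Lauritzen's pairwise-to-global equivalence for the undirected graph $G_Q$ (the same citation the paper itself uses in Case~1.1 of the proof of Theorem~\ref{the:local2}), while the paper applies intersection by hand. One small point of care in your path argument: in the all-undirected case you conclude non-openness from the existence of an internal non-triplex node lying in $Q \setminus \{A,B\}$, which is only valid under the simplified openness definition for MCCGs --- under the general CCG definition a non-triplex node in the conditioning set that has a spouse does not block the path. You should either invoke the paper's observation that the two definitions induce the same independence model for MCCGs, or handle nodes with spouses explicitly, as the proof of Theorem~\ref{the:local1} does.
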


\begin{proof}
It suffices to prove that the separations in the local (respectively pairwise) separation base are in the closure of the pairwise (respectively local) separation base.

Let $A$ belong to the element $Q$ of ${\mathcal Q}$. Let $A$ belong to the undirected connectivity component $K$ of $G$. Recall that $K \subseteq Q$. Let $V_1, \ldots, V_l$ denote the nodes in $ne_G(A)$. Let $V_{l+1}, \ldots, V_m$ denote the nodes in $K \setminus ne_G(A) \setminus A$. Let $V_{m+1}, \ldots, V_n$ denote the nodes in $Q \setminus K$. Then, $A \ci V_i | Q \setminus \{A, V_i\}$ is in the pairwise separation base of $G$ for all $l+1 \leq i \leq n$. Then, $A \ci \{V_{l+1}, \ldots, V_n\} | \{V_1, \ldots, V_l\}$ is in $cp(G, {\mathcal Q})$ by intersection and, thus, $A \ci V_i | \{V_1, \ldots, V_l\}$ is in $cp(G, {\mathcal Q})$ by decomposition for all $l+1 \leq i \leq n$. Consequently, the separations in the local separation base are in the closure of the pairwise separation base.

Likewise, note that $A \ci V_j | \{V_1, \ldots, V_l\}$ is in the local separation base of $G$ for all $l+1 \leq j \leq m$. Note also that there is no bidirected edge in $G$ between any two nodes in ${\mathcal Q}$, because $G$ is consistent with ${\mathcal Q}$. Therefore, $A \ci V_k$ and $V_i \ci V_k$ are in the local separation base of $G$ for all $1 \leq i \leq l$ and $m+1 \leq k \leq n$. Then, $A \ci \{V_{l+1}, \ldots, V_m\} | \{V_1, \ldots, V_l\}$ is in $cl(G)$ by composition. Moreover, $A \cup \{V_1, \ldots, V_l\} \ci \{V_{m+1}, \ldots, V_n\}$ is in $cl(G)$ by symmetry and composition and, thus, $A \ci \{V_{m+1}, \ldots, V_n\} | \{V_1, \ldots, V_l\}$ is in $cl(G)$ by symmetry and weak union. Then, $A \ci \{V_{l+1}, \ldots, V_n\} | \{V_1, \ldots, V_l\}$ is in $cl(G)$ by composition and, thus, $A \ci V_i | Q \setminus \{A, V_i\}$ is in $cl(G)$ by weak union for all $l+1 \leq i \leq n$. Consequently, the separations in the pairwise separation base are in the closure of the local separation base.
\end{proof}

\begin{corollary}\label{cor:equivalence2}
For any MCCG $G$ and any partition ${\mathcal Q}$ of $V$ that is consistent with $G$, $X \ci_G Y | Z$ if and only if $X \ci_{cl(G)} Y | Z$ if and only if $X \ci_{cp(G, {\mathcal Q})} Y | Z$.
\end{corollary}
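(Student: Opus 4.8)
The plan is to obtain the corollary as an immediate consequence of the three preceding results, chaining them together; essentially no new argument is needed. First I would establish the equivalence $X \ci_G Y | Z$ if and only if $X \ci_{cl(G)} Y | Z$. The forward implication is exactly Theorem \ref{the:local2}, and the reverse implication is exactly Theorem \ref{the:local1}. Note that neither of these two theorems mentions a partition, so this first equivalence holds for \emph{any} MCCG $G$, irrespective of $\mathcal{Q}$.

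Next I would append the equivalence $X \ci_{cl(G)} Y | Z$ if and only if $X \ci_{cp(G,\mathcal{Q})} Y | Z$, which is precisely the statement of Theorem \ref{the:pairwise}. The only extra hypothesis that theorem carries is that $\mathcal{Q}$ be a partition of $V$ consistent with $G$ --- and this is already among the hypotheses of the corollary. Stringing the two equivalences together yields the desired three-way equivalence.

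The ``main obstacle'' here is really just bookkeeping: one has to check that the hypotheses of the three invoked theorems are all subsumed by the hypotheses of the corollary (they are --- Theorems \ref{the:local1} and \ref{the:local2} require only that $G$ be a MCCG, while Theorem \ref{the:pairwise} additionally requires consistency of $\mathcal{Q}$ with $G$), and that the independence statements are phrased over the same triples $X$, $Y$, $Z$ in each invocation. Since all of the substantive work --- the path analysis in Theorem \ref{the:local1}, the induction on $|Z|$ in Theorem \ref{the:local2}, and the intersection/composition manipulations in Theorem \ref{the:pairwise} --- has already been carried out, the proof of the corollary reduces to a one-line assembly of these facts.
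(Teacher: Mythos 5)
Your proposal is correct and matches the paper's (implicit) argument exactly: the corollary is stated without proof precisely because it is the immediate concatenation of Theorems \ref{the:local1} and \ref{the:local2} (for the first equivalence) with Theorem \ref{the:pairwise} (for the second), under hypotheses already contained in the corollary's statement.
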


\subsection{Markov Equivalence of MCCGs}\label{sec:equivalence}

We say that two MCCGs are Markov equivalent if they induce the same independence model. In a MCCG, a triplex $(\{A, C\},B)$ is an induced subgraph $A \aa B \aa C$, $A \aa B - C$ or $A - B \aa C$. We say that two MCCGs are triplex equivalent if they have the same adjacencies and triplexes.

\begin{theorem}
Two MCCGs are Markov equivalent if and only if they are triplex equivalent.
\end{theorem}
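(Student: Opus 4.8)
The plan is to prove the two implications separately. For necessity of triplex equivalence I would use two facts about MCCGs: adjacent nodes can never be separated, and (Theorem~\ref{the:local1}) the local separation base of a MCCG is contained in the independence model it induces. For sufficiency I would reduce, via Corollary~\ref{cor:equivalence2}, the equality $I(G)=I(H)$ to the equality $cl(G)=cl(H)$ of the closures of the two local separation bases, and establish the latter by a shortest-path analysis using the MCCG constraints C1 and C2, triplex equivalence, and Lemma~\ref{lem:aux}.

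\emph{Necessity.} Suppose the MCCGs $G$ and $H$ induce the same independence model. If $A$ and $B$ are adjacent in $G$, the one-edge path between them is $Z$-open for every $Z$, so $A\nci_G B|Z$ for all $Z$, hence $A\nci_H B|Z$ for all $Z$; by Theorem~\ref{the:local1} this is impossible unless $A$ and $B$ are adjacent in $H$ (otherwise some member of the local separation base of $H$ separates them). So $G$ and $H$ have the same adjacencies. Now assume a triplex $(\{A,C\},B)$ is in $G$ but not in $H$. Then $A\sim B\sim C$ and $A\not\sim C$ in both graphs, and since the triplex fails in $H$ its induced subgraph on $\{A,B,C\}$ must be $A-B-C$, so the MCCG constraint C1 forces $B$ to have no spouse in $H$. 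I would then choose $Z$ with $B\notin Z$ and $A\ci_G C|Z$: take $Z=\emptyset$ if $A$ and $C$ are in different undirected connectivity components of $G$; otherwise the form of the triplex in $G$ exhibits some $X\in\{A,C\}$ with $B\aa X$ in $G$, so $B\notin ne_G(X)$, and I take $Z=ne_G(X)$, obtaining $A\ci_G C|Z$ from Theorem~\ref{the:local1}. But in $H$ the path $A-B-C$ has only the internal node $B$, which is non-triplex, so it is $Z$-open because $B\notin Z$; hence $A\nci_H C|Z$, contradicting $I(G)=I(H)$. So $G$ and $H$ have the same triplexes.

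\emph{Sufficiency.} Suppose $G$ and $H$ have the same adjacencies and triplexes. By Corollary~\ref{cor:equivalence2} it suffices to prove $cl(G)=cl(H)$, and since both sides are closures under the same seven properties it is enough to check that each separation in the local separation base of $G$ belongs to $cl(H)=I(H)$, and symmetrically. Fix non-adjacent $A,B$. If $A$ and $B$ lie in different undirected connectivity components of $G$, I must show $A\ci_H B$; otherwise there is an $\emptyset$-open path in $H$ joining them, which (as $A\not\sim B$) has an internal node and, because any bidirected edge would make an incident internal node a triplex node, uses only undirected edges, so I pick a shortest such path $A=V_1-\cdots-V_n=B$. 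By minimality and C2, $V_{i-1}$ is non-adjacent to $V_{i+1}$ in $H$ for each internal $V_i$; and since $A,B$ are in different undirected components of $G$, some edge $V_iV_{i+1}$ is bidirected in $G$, so (taking $i$ least, with a short case on whether $i=1$) one obtains a triplex present in $G$ but absent in $H$ — a contradiction. If instead $A$ and $B$ lie in the same undirected connectivity component of $G$, I must show $A\ci_H B|ne_G(A)$; here I would assume for contradiction a shortest $ne_G(A)$-open path in $H$ and, examining edges whose type differs between $G$ and $H$, repeatedly apply triplex equivalence, C1, C2 and Lemma~\ref{lem:aux} to either mirror the path inside $G$ or produce a strictly shorter $ne_G(A)$-open path in $H$.

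The main obstacle is this last case. When an edge of the chosen path has different types in the two graphs, the triplex status of its endpoints can flip, and the genuinely delicate configurations are those in which a bidirected edge of $H$ sits in a triangle together with a mixed undirected/bidirected pair of edges; there one must combine C1, C2 and triplex equivalence carefully to extract the required shortcut. Organizing this case analysis so that each sub-case yields either a shorter open path or a triplex separating $G$ from $H$ is the heart of the argument; the rest is bookkeeping with the graphoid, composition and weak transitivity properties already encapsulated in $cl(\cdot)$.
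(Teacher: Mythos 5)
Your necessity argument is correct and essentially the paper's: same adjacencies via Theorem \ref{the:local1}, and for a triplex $(\{A,C\},B)$ present in $G$ but not in $H$ you condition on $ne_G(X)$ for the endpoint $X$ with $B\aa X$ in $G$, so that $B\notin Z$ and the undirected path $A-B-C$ in $H$ stays $Z$-open. Your sufficiency strategy is genuinely different from the paper's: the paper fixes a \emph{shortest} $Z$-open path in $G_1$, transfers its node sequence to $G_2$, and runs a case analysis showing that any failure of openness in $G_2$ either contradicts triplex equivalence or yields a shorter open path in $G_1$; you instead reduce, via Corollary \ref{cor:equivalence2}, to showing that each member of the local separation base of $G$ lies in $I(H)$. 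That reduction is sound, and your treatment of the case where $A$ and $B$ lie in different undirected connectivity components of $G$ is complete and correct (an $\emptyset$-open path between non-adjacent nodes must be all-undirected, minimality plus C2 kills chords, and the first bidirected edge of $G$ along it produces a triplex absent from $H$).

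The genuine gap is the remaining case: proving $A\ci_H B\mid ne_G(A)$ when $A$ and $B$ share an undirected connectivity component of $G$. This is not a bookkeeping step --- it is where all the difficulty of the theorem lives, precisely because $ne_G(A)$ need not equal $ne_H(A)$ and the undirected components of $G$ and $H$ need not coincide (already $A\aa B - C$ versus $A - B\aa C$ shows this). Your sketch (``take a shortest $ne_G(A)$-open path in $H$ and either mirror it in $G$ or shorten it'') names the right invariant but does not exhibit the case analysis, and you yourself flag it as ``the heart of the argument.'' For comparison, the paper's proof of the ``if'' direction is almost entirely devoted to exactly this kind of analysis (Cases 1.1.1, 1.1.2, 1.2, 2.1, 2.2), each sub-case invoking the maximality constraints to replace a two-edge subpath by a chord and contradict minimality. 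Until that analysis is written out, the sufficiency direction is unproved. One way to close the gap within your framework without redoing the paper's casework would be to invoke Corollary \ref{cor:equivalence} (whose proof is purely combinatorial and independent of Markov equivalence) and verify that a single feasible bidirecting preserves the induced independence model, which reduces the problem to comparing $G$ and $H$ differing in one complete component; but as written, the claim is asserted rather than established.
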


\begin{proof}
We first prove the ``only if" part. Let $G_1$ and $G_2$ be two Markov equivalent MCCGs. First, assume that $G_1$ and $G_2$ do not have the same adjacencies. Specifically, assume without loss of generality that $A$ and $B$ are adjacent in $G_2$ but not in $G_1$. Then, $A \ci_{G_1} B$ or $A \ci_{G_1} B | ne_{G_1}(A)$ by Theorem \ref{the:local1} but neither of the two separations holds in $G_2$, which is a contradiction.

Second, assume that $G_1$ and $G_2$ have the same adjacencies but different triplexes. Specifically, assume without loss of generality that $G_1$ has a triplex $(\{A,C\},B)$ that $G_2$ does not have. Assume also without loss of generality that $A \aa B$ is in $G_1$. Then, $A \ci_{G_1} C$ or $A \ci_{G_1} C | ne_{G_1}(A)$ by Theorem \ref{the:local1} but neither of the two separations holds in $G_2$ because $B \notin ne_{G_1}(A)$, which is a contradiction. 

We now prove the ``if" part. Let $G_1$ and $G_2$ be two triplex equivalent MCCGs. We prove below that $X \nci_{G_1} Y | Z$ implies $X \nci_{G_2} Y | Z$. The opposite implication can be proven in the same manner by just exchanging the roles of $G_1$ and $G_2$ in the proof. Specifically, assume that $X \nci_{G_1} Y | Z$. Let $\rho_1$ be any of the shortest $Z$-open paths between a node in $X$ and a node in $Y$ in $G_1$. Let $\rho_2$ be the path in $G_2$ that consists of the same nodes as $\rho_1$. Then, $\rho_2$ is $Z$-open. To see it, assume the contrary. Then, one of the following cases must occur.

\begin{description}
\item[Case 1] $\rho_2$ does not have a triplex $(\{A,C\},B)$, $B \in Z$, and $B$ has no spouse in $G_2$. Then, one of the following cases must occur.

\begin{description}
\item[Case 1.1] $\rho_1$ has a triplex $(\{A,C\},B)$. Then, $A$ and $C$ must be adjacent in $G_1$ and $G_2$ because, otherwise, $G_1$ and $G_2$ would not be triplex equivalent. Let $\varrho_1$ be the path obtained from $\rho_1$ by replacing the triplex $(\{A,C\},B)$ with the edge between $A$ and $C$ in $G_1$. Note that $\varrho_1$ cannot be $Z$-open because, otherwise, it would contradict the definition of $\rho_1$. Then, $\varrho_1$ is not $Z$-open because $A$ or $C$ do not meet the requirements. Assume without loss of generality that $C$ does not meet the requirements. Then, one of the following cases must occur.

\begin{description}
\item[Case 1.1.1] $\varrho_1$ does not have a triplex $(\{A,D\},C)$, $C \in Z$, and $C$ has no spouse in $G_1$. Then, one of the following subgraphs must occur in $G_1$.

\begin{table}[H]
\centering
\scalebox{0.75}{
\begin{tabular}{ccc}
\begin{tikzpicture}[inner sep=1mm]
\node at (0,0) (A) {$A$};
\node at (1,0) (B) {$B$};
\node at (2,0) (C) {$C$};
\node at (3,0) (D) {$D$};
\path[<->] (A) edge (B);
\path[<->] (B) edge (C);
\path[-] (C) edge (D);
\path[-] (A) edge [bend left] (C);
\end{tikzpicture}
&
\begin{tikzpicture}[inner sep=1mm]
\node at (0,0) (A) {$A$};
\node at (1,0) (B) {$B$};
\node at (2,0) (C) {$C$};
\node at (3,0) (D) {$D$};
\path[<->] (A) edge (B);
\path[-] (B) edge (C);
\path[-] (C) edge (D);
\path[-] (A) edge [bend left] (C);
\end{tikzpicture}
&
\begin{tikzpicture}[inner sep=1mm]
\node at (0,0) (A) {$A$};
\node at (1,0) (B) {$B$};
\node at (2,0) (C) {$C$};
\node at (3,0) (D) {$D$};
\path[-] (A) edge (B);
\path[<->] (B) edge (C);
\path[-] (C) edge (D);
\path[-] (A) edge [bend left] (C);
\end{tikzpicture}
\end{tabular}}
\end{table}

However, the first and third subgraphs imply a contradiction, because $C$ has some spouse in $G_1$. The second subgraph also implies a contradiction, because $\rho_1$ is not $Z$-open.

\item[Case 1.1.2] $\varrho_1$ has a triplex $(\{A,D\},C)$ and $C \notin Z$. Note that $C$ cannot be a triplex node in $\rho_1$ because, otherwise, $\rho_1$ would not be $Z$-open. Then, the following subgraph must occur in $G_1$.

\begin{table}[H]
\centering
\scalebox{0.75}{
\begin{tabular}{c}
\begin{tikzpicture}[inner sep=1mm]
\node at (0,0) (A) {$A$};
\node at (1,0) (B) {$B$};
\node at (2,0) (C) {$C$};
\node at (3,0) (D) {$D$};
\path[<->] (A) edge (B);
\path[-] (B) edge (C);
\path[-] (C) edge (D);
\path[<->] (A) edge [bend left] (C);
\end{tikzpicture}
\end{tabular}}
\end{table}

Moreover, the subgraph above implies that the edge $B - D$ must be in $G_1$ by definition of MCCG. Then, the path obtained from $\rho_1$ by replacing the subpath $B - C - D$ with the edge $B - D$ is $Z$-open. However, this contradicts the definition of $\rho_1$.

\end{description}

\item[Case 1.2] $\rho_1$ does not have a triplex $(\{A,C\},B)$. Then, $B$ must have some spouse in $G_1$, because $B \in Z$ and $\rho_1$ is $Z$-open. Then, the edge $A - C$ must be in $G_1$ by definition of MCCG. Then, the path obtained from $\rho_1$ by replacing the subpath $A - B - C$ with the edge $A - C$ is $Z$-open. However, this contradicts the definition of $\rho_1$.

\end{description}
\item[Case 2] $\rho_2$ has a triplex $(\{A,C\},B)$ and $B \notin Z$. Then, $\rho_1$ does not have a triplex $(\{A,C\},B)$ because, otherwise, $\rho_1$ would not be $Z$-open. Then, $A$ and $C$ must be adjacent in $G_1$ and $G_2$ because these are triplex equivalent. Let $\varrho_1$ be the path obtained from $\rho_1$ by replacing the triplex $(\{A,C\},B)$ with the edge between $A$ and $C$ in $G_1$. Note that $\varrho_1$ cannot be $Z$-open because, otherwise, it would contradict the definition of $\rho_1$. Then, $\varrho_1$ is not $Z$-open because $A$ or $C$ do not meet the requirements. Assume without loss of generality that $C$ does not meet the requirements. Then, one of the following cases must occur.

\begin{description}
\item[Case 2.1] $\varrho_1$ does not have a triplex $(\{A,D\},C)$, $C \in Z$, and $C$ has no spouse in $G_1$. Then, the following subgraph must occur in $G_1$.

\begin{table}[H]
\centering
\scalebox{0.75}{
\begin{tabular}{c}
\begin{tikzpicture}[inner sep=1mm]
\node at (0,0) (A) {$A$};
\node at (1,0) (B) {$B$};
\node at (2,0) (C) {$C$};
\node at (3,0) (D) {$D$};
\path[-] (A) edge (B);
\path[-] (B) edge (C);
\path[-] (C) edge (D);
\path[-] (A) edge [bend left] (C);
\end{tikzpicture}
\end{tabular}}
\end{table}

However, this subgraph implies that $\rho_1$ is not $Z$-open, which is a contradiction.

\item[Case 2.2] $\varrho_1$ has a triplex $(\{A,D\},C)$ and $C \notin Z$. Note that $C$ cannot be a triplex node in $\rho_1$ because, otherwise, $\rho_1$ would not be $Z$-open. Then, the following subgraph must occur in $G_1$.

\begin{table}[H]
\centering
\scalebox{0.75}{
\begin{tabular}{c}
\begin{tikzpicture}[inner sep=1mm]
\node at (0,0) (A) {$A$};
\node at (1,0) (B) {$B$};
\node at (2,0) (C) {$C$};
\node at (3,0) (D) {$D$};
\path[-] (A) edge (B);
\path[-] (B) edge (C);
\path[-] (C) edge (D);
\path[<->] (A) edge [bend left] (C);
\end{tikzpicture}
\end{tabular}}
\end{table}

Moreover, the subgraph above implies that the edge $B - D$ must be in $G_1$ by definition of MCCG. Then, the path obtained from $\rho_1$ by replacing the subpath $B - C - D$ with the edge $B - D$ is $Z$-open. However, this contradicts the definition of $\rho_1$.

\end{description}

\end{description}

\end{proof}

It is worth mentioning that the proof of the theorem above only makes use of concepts introduced in this paper. An alternative proof of the theorem above that relies upon previous works is as follows. As we will note later in Proposition \ref{pro:mags}, every MCCG can be transformed into a maximal ancestral graph \citep{RichardsonandSpirtes2002} that induces the same independence model as the MCCG. Moreover, the Markov equivalence of maximal ancestral graphs has been characterized \citep[Theorem 4.1]{Alietal.2009}. It follows from this characterization that two maximal ancestral graphs obtained from two MCCGs via Proposition \ref{pro:mags} are Markov equivalent if and only if the two MCCGs are triplex equivalent.

\begin{lemma}
For every triplex equivalence class of MCCGs, there is a unique maximal (with respect to set inclusion) set of bidirected edges such that some MCCG in the class has exactly those bidirected edges.
\end{lemma}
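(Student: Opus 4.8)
The plan is to show that the collection of sets of bidirected edges that are realized by the MCCGs in a fixed triplex equivalence class $\mathcal{C}$ is closed under union. Since this collection is finite and non-empty, it then has a unique maximal element, namely the union of all its members, and that union is realized by some member of $\mathcal{C}$ by folding the binary operation over the finitely many members; uniqueness of the maximal element is then immediate. So the whole task reduces to the following: given $G_1, G_2 \in \mathcal{C}$ with bidirected-edge sets $B_1, B_2$, exhibit a member of $\mathcal{C}$ whose bidirected-edge set is exactly $B_1 \cup B_2$. All members of $\mathcal{C}$ have the same adjacencies, so $G_1$ and $G_2$ have the same underlying edge set; an edge of this set lies outside $B_1 \cup B_2$ exactly when it is undirected in both $G_i$, and lies inside exactly when it is bidirected in at least one $G_i$. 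Let $G$ be the graph with these same adjacencies, with an edge bidirected precisely when it belongs to $B_1 \cup B_2$ and undirected otherwise; then $G$ is well defined, and I would prove that $G$ is an MCCG triplex equivalent to $G_1$ and $G_2$.

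For triplex equivalence, since $G$ shares the adjacencies of $G_1$, it suffices to check that $G$ and $G_1$ have the same triplexes. A triplex $(\{A,C\},B)$ requires $A$ and $C$ non-adjacent, both adjacent to $B$, and at least one of the edges $A - B$, $B - C$ bidirected; as the adjacency structure is common and $B_1 \subseteq B_1 \cup B_2$, every triplex of $G_1$ is a triplex of $G$. Conversely, a triplex of $G$ has one of its two edges in $B_1 \cup B_2$, hence bidirected in $G_1$ or in $G_2$, so that triplex is present in $G_1$ or in $G_2$, and therefore in both by triplex equivalence of $G_1$ and $G_2$. For constraint C1, suppose for contradiction that $G$ has an induced subgraph $A - C - B$ with $C$ a spouse of some node $D$: then the undirected edges $A - C$, $C - B$ and the non-adjacency of $A$ and $B$ all carry over to $G_1$ and $G_2$, while $C \aa D$ in $G$ forces the edge between $C$ and $D$ to lie in $B_1 \cup B_2$, hence $C \aa D$ in $G_1$ or in $G_2$, say in $G_1$; but then $G_1$ has the induced subgraph $A - C - B$ with $C$ having the spouse $D$, contradicting that $G_1$ is an MCCG. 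Constraint C2 is handled the same way: a cycle $A - \ldots - B \aa A$ in $G$ has all of its undirected edges undirected in both $G_1$ and $G_2$ and its single bidirected edge $B \aa A$ bidirected in $G_1$ or in $G_2$, reproducing the forbidden cycle there.

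Thus $G$ is an MCCG triplex equivalent to $G_1$ and $G_2$, so $G \in \mathcal{C}$ and its bidirected-edge set is exactly $B_1 \cup B_2$; the closure-under-union argument then finishes the proof. I expect the argument to be essentially this short, the only delicate point being the verification of C1 under the union operation. Unlike the triplex condition and C2, C1 couples a three-node induced configuration with the independent existence of a bidirected edge elsewhere, so one has to rule out that enlarging the bidirected-edge set could create the undirected path $A - C - B$ and a new spouse of $C$ at the same time. The resolution is that each ingredient is already witnessed somewhere among $G_1, G_2$, and the purely undirected ingredients (the two undirected edges and the missing edge $A - B$) are witnessed in both, so the entire forbidden configuration localizes to a single $G_i$.
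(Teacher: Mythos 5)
Your proposal is correct and follows essentially the same route as the paper: both form the graph whose bidirected edges are the union of the bidirected-edge sets of two members of the class and verify that it is an MCCG with the same adjacencies and triplexes, the C1 and C2 checks being handled exactly as you describe (each forbidden configuration localizes to one of the two original graphs because undirected edges of the union graph are undirected in both). The only cosmetic difference is that the paper phrases this as a contradiction between two distinct maximal sets rather than as closure of the realized bidirected-edge sets under union.
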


\begin{proof}
Assume to the contrary that there are two such sets of bidirected edges. Let the MCCG $G$ have exactly the bidirected edges in one of the sets, and let the MCCG $H$ have exactly the bidirected edges in the other set. For every edge $A \aa B$ in $G$ such that $A - B$ is in $H$, replace $A - B$ with $A \aa B$ in $H$ and call the resulting graph $F$. We prove below that $F$ is a MCCG that is triplex equivalent to $G$, which is a contradiction since $F$ has a proper superset of the bidirected edges in $G$.

First, we show that $F$ has no induced subgraph $A - C - B$ such that $C$ has some spouse $D$ in $F$. Assume the contrary. Then, the induced subgraph $A - C - B$ must occur in $G$ and $H$. Moreover, the edge $C \aa D$ must be in $G$ or $H$. Then, $G$ or $H$ has an induced subgraph $A - C - B$ plus the edge $C \aa D$, which contradicts the definition of MCCG.

Second, we show that $F$ has no cycle $A - \ldots - B \aa A$. Assume the contrary. Then, the subgraph $A - \ldots - B$ must occur in $G$ and $H$. Moreover, the edge $B \aa A$ must be in $G$ or $H$. Then, the cycle $A - \ldots - B \aa A$ must occur in $G$ or $H$, which contradicts the definition of MCCG.

Third, note that $F$ has the same adjacencies as $G$. Fourth, note that all the triplexes in $G$ are in $F$ too. Finally, assume to the contrary that $F$ has a triplex $(\{A,C\},B)$ that $G$ does not have (and, thus, nor does $H$). Then, the subgraph $A - B - C$ must be in $G$ and $H$. However, this implies that the subgraph $A - B - C$ is in $F$, which is a contradiction.
\end{proof}

Note that the theorem above does not hold if the word maximal is replaced by minimal. A simple counterexample is the triplex equivalence class that contains the MCCGs $A \aa B - C$ and $A - B \aa C$.

We say that a MCCG $G$ is blarger than another MCCG $H$ if every bidirected edge in $H$ is in $G$. The lemma above implies that every triplex equivalence class has a distinguished member, namely the blargest MCCG in the class. We show below how this distinguished member can be obtained from any other member of the class. By bidirecting an undirected connectivity component $K$ of a MCCG $G$, we mean replacing every edge $A - B$ in $G$ such that $A, B \in K$ with an edge $A \aa B$. Moreover, we say that the bidirecting is feasible if $K$ is a complete set.

\begin{lemma}\label{lem:bidirecting}
The graph $H$ resulting from performing a feasible bidirecting on a MCCG $G$ is a MCCG that is triplex equivalent to $G$.
\end{lemma}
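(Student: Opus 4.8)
The plan is to check four things in order: that $H$ is a CCG, that $H$ satisfies constraint C1, that $H$ satisfies constraint C2, and that $H$ is triplex equivalent to $G$. The first is immediate, since bidirecting only converts certain undirected edges into bidirected ones and so leaves every edge undirected or bidirected; for the same reason $G$ and $H$ have exactly the same adjacencies, which already settles one half of triplex equivalence.

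The combinatorial fact I would isolate first is that if $A - B$ is an undirected edge of $G$ with one endpoint in $K$, then both endpoints lie in $K$: if $B \in K = co_G(B)$, then $\{A\} \cup K$ is undirectedly connected and contains $K$, so $A \in K$ by maximality of $K$. Consequently the undirected edges of $H$ are exactly the undirected edges of $G$ having no endpoint in $K$, and every bidirected edge of $H$ incident to a node outside $K$ was already a bidirected edge of $G$. With this in hand, C1 and C2 for $H$ follow by contradiction with the corresponding constraints for $G$. For C1: an induced $A - C - B$ in $H$ with $C \aa D$ in $H$ would have $A - C$ and $C - B$ undirected in $H$, hence $A, B, C \notin K$, hence $C \aa D$ is in $G$; since adjacencies agree, $A - C - B$ with spouse $D$ of $C$ is then an induced subgraph of $G$, contradicting C1 for $G$. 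For C2: in a cycle $A - \ldots - B \aa A$ of $H$ every node of the undirected part lies outside $K$, so that part is an undirected path of $G$, and since $A \notin K$ the edge $B \aa A$ is in $G$ too, producing a cycle forbidden by C2 for $G$.

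For the remaining half of triplex equivalence I would prove both inclusions of the triplex sets. Bidirecting never deletes a bidirected edge nor turns one undirected, so every bidirected edge of $G$ survives in $H$; together with the preservation of adjacency and non-adjacency, this shows every triplex of $G$ is a triplex of $H$. For the converse, suppose $(\{A,C\},B)$ is a triplex of $H$ but not of $G$. Since the adjacencies coincide, $A$ and $C$ are non-adjacent and both adjacent to $B$ in $G$, so the only way it can fail to be a triplex in $G$ is that both $A - B$ and $B - C$ are undirected in $G$; for it to become a triplex in $H$ one of these must have been bidirected, say $A - B$, which forces $A, B \in K$, hence also $C \in K$ by the combinatorial fact applied to the undirected edge $B - C$. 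Then $A, C \in K$, and since the bidirecting is feasible $K$ is a complete set, so $A$ and $C$ are adjacent -- a contradiction. I expect this last step to be the main obstacle, and it is precisely where the feasibility hypothesis is used, to rule out the creation of a spurious triplex among nodes of $K$; everything else is routine bookkeeping with undirected connectivity components.
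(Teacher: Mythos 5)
Your proof is correct and follows essentially the same route as the paper's: reduce each constraint violation and each spurious triplex in $H$ back to one in $G$, using the fact that an undirected edge of $G$ with one endpoint in $K$ has both endpoints in $K$ (and hence gets bidirected), with feasibility of $K$ invoked exactly where you invoke it, to rule out a new triplex $A - B - C$ inside $K$. The only difference is presentational: you isolate that combinatorial fact explicitly, whereas the paper leaves it implicit in its case-by-case contradictions.
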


\begin{proof}
Let $K$ denote the undirected connectivity component of $G$ that got bidirected. First, we show that $H$ has no induced subgraph $A - C - B$ such that $C$ has some spouse $D$ in $H$. Assume the contrary. Then, $C, D \in K$ because, otherwise, $G$ would not be a MCCG. Therefore, $A, B, C, D \in K$ and, thus, the edges $A \aa C$ and $C \aa B$ must be in $H$, which is a contradiction.

Second, we show that $H$ has no cycle $V_1 - \ldots - V_n \aa V_1$. Assume the contrary. Then, $V_1, V_n \in K$ because, otherwise, $G$ would not be a MCCG. Therefore, $V_i \in K$ for all $1 \leq i \leq n$ and, thus, the edge $V_i \aa V_{i+1}$ must be in $H$ for all $1 \leq i < n$, which is a contradiction.

Third, note that $H$ has the same adjacencies as $G$. Fourth, note that all the triplexes in $G$ are in $H$ too. Finally, assume to the contrary that $H$ has a triplex $(\{A,C\},B)$ that $G$ does not have. Then, the induced subgraph $A - C - B$ must be in $G$ and, thus, $A, B, C \in K$. However, this implies that $K$ is not a complete set, which is a contradiction.
\end{proof}

\begin{table}[t]
\caption{Algorithm for learning MCCGs.}\label{tab:algorithm2}
\centering
\scalebox{0.8}{
\begin{tabular}{rl}
\\
\hline
\\
& Input: A probability distribution $p$ that is faithful to an unknown MCCG $G$.\\
& Output: The blargest MCCG $H$ in the triplex equivalent class of $G$.\\
\\
1 & Let $H$ denote the complete bidirected graph\\
2 & Set $l=0$\\
3 & Repeat while $l \leq |V|-2$\\
4 & \hspace{0.2cm} For each ordered pair of nodes $A$ and $B$ in $H$ such that $A \in ad_H(B)$ and $|ad_H(A) \setminus B| \geq l$\\
5 & \hspace{0.5cm} If there is some $S \subseteq ad_H(A) \setminus B$ such that $|S|=l$ and $A \ci_p B | S$ then\\
6 & \hspace{0.8cm} Set $S_{AB}=S_{BA}=S$\\
7 & \hspace{0.8cm} Remove the edge $A \aa B$ from $H$\\
8 & \hspace{0.2cm} Set $l=l+1$\\
9 & Replace every induced subgraph $A \aa B \aa C$ in $H$ such that $B \in S_{AC}$ with $A - B - C$\\
10 & If there is an edge $A \aa B$ in $H$ that violates the constraint C1 or C2 then\\
11 & \hspace{0.2cm} Replace the edge $A \aa B$ in $H$ with $A - B$\\
12 & \hspace{0.2cm} Go to line 10\\
\\
\hline
\\
\end{tabular}}
\end{table}

\begin{lemma}\label{lem:nomore}
If no feasible bidirecting can be performed on a MCCG $G$, then $G$ is the blargest MCCG in its triplex equivalence class.
\end{lemma}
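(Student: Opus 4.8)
The plan is to prove the contrapositive: if $G$ is not the blargest MCCG in its triplex equivalence class, then some feasible bidirecting can be performed on $G$. So assume $G$ is not blargest. By the result above, the class of $G$ has a blargest member $G^*$, and since $G \neq G^*$ while $G^*$ is blarger than $G$, the two graphs have the same adjacencies but $G^*$ has a bidirected edge $A \aa B$ that is not in $G$. Hence $A-B$ is an undirected edge of $G$, and $A,B$ lie in a common undirected connectivity component $K = co_G(A) = co_G(B)$ with $|K| \geq 2$. The whole proof then reduces to showing that $K$ is a complete set of $G$: once this is known, bidirecting $K$ is a feasible bidirecting on $G$ (and by Lemma \ref{lem:bidirecting} it produces a MCCG triplex equivalent to $G$), contradicting the hypothesis; so $G = G^*$.

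The first step would be a propagation argument showing that every node of $K$ has a spouse in $G^*$ that also lies in $K$. Let $S$ be the set of such nodes; then $A \in S$ because of $A \aa B$. I would show $S$ is closed under taking $G$-neighbours inside $K$: suppose $v \in S$ with spouse $e \in K$ in $G^*$, and $v - w$ is an edge of $G$ with $w \in K$. If $v \aa w$ is already in $G^*$ we are done, so assume $v - w$ is undirected in $G^*$. Note first that the edge between $v$ and $e$ is undirected in $G$ (a bidirected one would, together with an undirected path joining $v$ and $e$ inside $K$, close a cycle forbidden by constraint C2). Now triplex equivalence forces $e$ and $w$ to be adjacent in $G$, for otherwise $w - v \aa e$ would be a triplex of $G^*$ (of the form $A - B \aa C$) absent from $G$; hence $w-e$ is undirected in $G$, and the edge between $w$ and $e$ in $G^*$ cannot be undirected, since $v - w - e \aa v$ would then be a cycle forbidden by C2. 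Therefore $w \aa e$ is in $G^*$ and $w \in S$. Since the undirected subgraph of $G$ induced on $K$ is connected, iterating this from $A$ yields $S = K$.

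The second step finishes the argument. Suppose, for contradiction, that $K$ is not a complete set of $G$. As the undirected subgraph of $G$ on $K$ is connected but not complete, it contains an induced path $u_0 - u_1 - u_2$ with $u_0$ and $u_2$ non-adjacent in $G$. In $G$ the pair $(\{u_0,u_2\},u_1)$ is not a triplex (both edges are undirected), hence by triplex equivalence it is not a triplex of $G^*$ either; since $u_0,u_2$ remain non-adjacent in $G^*$, the only possibility is that both $u_0 - u_1$ and $u_1 - u_2$ are undirected in $G^*$. But then $u_0 - u_1 - u_2$ is an induced subgraph of $G^*$ of the form $A - C - B$ with $C = u_1$, and by the first step $u_1$ has a spouse in $G^*$ — contradicting constraint C1 for the MCCG $G^*$. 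Hence $K$ is complete and bidirecting $K$ is the desired feasible bidirecting, completing the contrapositive.

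I expect the spouse-propagation step to be the main obstacle: it is the place where triplex equivalence and both constraints C1 and C2 must be combined, and the case distinction on the type of the edge between $w$ and $e$ in $G^*$ (together with the auxiliary observation that $v$–$e$ must be undirected in $G$) is easy to overlook. Once that step is in place, the completeness step is short, since the ``no induced $A-C-B$ with $C$ having a spouse'' condition does essentially all the work.
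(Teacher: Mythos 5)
Your proof is correct. It shares the paper's overall strategy---a propagation along undirected paths inside the component $K$, driven at each step by triplex equivalence (to force the new node to be adjacent to the old spouse) and by constraint C2 (to force that new edge to be bidirected in the blargest graph $G^*$ and undirected in $G$)---but it is organized differently and ends with a different contradiction. The paper starts from a non-complete witness $V_1 - V_2 - V_3$ in $K$, follows an undirected $G$-path from it towards the edge that is bidirected in the blargest graph, and propagates adjacency to the far endpoint $V_n$ of the first bidirected edge backwards along that path, finishing with a triplex $V_1 \aa V_n \aa V_3$ that is present in the blargest graph but not in $G$, i.e.\ a violation of triplex equivalence. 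You instead propagate outward from the bidirected edge the invariant ``has a spouse in $G^*$ lying in $K$,'' conclude it holds for every node of $K$, and only then bring in the non-complete witness, deriving the contradiction from constraint C1 applied to $G^*$ at its middle node. What your version buys is a cleaner separation of the roles of the two MCCG constraints (C2 drives the propagation, C1 delivers the final blow) and it avoids the step the paper leaves most informal, namely extracting a specific undirected path from the witness to the bidirected edge and locating the first edge along it that becomes bidirected. The small points you gloss over (that the middle edge pair $u_0 - u_1 - u_2$ has $u_0,u_2$ genuinely non-adjacent rather than joined by a bidirected chord, and that $w \neq e$ in the propagation step) both follow immediately from C2 and simplicity of the graphs, so they are not gaps.
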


\begin{proof}
Assume to the contrary that $H$ and not $G$ is the blargest MCCG in the triplex equivalence class of $G$. Then, there must exist an edge $A - B$ in $G$ such that the edge $A \aa B$ is in $H$. Let $K$ denote the undirected connectivity component of $G$ such that $A, B \in K$. Note that $K$ cannot be a complete set, because no feasible bidirecting can be performed on $G$. Then, $G$ has an induced subgraph $V_1 - V_2 - V_3$ with $V_1, V_2, V_3 \in K$. Note that $H$ also has an induced subgraph $V_1 - V_2 - V_3$ because, otherwise, $G$ and $H$ would not be triplex equivalent. Then, $G$ must have a subgraph $V_1 - V_2 - V_3 - \ldots - V_{n-1} - V_n$ such that $V_1 - V_2 - V_3 - \ldots - V_{n-1} \aa V_n$ is a subgraph of $H$. Note that $V_{n-2}$ and $V_n$ must be adjacent in $G$ and $H$ because, otherwise, $G$ and $H$ would not be triplex equivalent. Then, the edge $V_{n-2} - V_n$ (respectively $V_{n-2} \aa V_n$) must be in $G$ (respectively $H$) by definition of MCCG. Likewise, the edge $V_i - V_n$ (respectively $V_i \aa V_n$) must be in $G$ (respectively $H$) for all $1 \leq i \leq n-3$. However, this implies that $V_1 - V_n - V_3$ is an induced subgraph of $G$ whereas $V_1 \aa V_n \aa V_3$ is an induced subgraph of $H$, which contradicts the assumption that $G$ and $H$ are triplex equivalent.
\end{proof}

\begin{theorem}\label{the:equivalence}
The blargest MCCG in a triplex equivalence class of MCCGs can be obtained from any member of the class by performing feasible bidirectings until no more can be performed.
\end{theorem}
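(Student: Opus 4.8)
The plan is to run the natural greedy procedure: starting from the given MCCG $G$, repeatedly choose an undirected connectivity component that is a complete set and contains at least one edge, bidirect it, and stop when no such component remains. Almost all of the work has already been done: Lemma~\ref{lem:bidirecting} guarantees that each step produces a MCCG triplex equivalent to the previous graph, and Lemma~\ref{lem:nomore} identifies the terminal graph. The lemma exhibiting a unique maximal set of bidirected edges for each triplex equivalence class (hence a unique blargest MCCG) supplies well-definedness of the output.

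First I would argue termination. A feasible bidirecting that actually changes the graph replaces every undirected edge with both ends in the chosen complete component $K$ --- of which there is at least one --- by a bidirected edge, and leaves every other edge untouched; in particular it strictly enlarges the set of bidirected edges. Since a simple graph over $V$ has at most $\binom{|V|}{2}$ edges, only finitely many such steps are possible, so the procedure halts. (A feasible bidirecting of a singleton component is vacuous and need never be considered, which is why I restrict attention to components containing an edge.) Invoking Lemma~\ref{lem:bidirecting} repeatedly along the way, every intermediate graph --- and hence the terminal graph $H$ --- is a MCCG triplex equivalent to $G$, so $H$ belongs to the triplex equivalence class of $G$.

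Next, when the procedure halts, no undirected connectivity component of $H$ with two or more nodes is complete, i.e.\ no feasible bidirecting can be performed on $H$. This is exactly the hypothesis of Lemma~\ref{lem:nomore}, which therefore yields that $H$ is the blargest MCCG in its triplex equivalence class, that is, in the class of $G$. Since the blargest member of a triplex equivalence class is unique, the graph produced does not depend on the order in which the feasible bidirectings were carried out, which completes the argument.

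I do not expect a serious obstacle here: once Lemmas~\ref{lem:bidirecting} and~\ref{lem:nomore} are available, all that remains is the termination bound (the monotone growth of the set of bidirected edges, together with the observation that vacuous bidirectings of singletons can be ignored) and the bookkeeping that triplex equivalence is preserved at every step, so that Lemma~\ref{lem:nomore} applies legitimately to $H$.
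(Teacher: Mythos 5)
Your proposal is correct and follows exactly the route the paper takes: the paper's proof is a one-line appeal to Lemmas~\ref{lem:bidirecting} and~\ref{lem:nomore}, and your argument simply fills in the termination and bookkeeping details that the paper leaves implicit. Nothing further is needed.
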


\begin{proof}
It follows from Lemmas \ref{lem:bidirecting} and \ref{lem:nomore}.
\end{proof}

By undirecting a set of bidirected edges in a MCCG, we mean the inverse operation of bidirecting an undirected connectivity component of a MCCG. In other words, the result of undirecting a set of bidirected edges in a MCCG $H$ is a MCCG $G$ such that the result of bidirecting an undirected connectivity component in $G$ is $H$. Moreover, we say that the undirecting is feasible if the corresponding bidirecting is feasible.

\begin{corollary}\label{cor:equivalence}
Any member of a triplex equivalence class of MCCGs can be obtained from any other member of the class by performing a sequence of feasible bidirectings and undirectings.
\end{corollary}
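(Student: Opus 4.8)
The plan is to route both members through the distinguished member of the class, namely the blargest MCCG, and then reverse one of the two routes. Let $G_1$ and $G_2$ be any two members of a triplex equivalence class of MCCGs, and let $B$ denote the blargest MCCG in that class, which exists by the lemma preceding Theorem~\ref{the:equivalence}. By Theorem~\ref{the:equivalence}, there is a sequence of feasible bidirectings transforming $G_1$ into $B$, and likewise a sequence of feasible bidirectings transforming $G_2$ into $B$.

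The next step is to reverse the second sequence. Write the second sequence as $G_2 = H_0, H_1, \ldots, H_k = B$, where each $H_{i-1} \to H_i$ is a feasible bidirecting. By Lemma~\ref{lem:bidirecting}, each $H_i$ is itself a MCCG triplex equivalent to $G_2$ (hence to every member of the class). Now, by the very definition of undirecting as the inverse operation of bidirecting, the step $H_i \to H_{i-1}$ is an undirecting of a set of bidirected edges in $H_i$; and since the corresponding bidirecting $H_{i-1} \to H_i$ was feasible, this undirecting is feasible. Therefore $B = H_k, H_{k-1}, \ldots, H_0 = G_2$ is a sequence of feasible undirectings transforming $B$ into $G_2$.

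Concatenating the two sequences yields a sequence of feasible bidirectings taking $G_1$ to $B$ followed by a sequence of feasible undirectings taking $B$ to $G_2$, which is a sequence of feasible bidirectings and undirectings transforming $G_1$ into $G_2$. This proves the corollary.

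\textbf{Main obstacle.} There is no serious obstacle here; the statement is an immediate consequence of Theorem~\ref{the:equivalence} together with the definitional fact that a feasible undirecting is exactly the inverse of a feasible bidirecting. The only point requiring a moment's care is to confirm that reversing a sequence of feasible bidirectings genuinely produces a sequence of \emph{feasible} undirectings through intermediate \emph{MCCGs} — and this is handled by invoking Lemma~\ref{lem:bidirecting} at each intermediate graph and by unwinding the definition of ``feasible'' for undirectings.
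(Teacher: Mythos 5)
Your proposal is correct and is precisely the argument the paper intends: the paper states the corollary without proof, implicitly relying on Theorem~\ref{the:equivalence} to route both graphs through the blargest member of the class and on the definition of a feasible undirecting as the inverse of a feasible bidirecting to reverse one of the two sequences. Your explicit check, via Lemma~\ref{lem:bidirecting}, that each intermediate graph is itself an MCCG in the same triplex equivalence class is a welcome addition rather than a deviation.
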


\subsection{Algorithm for Learning MCCGs}\label{sec:algorithm}

In this section, we present a constraint based algorithm for learning a MCCG a given probability distribution is faithful to. The algorithm, which can be seen in Table \ref{tab:algorithm2}, resembles the well-known PC algorithm \citep{Spirtesetal.1993}. It consists of two phases: The first phase (lines 1-8) aims at learning the adjacencies, whereas the second phase (lines 9-12) aims at learning the edge type for each adjacency learnt. Specifically, the first phase declares that two nodes are adjacent if and only if they are not separated by any set of nodes. Note that the algorithm does not test every possible separator (see line 5). Note also that the separators tested are tested in increasing order of size (see lines 2, 5 and 8). The second phase identifies the edge type for each pair of adjacent nodes by avoiding false triplexes (line 9) and enforcing the constraints C1 and C2 (lines 10-12).

\begin{theorem}
After line 12, $H$ is the blargest MCCG in the triplex equivalent class of $G$.
\end{theorem}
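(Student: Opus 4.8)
The plan is to organise the argument into four parts: (i) after line 8, $G$ and $H$ have the same adjacencies; (ii) lines 9--12 change only edge types, the loop of lines 10--12 terminates, and at line 12 the graph satisfies C1 and C2, so $H$ is then a MCCG with the adjacencies of $G$; (iii) an invariant saying that every edge which is undirected in $H$ at any moment after line 9 is undirected in every MCCG that is triplex equivalent to $G$; and (iv) $H$ has exactly the triplexes of $G$. Parts (i) and (iv) then give that $H$ is triplex equivalent to $G$, part (ii) that it is a MCCG, so $H$ lies in the triplex equivalence class of $G$; and part (iii) forces the bidirected-edge set of $H$ to contain that of every member of the class. Since $H$ is itself in the class, this set both contains and is contained in the unique maximal one, so equals it, and $H$ is the blargest member $G^{*}$.

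For (i) I would reuse the argument of Lemma \ref{lem:adjacencies}, with $ad_H(A)$ in place of $ad_H(A)\cup ad_H(ad_H(A))$ and the local separation base of $G$ in place of conditions C1--C2 of Lemma \ref{lem:conditions}: if $A$ and $B$ are adjacent in $G$ then $A \nci_p B | S$ for all $S$ by faithfulness, so $A \aa B$ is never deleted; if $A$ and $B$ are non-adjacent and lie in different undirected connectivity components then $A \ci_p B | \emptyset$, while if they lie in the same component then $A \ci_p B | ne_G(A)$ (Theorem \ref{the:local1} and faithfulness, via Corollary \ref{cor:equivalence2}), and since $ne_G(A) \subseteq ad_G(A)\setminus B \subseteq ad_H(A)\setminus B$ at all times the required separator is always available in line 5, so line 7 deletes the edge. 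Part (ii) is routine: lines 9--12 never add or delete an edge; each pass through lines 10--12 replaces one bidirected edge by an undirected one, so the loop terminates; and the exit condition of line 10 is exactly the negation of ``C1 or C2 is violated'' — a C1 violation amounts to a bidirected edge one of whose endpoints is the middle node of an induced two-edge undirected path, and a C2 violation to a bidirected edge whose endpoints are joined by an undirected path — so at line 12 the graph satisfies C1 and C2.

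The core is (iii), by induction on the edge replacements. Base case (line 9): if $A \aa B \aa C$ is replaced by $A - B - C$ then $B \in S_{AC}$; since $S_{AC}$ separates $A$ from $C$ in $p$, hence in $G$, and the middle node of a triplex path is a triplex node (so, were $B$ in the separator, the length-two path $A \aa B \aa C$ of $G$ would be $S_{AC}$-open), the triplex $(\{A,C\},B)$ is not in $G$; as $A$ and $C$ are non-adjacent in $G$ this forces $A - B - C$ with both edges undirected in $G$, and the same conclusion holds in every triplex-equivalent MCCG $F$, which has the same adjacencies and triplexes. Inductive step (line 11): the replaced edge $A \aa B$ witnesses a violation of C1 or C2 in the current $H$; by the hypothesis the undirected edges occurring in that witness (the two undirected edges of an induced $X - A - Y$ or $X - B - Y$, or an undirected path from $A$ to $B$) are undirected in $F$, so were $A \aa B$ in $F$ then $F$ would violate C1 or C2 — impossible — hence $A - B$ is undirected in $F$. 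For (iv): each triplex $(\{A,C\},B)$ of $G$ has a bidirected edge among $A \aa B$, $B \aa C$, and by the reasoning just used that edge is never turned undirected (at line 9 that would make a non-triplex of $G$; at line 11 it would make $G$ itself violate C1 or C2, via the invariant), so the triplex persists in $H$; conversely, if $H$ had a triplex $(\{A,C\},B)$ absent from $G$, then $G$ would contain the induced subgraph $A - B - C$ with $B$ having no spouse (else C1 of $G$ fails), hence every separator of $A$ and $C$ in $G$ — in particular $S_{AC}$ — contains $B$, so line 9 would already have replaced $A \aa B \aa C$ by $A - B - C$, a contradiction.

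Combining (i)--(iv) as outlined gives $H = G^{*}$. I expect the main obstacle to be making part (iii) watertight: one must be precise about the exact shape of the C1/C2 witness that triggers a line-11 replacement, and about the order in which line 9 carries out its (simultaneous) replacements, to be certain that the edge turned undirected is genuinely forced in every member of the class; the observation that a separator of two non-adjacent nodes joined through a spouseless node must contain that node — used both in the base case and in the converse part of (iv) — is cleanest via the simplified definition of $Z$-open path for MCCGs.
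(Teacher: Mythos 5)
Your proposal is correct and follows essentially the same route as the paper: the adjacency argument in your part (i) is the paper's first paragraph verbatim, and your parts (ii)--(iv) are a careful expansion of the paper's terse second paragraph, which simply asserts that ``lines 9--12 perform only necessary replacements'' --- exactly the invariant you formalize in (iii). The extra care you take (termination of the loop, the simultaneity of the line-9 replacements, the spouseless-middle-node observation) addresses details the paper leaves implicit, but does not change the underlying argument.
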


\begin{proof}
First, we prove that $G$ and $H$ have the same adjacencies after line 8. Consider any pair of nodes $A$ and $B$ in $G$. If $A \in ad_G(B)$, then $A \nci_p B | S$ for all $S \subseteq V \setminus \{A, B\}$ by the faithfulness assumption. Consequently, $A \in ad_H(B)$ at all times. On the other hand, if $A \notin ad_G(B)$, then $A \ci_p B$ or $A \ci_p B | ne_G(A)$ by the faithfulness assumption and Theorem \ref{the:local1}. Note that, as mentioned before, $ne_G(A) \subseteq ad_H(A) \setminus B$ at all times. Therefore, there will exist some $S$ in line 5 such that $A \ci_p B | S$ and, thus, the edge $A \aa B$ will be removed from $H$ in line 7. Consequently, $A \notin ad_H(B)$ after line 8.

Second, $G$ and $H$ must be triplex equivalent after line 12 because, as shown above, they have the same adjacencies and lines 9-12 perform only necessary replacements. Actually, for the same reason, $H$ must be the blargest MCCG in the triplex equivalent class of $G$.
\end{proof}

\subsection{Discussion}\label{sec:discussion2}

This section has aimed at solving the problem of how to represent the result of marginalizing out some nodes in an AMP CG. We have introduced maximal covariance-concentration graphs (MCCGs), a new family of graphical models that solves this problem partially. However, if we forget for a moment our motivation to develop MCCGs and treat AMP CGs and MCCGs as two competing families of graphical models, then one may want to know when one is more suitable than the other. For instance, AMP CGs may be preferred when a causal order of the nodes exists. This heuristic is perfectly reasonable but it may fail if the order is partial. For instance, consider the AMP CG $A \ra B \la C \ra D \la E \ra F \la G$. Marginalize out the nodes $C$ and $E$. Then, the resulting independence model can be represented by the MCCG $A \aa B \aa D \aa F \aa G$, but it cannot be represented by any AMP CG despite the existence of the partial order $\{A < B, F < G\}$. On the other hand, MCCGs may be preferred when latent variables exist. Again, this heuristic is perfectly reasonable but it may fail. For instance, consider the AMP CG $A \ra B \ra C \ra D \la E \la F \la G$. Marginalize out the nodes $B$ and $F$. Then, the resulting independence model can be represented by the AMP CG $A \ra C \ra D \la E \la G$, but it cannot be represented by any MCCG despite the existence of the latent variables $B$ and $F$. In summary, these two examples show that there are independence models that can be represented by one family and not by the other. Therefore, we believe that it is more beneficial to see these two families as complementary rather than as competing.

It is also worth assessing the merits of MCCGs with respect to other families of graphical models such as maximal ancestral graphs, summary graphs and MC graphs. A maximal ancestral graph (MAG) is a graph whose every edge is undirected, directed or bidirected, and that satisfies certain topological constraints \citep{RichardsonandSpirtes2002}. Among the topological constraints, only the following is relevant in this paper: A MAG cannot have a subgraph of the form $A \oa B - C$, where the circle represents an unspecified end, i.e. an arrow tip or nothing. This constraint clearly implies that not every MCCG is a MAG. However, every independence model induced by a MCCG can be induced by a MAG, as the proposition below shows. Therefore, in this sense, MCCGs are a subfamily of MAGs. Before we can state the mentioned proposition, we need to introduce the separation criterion for MAGs. A route $V_{1}, \ldots, V_{n}$ in a MAG $G$ is called strictly descending if $V_i \ra V_{i+1}$ is in $G$ for all $1 \leq i < n$. The strict ascendants of a set of nodes $X$ is the set $san_G(X) = \{V_1 |$ there is a strictly descending route from $V_1$ to $V_n$ in $G$, $V_1 \notin X$ and $V_n \in X \}$. A node $B$ in a path $\rho$ in $G$ is called a triplex node in $\rho$ if $A \oa B \ao C$ is a subpath of $\rho$. Let $X$, $Y$ and $Z$ denote three pairwise disjoint subsets of $V$. A path $\rho$ in $G$ is said to be $Z$-open when (i) every triplex node in $\rho$ is in $Z \cup san_G(Z)$, and (ii) every non-triplex node $B$ in $\rho$ is outside $Z$. When there is no $Z$-open path in $G$ between a node in $X$ and a node in $Y$, we say that $X$ is separated from $Y$ given $Z$ in $G$ and denote it as $X \ci_G Y | Z$. The independence model induced by $G$ is the set of separation statements $X \ci_G$ $Y | Z$.

\begin{proposition}\label{pro:mags}
Every MCCG can be translated into a Markov equivalent MAG by just replacing every subgraph $A \aa B - C$ by $A \aa B \la C$.
\end{proposition}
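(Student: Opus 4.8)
The plan is to define $G'$ as the graph obtained from $G$ by keeping every bidirected edge and reorienting each undirected edge $B - C$ of $G$ according to which of its endpoints has a spouse in $G$: if neither does, keep $B - C$; if only $B$ does, replace it by $C \ra B$; if both do, replace it by $B \aa C$. (This is the stated rewrite applied to all of $G$ at once, so that an undirected edge receiving an arrowhead at each of its ends becomes bidirected.) I then have to establish (a) that $G'$ is a MAG, and (b) that $G$ and $G'$ induce the same independence model. Note that $G'$ has the same adjacencies as $G$, and that the MAG $m$-separation criterion in the excerpt makes sense for $G'$ whether or not $G'$ is a MAG, so there is no circularity.

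For (a), the one structural fact that drives everything is that every directed edge of $G'$ points from a node that is spouseless in $G$ to a node that has a spouse in $G$. Consequently no node can simultaneously be the source of one directed edge of $G'$ and the target of another, so $G'$ has no directed path of length greater than one; in particular it has no directed cycle, and it has no almost directed cycle (that would need either a directed path of length $\geq 2$ between the two ends of a bidirected edge, impossible, or a single directed edge between them, impossible in a simple graph). Moreover $G'$ has no subgraph $A \oa B - C$: an undirected edge at $B$ in $G'$ forces $B$ to be spouseless in $G$, whereas any arrowhead at $B$ in $G'$ — on a bidirected edge or on a directed edge into $B$ — forces $B$ to have a spouse in $G$. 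Hence $G'$ is ancestral. It is also maximal: any two nonadjacent nodes of $G'$ are nonadjacent in $G$, hence separated in $G$ by Theorem~\ref{the:local1}, and — granting (b) — therefore separated in $G'$.

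For (b) I would use the natural correspondence between a path $\rho$ of $G$ and the path $\rho'$ of $G'$ on the same sequence of nodes (undo the reorientations), plus two observations. The first is local: $B$ is a triplex node of $\rho$ if and only if $B$ is a collider of $\rho'$, because a subpath $A \aa B - C$ turns into $A \aa B \la C$ (since $B$ has a spouse, the undirected edge is oriented into $B$), and $A \aa B \aa C$ and $A - B \aa C$ are handled symmetrically. The second controls conditioning: starting from a \emph{shortest} $Z$-open path on either side, a non-triplex (equivalently non-collider) internal node $B$ with $A - B - C$ on the path must be spouseless in $G$ — otherwise constraint C1 forces $A$ and $C$ adjacent, constraint C2 forces the edge $A - C$ undirected, and replacing $A - B - C$ by $A - C$ gives a strictly shorter $Z$-open path (the end marks at $A$ and $C$ on the two undirected edges incident to them are unchanged, so their triplex/collider status and conditioning requirement are unchanged); dually, a collider $B$ of a shortest $\rho'$ has at least one bidirected edge incident to it in $G$, hence a spouse, so — since every node of $san_{G'}(Z)$ is spouseless in $G$ (a strictly descending route in $G'$ has length $\leq 1$ with a spouseless source) — $B$ must actually lie in $Z$, not merely in $san_{G'}(Z)$. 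Using the simplified definition of $Z$-open path that the paper has already shown is equivalent for MCCGs, these facts let me transfer a shortest $Z$-open path from $G$ to $G'$ and back, giving $X \nci_G Y | Z \iff X \nci_{G'} Y | Z$.

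The main obstacle is the minimality bookkeeping in (b): one has to check carefully that replacing a length-two undirected subpath $A - B - C$ by the chord $A - C$ (which C1 and C2 guarantee exists and is undirected) really preserves $Z$-openness on \emph{both} the MCCG side and the MAG side — i.e., that the edge-end marks seen at $A$ and at $C$, and hence their collider/triplex status, do not move — and that the description of $san_{G'}(Z)$ is tight enough to force every triplex node of the transferred path into $Z$ rather than only into $Z \cup san_{G'}(Z)$. Making the two-spouse case explicit (the stated replacement sends such an undirected edge to a bidirected edge, which is why it must be read as a simultaneous rewrite rather than a sequential one) is also needed for the argument to be well-posed.
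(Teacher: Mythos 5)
The paper states Proposition~\ref{pro:mags} without any proof, so there is nothing of the author's to compare your argument against; what you have written is in effect the missing proof, and it is essentially correct. Your reading of the replacement as a simultaneous rewrite (so that an undirected edge both of whose ends have spouses becomes bidirected) matches the paper's own remark that the translation ``may create new bidirected edges'', and the key structural observation --- every arrowhead of $G'$ sits at a node that has a spouse in $G$, every tail or undirected end at a spouseless node --- does yield ancestrality, the absence of $A \oa B - C$, the fact that every node of $san_{G'}(Z)$ is spouseless, and maximality via Theorem~\ref{the:local1} once Markov equivalence is established. One step is overstated: the claimed local equivalence ``$B$ is a triplex node of $\rho$ if and only if $B$ is a collider of $\rho'$'' holds only in the ``only if'' direction, since a non-triplex configuration $A - B - C$ at a node $B$ whose spouse lies off the path becomes a collider in $G'$. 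This does not break the proof, because the two directions handle the exceptional configuration differently: going from $G'$ to $G$, such a $B$ is forced into $Z$ (having a spouse, it cannot lie in $san_{G'}(Z)$) and is then admissible under the paper's original, non-simplified openness criterion for MCCGs (``not in $Z$ or has some spouse''), so no shortest-path device is needed on that side; going from $G$ to $G'$, your minimality argument, using C1 and C2 to shortcut $A - B - C$ to the undirected chord $A - C$ without disturbing the edge ends seen at $A$ and $C$, eliminates exactly these configurations from a shortest $Z$-open path. Stating that asymmetry explicitly, and weakening the ``if and only if'' to ``only if'' plus the case analysis you already perform, would make the write-up fully rigorous.
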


Note that the replacement in the proposition above may create new bidirected edges. For instance, the MCCG $A \aa B - C \aa D$ gets translated into the MAG $A \aa B \aa C \aa D$. Despite the proposition above, there are cases where a MCCG is a more natural representation of the domain at hand than a MAG and, thus, MCCGs are still worth consideration. The following example illustrates this.

\begin{example}
Consider the AMP CG $A \la B \ra C - D - E$ and call it $G$. Consider the independence model resulting from $G$ by marginalizing out $B$. This model can be represented by the MCCG $A \aa C - D - E$, or by the MAGs $A \oa C \la D \oo E$ or $A \oa C \aa D \ra E$. Note that these are all the MAGs that can represent the model. However, the MAGs suggest the existence of the causal relationship $C \la D$ or $D \ra E$, although neither exists in $G$. On the other hand, the MCCG does not suggest any causal relationship and, thus, it is preferable.
\end{example}

The reason why MAGs conflict with the original model in the example above is that MAGs were introduced to represent the result of marginalization and/or conditioning in directed acyclic graphs, not in AMP CGs. Recall that MCCGs have been introduced to represent the result of marginalization in certain AMP CGs. Two other families of graphical models that induce all the independence models induced by MAGs (and thus by MCCGs) are summary graphs \citep{CoxandWermuth1996} and MC graphs \citep{Koster2002}. However, these families have a rather counterintuitive and undesirable feature: Not every missing edge corresponds to a separation \citep[p. 1023]{RichardsonandSpirtes2002}. MCCGs and MAGs, on the other hand, do not have this disadvantage (see, respectively, Theorem \ref{the:local1} and \citep[Corollary 4.19]{RichardsonandSpirtes2002}).

At the beginning of this section, we have noted that the new family includes both covariance and concentration graphs as subfamilies. Thus, it allows to model the covariance and concentration matrices of a Gaussian probability distribution jointly by a single graph, rather than modeling the former by a covariance graph and the latter by a concentration graph. We have argued that, by doing so, the new family may model more accurately the probability distribution. We show below an example that illustrates this.

\begin{example}
Consider a Gaussian probability distribution $p$ that is faithful to the MCCG $G$ below. Recall from Theorem \ref{the:faithfulness} that such a probability distribution exists.

\begin{table}[H]
\centering
\scalebox{0.75}{
\begin{tabular}{c}
\begin{tikzpicture}[inner sep=1mm]
\node at (0,0) (A) {$A$};
\node at (1,0) (B) {$B$};
\node at (0,-1) (C) {$C$};
\node at (1,-1) (D) {$D$};
\path[-] (A) edge (B);
\path[-] (A) edge (C);
\path[<->] (B) edge (D);
\path[<->] (C) edge (D);
\end{tikzpicture}
\\
$G$
\end{tabular}}
\end{table}

The covariance graph and the concentration graph of $p$ are depicted by the graphs $H$ and $F$ below.

\begin{table}[H]
\centering
\scalebox{0.75}{
\begin{tabular}{cc}
\begin{tikzpicture}[inner sep=1mm]
\node at (0,0) (A) {$A$};
\node at (1,0) (B) {$B$};
\node at (0,-1) (C) {$C$};
\node at (1,-1) (D) {$D$};
\path[<->] (A) edge (B);
\path[<->] (A) edge (C);
\path[<->] (B) edge (C);
\path[<->] (B) edge (D);
\path[<->] (C) edge (D);
\end{tikzpicture}
&
\begin{tikzpicture}[inner sep=1mm]
\node at (0,0) (A) {$A$};
\node at (1,0) (B) {$B$};
\node at (0,-1) (C) {$C$};
\node at (1,-1) (D) {$D$};
\path[-] (A) edge (B);
\path[-] (A) edge (C);
\path[-] (A) edge (D);
\path[-] (B) edge (C);
\path[-] (B) edge (D);
\path[-] (C) edge (D);
\end{tikzpicture}
\\
$H$ & $F$
\end{tabular}}
\end{table}

Now, note that $B \ci_p C | A$ because $B \ci_G C | A$. However, $B \nci_H C | A$ and $B \nci_F C | A$.
\end{example}

Finally, we briefly describe below some extensions to the work presented in this section that we are currently exploring.

\begin{itemize}
\item Despite the example above, we do not discard the possibility that some Gaussian probability distributions are modeled more accurately by a covariance graph plus a concentration graph than by a MCCG. We would like to study when this occurs, if at all.

\item We would like to remove the constraint that MCCGs are simple graphs to allow the possibility of having an undirected and a bidirected edge between two nodes.

\item We would like to extend MCCGs with directed edges, so that they can represent the result of marginalization and/or conditioning in AMP CGs.

\item We would like to find an efficient parameterization of MCCGs, and a factorization rule for the probability distributions that satisfy the independencies represented by a MCCG.

\item The correctness of our learning algorithm lies upon the assumption that $p$ is faithful to some MCCG. This is a strong requirement that we would like to weaken, e.g. by replacing it with the milder assumption that $p$ satisfies the composition property. However, as with AMP CGs (recall Section \ref{sec:discussion}), the extension of Meek's conjecture to MCCGs does not hold, as the example below illustrates. This compromises the development of score+search learning algorithms that are correct and efficient under the composition property assumption. It is not clear to us whether it also does it for constraint based algorithms. This is something we plan to study.
\end{itemize}

\begin{example}
Consider the MCCGs $F$ and $H$ below.

\begin{table}[H]
\centering
\scalebox{0.75}{
\begin{tabular}{cc}
\begin{tikzpicture}[inner sep=1mm]
\node at (1,0) (A) {$A$};
\node at (-1,-1) (B) {$B$};
\node at (0,-1) (C) {$C$};
\node at (1,-1) (D) {$D$};
\path[<->] (A) edge (D);
\path[-] (B) edge (C);
\path[-] (C) edge (D);
\end{tikzpicture}
&
\begin{tikzpicture}[inner sep=1mm]
\node at (1,0) (A) {$A$};
\node at (-1,-1) (B) {$B$};
\node at (0,-1) (C) {$C$};
\node at (1,-1) (D) {$D$};
\path[-] (A) edge (D);
\path[-] (A) edge (C);
\path[-] (B) edge (C);
\path[-] (C) edge (D);
\end{tikzpicture}\\
$F$ & $H$
\end{tabular}}
\end{table}

Then, $I(H)=\{B \ci_H A | C, B \ci_H A | \{C, D\}, B \ci_H D | C, B \ci_H D | \{C, A\}, B \ci_H \{A, D\} | C \}$. One can easily confirm by using the definition of separation that $I(H) \subseteq I(F)$. One can also confirm by using Corollary \ref{cor:equivalence} that there is no MCCG that is triplex equivalent to $F$ or $H$. Finally, it is obvious that one cannot transform $F$ into $H$ by adding a single edge.
\end{example}

\section{Identifying (In)Dependencies from MCCGs}\label{sec:dependencies}

In this section, we present a graphical criterion for reading dependencies from a MCCG $G$ of a probability distribution $p$, under the assumption that $G$ satisfies some topological constraints and $p$ satisfies the graphoid properties, weak transitivity and composition. We prove that the criterion is sound and complete in certain sense.

A MCCG of a WTC graphoid $p$ is a MCCG $G$ such that
\begin{itemize}
\item the edge $A \aa B$ is not in $G$ only if $A \ci_p B$, and
\item the edge $A - B$ is not in $G$ only if $A \ci_p B | K \setminus \{A, B\}$, where $K$ denotes the undirected connectivity component of $G$ that contains $A$ and $B$.
\end{itemize}

Note that the separation criterion introduced in Section \ref{sec:mccg} is sound and complete for identifying independencies in $p$ from $G$: It is sound in the sense that it only identifies (true) independencies in $p$, and it is complete in the sense it identifies all the independencies in $p$ that can be identified by studying $G$ alone. Soundness follows as follows. Recall from Corollary \ref{cor:equivalence2} that the separations identified in $G$ by this graphical criterion correspond with those in $cp(G, {\mathcal Q})$ for any partition ${\mathcal Q}$ of $V$ that is consistent with $G$. Specifically, let ${\mathcal Q}$ denote the undirected connectivity components of $G$. Then, $p$ satisfies the independencies corresponding to the separations in the pairwise separation base of $G$ relative to ${\mathcal Q}$, by definition of $G$. Thus, $p$ satisfies the independencies corresponding to the separations in $cp(G, {\mathcal Q})$, because $p$ is a WTC graphoid. Completeness follows from the fact that there are WTC graphoids that are faithful to $G$ (Theorem \ref{the:faithfulness} and Corollary \ref{cor:wtc}) and, thus, $p$ may be one of them (whether $p$ is really faithful to $G$ is impossible to know on the sole basis of $G$).

Note that every edge in a MCCG of a WTC graphoid does not correspond to a dependence. Since this may be undesirable in some cases, we strengthen the definition above as follows. A minimal MCCG (MMCCG) of a WTC graphoid $p$ is a MCCG $G$ such that
\begin{itemize}
\item the edge $A \aa B$ is not in $G$ if and only if $A \ci_p B$, and
\item the edge $A - B$ is not in $G$ if and only if $A \ci_p B | K \setminus \{A, B\}$, where $K$ denotes the undirected connectivity component of $G$ that contains $A$ and $B$.
\end{itemize}

Note that, by Corollary \ref{cor:equivalence2}, we can alternatively define that a MCCG (respectively MMCCG) of a WTC graphoid $p$ is a MCCG $G$ such that
\begin{itemize}
\item the edge $A \aa B$ is not in $G$ only if (respectively if and only if) $A \ci_p B$, and
\item the edge $A - B$ is not in $G$ only if (respectively if and only if) $A \ci_p B | ne_G(A)$.
\end{itemize}

An interesting feature of a MMCCG $G$ of a WTC graphoid $p$ is that it allows us to identify not only independencies in $p$ as shown above but also dependencies in $p$ as we show below. Specifically, we introduce below a sound and complete graphical criterion for identifying dependencies in $p$ from $G$, under the assumption that $G$ has no cycle with both undirected and bidirected edges. This assumption implies that the connectivity components of $G$ form a kind of tree, as the following example illustrates. The remark below formalizes this observation. Note that both covariance and concentrations graphs always satisfy this assumption.

\begin{table}[H]
\centering
\scalebox{0.75}{
\begin{tabular}{c}
\begin{tikzpicture}[inner sep=1mm]
\node at (0,0) (A) {$A$};
\node at (0,-1) (B) {$B$};
\node at (1,0) (C) {$C$};
\node at (2,0) (D) {$D$};
\node at (1,-1) (E) {$E$};
\node at (3,0) (F) {$F$};
\path[<->] (A) edge (C);
\path[<->] (B) edge (E);
\path[-] (C) edge (D);
\path[-] (C) edge (E);
\path[-] (D) edge (E);
\path[<->] (D) edge (F);
\end{tikzpicture}
\end{tabular}}
\end{table}

\begin{remark}\label{rem:tree}
Assume that $G$ has no cycle with both undirected and bidirected edges. Let $K_u$ be any undirected connectivity component of $G$. Let $K_b$ be any bidirected connectivity component of $G$. Then, $K_u \cap K_b$ contains at most one node. Moreover, if $K_u \cap K_b$ contains the node $A$, then every path between a node in $K_u$ and a node in $K_b$ passes through $A$.
\end{remark}

Given a MMCCG $G$ of a WTC graphoid $p$, we know that the following dependencies hold in $p$ by definition of $G$:
\begin{itemize}
\item $A \nci_p B$ for every edge $A \aa B$ in $G$, and
\item $A \nci_p B | K \setminus \{A, B\}$ for every edge $A - B$ in $G$, where $K$ denotes the undirected connectivity component of $G$ that contains $A$ and $B$.
\end{itemize}

We call these dependencies the dependence base of $p$. Further dependencies in $p$ can be derived from the dependence base via the WTC graphoid properties. For this purpose, we rephrase the WTC graphoid properties in their contrapositive form as follows. Symmetry $Y \nci_p X | Z \Rightarrow X \nci_p Y | Z$. Decomposition $X \nci_p Y | Z \Rightarrow X \nci_p Y \cup W | Z$. Weak union $X \nci_p Y | Z \cup W \Rightarrow X \nci_p Y \cup W | Z$. Contraction $X \nci_p Y \cup W | Z \Rightarrow X \nci_p Y | Z \cup W \lor X \nci_p W | Z$ is problematic for deriving new dependencies because it contains a disjunction in the consequent and, thus, we split it into two properties: Contraction1 $X \nci_p Y \cup W | Z \land X \ci_p Y | Z \cup W \Rightarrow X \nci_p W | Z$, and contraction2 $X \nci_p Y \cup W | Z \land X \ci_p W | Z \Rightarrow X \nci_p Y | Z \cup W$. Likewise, intersection gives rise to intersection1 $X \nci_p Y \cup W | Z \land X \ci_p Y | Z \cup W \Rightarrow X \nci_p W | Z \cup Y$, and intersection2 $X \nci_p Y \cup W | Z \land X \ci_p W | Z \cup Y \Rightarrow X \nci_p Y | Z \cup W$. Note that intersection1 and intersection2 are equivalent and, thus, we refer to them simply as intersection. Similarly, weak transitivity gives rise to weak transitivity1 $X \nci_p K | Z \land K \nci_p Y | Z \land X \ci_p Y | Z \Rightarrow X \nci_p Y | Z \cup K$, and weak transitivity2 $X \nci_p K | Z \land K \nci_p Y | Z \land X \ci_p Y | Z \cup K \Rightarrow X \nci_p Y | Z$. Finally, composition $X \nci_p Y \cup W | Z \Rightarrow X \nci_p Y | Z \lor X \nci_p W | Z$ gives rise to composition1 $X \nci_p Y \cup W | Z \land X \ci_p Y | Z \Rightarrow X \nci_p W | Z$, and composition2 $X \nci_p Y \cup W | Z \land X \ci_p W | Z \Rightarrow X \nci_p Y | Z$. Since composition1 and composition2 are equivalent, we refer to them simply as composition. The independence in the antecedent of any of the properties above holds if the corresponding separation holds in $G$. This is the best solution we can hope for because, as shown above, the separation criterion is sound and complete for WTC graphoids. Moreover, this solution does not require more information than what it is available, namely $G$ or equivalently the dependence base of $p$. We define the WTC graphoid closure of the dependence base of $p$ as the set of dependencies that are in the dependence base of $p$ plus those that can be derived from it by applying the nine properties above. Note that we can alternatively define the dependence base of $p$ as the following dependencies and the results below would still hold \cite[p. 1083]{Pennaetal.2009}:
\begin{itemize}
\item $A \nci_p B$ for every edge $A \aa B$ in $G$, and
\item $A \nci_p B | ne_G(A) \setminus B$ for every edge $A - B$ in $G$.
\end{itemize}

We can now introduce our graphical criterion for identifying dependencies in a WTC graphoid from its MCCG. It is worth mentioning this graphical criterion subsumes those developed by \cite{Pennaetal.2009} and \cite{Penna2013} for reading dependencies from the covariance graph and concentration graph of a WTC graphoid, respectively.

\begin{definition}\label{def:joined}
Let $G$ be the MCCG of a WTC graphoid $p$. Let $X$, $Y$ and $Z$ denote three pairwise disjoint subsets of $V$. We say that $X$ is joined to $Y$ given $Z$ in a MCCG $G$, denoted as $X \de_G Y | Z$, if there exist two nodes $A \in X$ and $B \in Y$ such that there exists a \textbf{single} path $\rho_{A:B}$ between $A$ and $B$ in $G$ that is $U$-open with $Z \subseteq U \subseteq X \cup Y \cup Z \setminus \{A,B\}$. 
\end{definition}

Hereinafter, given a node $C$ in a path $\rho_{A:B}$ between two nodes $A$ and $B$ in a MCCG, we denote by $\rho_{A:C}$ the subpath of $\rho_{A:B}$ between $A$ and $C$.

\begin{remark}\label{rem:noinpath}
In Definition \ref{def:joined}, we can assume without loss of generality that $A$ and $B$ are the only nodes in $\rho_{A:B}$ that are in $X$ and $Y$, respectively.
\end{remark}

\begin{proof}
Let $B' \neq B$ be closest node to $A$ that is in $\rho_{A:B}$ and $Y$. Then, $\rho_{A:B'}$ is the only path between $A$ and $B'$ in $G$ that is $U$-open. To see it, assume to the contrary that there is a second such path $\varrho_{A:B'}$. Note that $\varrho_{A:B'} \cup \rho_{B':B}$ cannot be $U$-open because, otherwise, there would be a second path between $A$ and $B$ in $G$ that is $U$-open, which is a contradiction. Therefore, one of the following cases must occur.

\begin{description}

\item[Case 1] $B'$ is a non-triplex node in $\varrho_{A:B'} \cup \rho_{B':B}$ and $B' \in U$. However, that $B' \in U$ together with the fact that $\rho_{A:B}$ is $U$-open imply that either $B'$ is a triplex node in $\rho_{A:B}$ or $B'$ is a non-triplex node in $\rho_{A:B}$ that has some spouse in $G$. In either case $B'$ has some spouse in $G$ and, thus, $\varrho_{A:B'} \cup \rho_{B':B}$ is $U$-open, which is a contradiction.

\item[Case 2] $B'$ is a triplex node in $\varrho_{A:B'} \cup \rho_{B':B}$ and $B' \notin U$. However, that $B' \notin U$ together with the fact that $\rho_{A:B}$ is $U$-open imply that $B'$ is a non-triplex node in $\rho_{A:B}$. Moreover, that $B'$ is a triplex node in $\varrho_{A:B'} \cup \rho_{B':B}$ implies that $B'$ has some spouse in $G$. Then, removing $B'$ from $\rho_{A:B}$ results in a second path between $A$ and $B$ in $G$ by definition of MCCGs which, moreover, is $U$-open, which is a contradiction.

\end{description}

The proof for $A$ is similar.

\end{proof}

\begin{remark}\label{rem:nospouse}
In Definition \ref{def:joined}, we can assume without loss of generality that the nodes in $U$ that are not in $Z$ or $\rho_{A:B}$ have no spouse in $G$.
\end{remark}

\begin{proof}
Let $C$ be a node that is in $U$ but not in $Z$ or $\rho_{A:B}$. Assume that $C$ has some spouse in $G$. Then, $\rho_{A:B}$ is the only path between $A$ and $B$ in $G$ that is $(U \setminus C)$-open. To see it, assume to the contrary that there is a second such path $\varrho_{A:B}$. Note that $C$ must be a non-triplex node in $\varrho_{A:B}$ because, otherwise, that path would also be $U$-open, which is a contradiction. For the same reason, $C$ cannot have any spouse in $G$. However, this contradicts the assumptions made.
\end{proof}

\begin{remark}
In Definition \ref{def:joined}, we can assume without loss of generality that $U$ contains exactly the nodes in $X \cup Y \cup Z$ that are in $Z$ or $\rho_{A:B}$ or that have no spouse in $G$.
\end{remark}

\begin{proof}
By Remark \ref{rem:nospouse}, we can assume without loss of generality that the nodes in $U$ that are not in $Z$ or $\rho_{A:B}$ have no spouse in $G$. Let $C \in X \cup Y \cup Z$ be a node that is not in $Z$ or $\rho_{A:B}$ and that has no spouse in $G$. Then, $\rho_{A:B}$ is the only path between $A$ and $B$ in $G$ that is $(U \cup C)$-open, because $C$ can neither activate new paths nor deactivate $\rho_{A:B}$. Repeating this reasoning until no such node $C$ exists leads to the desired result.
\end{proof}

The following two theorems prove that the graphical criterion defined above is sound and complete in some sense. We start by proving some auxiliary results.

\begin{lemma}\label{lem:b}
Let $G$ be a MMCCG of a WTC graphoid $p$. Let $A$ and $B$ denote two nodes that are in the same bidirected connectivity component of $G$. If $A \de_G B | U$, then $A \nci_p B | U$ is in the WTC graphoid closure of the dependence base of $p$.
\end{lemma}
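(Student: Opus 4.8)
\emph{Proof proposal.} Throughout I use the standing assumption of this section that $G$ has no cycle with both an undirected and a bidirected edge. The plan is to prove the lemma by induction on the pair $(n,|U|)$ ordered lexicographically, where $n$ is the length of the (unique) $U$-open path $\rho_{A:B}$ witnessing $A \de_G B | U$.

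A preliminary structural observation does most of the work. Let $K_b$ be the bidirected connectivity component containing $A$ and $B$. An undirected edge inside $K_b$ would close a cycle together with a bidirected path between its endpoints, so $G_{K_b}$ has only bidirected edges; and since the connectivity components of $G$ form a tree (Remark \ref{rem:tree}), every path of $G$ between $A$ and $B$ stays inside $K_b$. Hence $\rho_{A:B}$ is a chain $A\aa V_1\aa\cdots\aa V_{n-1}\aa B$, every $V_i$ is a triplex node of it, and ``$\rho_{A:B}$ is $U$-open'' is equivalent to $\{V_1,\dots,V_{n-1}\}\subseteq U$; in particular $A\ci_G B | W$ holds whenever $W$ omits some $V_i$.

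Base case and first reduction. If $U=\emptyset$ then $n=1$, $\rho_{A:B}=A\aa B$, and $A\nci_p B$ is in the dependence base. If some $C\in U$ is an internal node of $\rho_{A:B}$ (this always happens when $n\geq2$, since then the set $\{V_1,\dots,V_{n-1}\}\subseteq U$ is nonempty), I split $\rho_{A:B}$ into $\rho_{A:C}$ and $\rho_{C:B}$. A shortcutting argument shows these are the unique $(U\setminus C)$-open paths between their endpoints — a second such path could be spliced with the other half and shortened, producing a second $U$-open $A$--$B$ path and contradicting uniqueness of $\rho_{A:B}$. So $A\de_G C | U\setminus C$ and $C\de_G B | U\setminus C$ along strictly shorter paths; by the induction hypothesis $A\nci_p C | U\setminus C$ and $C\nci_p B | U\setminus C$ lie in the closure, while $A\ci_p B | U\setminus C$ holds in $G$ by the structural observation. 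Weak transitivity1 then gives $A\nci_p B | (U\setminus C)\cup\{C\}$, i.e. $A\nci_p B | U$.

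The remaining case is $n=1$ (so $\rho_{A:B}=A\aa B$) with $U\neq\emptyset$; this is where the real obstacle sits. Fix $C\in U$. Removing $C$ from $U$ cannot unblock any other $A$--$B$ path (all internal nodes of $A$--$B$ paths inside $K_b$ are triplex nodes, and the one blocking a given path is never $C$), so $A\aa B$ is still the unique $(U\setminus C)$-open $A$--$B$ path and the induction hypothesis (smaller $|U|$) yields $A\nci_p B | U\setminus C$ in the closure. It then remains to ``put $C$ back'': derive $A\nci_p B | U$ from $A\nci_p B | U\setminus C$ by contraction2 or intersection2, for which it suffices to exhibit one of the separations $A\ci_G C | U\setminus C$, $B\ci_G C | U\setminus C$, $A\ci_G C | (U\setminus C)\cup\{B\}$ or $B\ci_G C | (U\setminus C)\cup\{A\}$. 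I expect the proof that at least one of these four separations holds to be the crux: one argues from the tree structure of the connectivity components and the uniqueness of $\rho_{A:B}$ that $C$ cannot be reached from both $A$ and $B$ by open paths in a manner that would otherwise assemble a second $U$-open $A$--$B$ path, with the subcase where $C$ lies in an undirected connectivity component presumably handed off to the companion lemma for undirected components (the two proved together by the same induction). The rest — the shortcutting lemmas establishing uniqueness of the sub-paths, and the bookkeeping of the nested induction — is routine.
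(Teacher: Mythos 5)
Your overall strategy is genuinely different from the paper's. The paper does not induct on the witnessing path: it first restricts to the bidirected component $K$ containing $A$ and $B$, invokes the known covariance-graph result to get $A \nci_p B | U \cap K$ into the closure, and then spends essentially the whole proof enlarging the conditioning set from $U \cap K$ to $U$, by partitioning $U \setminus K$ into sets $S_A$, $S_B$, $S$ according to how their nodes attach to $A$ and $B$, and discharging each group with contraction2, intersection, and a weak-transitivity-style contradiction against the uniqueness of the $U$-open path. Your reduction for paths of length $n \geq 2$ (split at an internal node $C \in U$, show $\rho_{A:C}$ and $\rho_{C:B}$ are the unique $(U \setminus C)$-open subpaths by a splicing argument, note $A \ci_G B | U \setminus C$, and apply weak transitivity1) is correct and is a legitimate self-contained replacement for the paper's appeal to the external covariance-graph theorem. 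That part of your proposal is fine.

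The problem is that your remaining case --- $\rho_{A:B} = A \aa B$ with $U \neq \emptyset$, where you must ``put back'' each $C \in U$ after obtaining $A \nci_p B | U \setminus C$ --- is exactly where the paper's proof does almost all of its work, and you do not prove it: you write that you ``expect'' one of the four separations $A \ci_G C | U \setminus C$, $B \ci_G C | U \setminus C$, $A \ci_G C | (U \setminus C) \cup B$, $B \ci_G C | (U \setminus C) \cup A$ to hold, and that the subcase where $C$ sits in an undirected component is ``presumably handed off'' to the companion lemma. Neither claim is established, and the second is off-target: what is needed there is a purely graph-theoretic separation statement (to be converted into an independence via the soundness of the separation criterion and then fed to contraction2 or intersection), not an instance of Lemma \ref{lem:u}, which produces dependencies rather than separations. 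The assertion itself is true but not obvious; for $C \in U \cap K$ it follows from a splicing argument against the uniqueness of $\rho_{A:B}$, while for $C \in U \setminus K$ one needs the tree structure of Remark \ref{rem:tree} to force any two open paths from $A$ and $B$ towards $C$ to exit $K$ through a common gateway node, from which a second $U$-open $A$--$B$ path is assembled --- this is precisely the paper's Case 2 for the nodes in $S$, and it is the crux of the lemma. Until you supply that argument (or an equivalent one), the proof is incomplete at its essential step.
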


\begin{proof}
Let $K$ denote the bidirected connectivity component that contains $A$ and $B$. Let $S_A$ denote the nodes in $U \setminus K$ that are in $ne_G(A)$ or connected to $A$ by a path that passes through $ne_G(A)$. Let $S_B$ denote the nodes in $U \setminus K$ that are in $ne_G(B)$ or connected to $B$ by a path that passes through $ne_G(B)$. Let $S$ denote the nodes in $U \setminus K \setminus S_A \setminus S_B$ that are connected to $A$ or $B$ by a path that passes through $sp_G(A)$ or $sp_G(B)$, respectively.

Note that $A \de_G B | U \cap K$. Note also that the path that makes this statement hold only contains bidirected edges, because all its nodes are in $K$ by Remark \ref{rem:tree}. Thus, $A \nci_p B | U \cap K$ is in the WTC graphoid closure of the dependence base of $p$ \cite[Theorem 5.1]{Penna2013}. Then, $A \nci_p B \cup S_B | U \cap K$ by decomposition. Moreover, $A \ci_G S_B | U \cap K$ follows from Remark \ref{rem:tree}. Therefore, $A \nci_p B | U \cap K \cup S_B$ by contraction2 and $A \cup S_A \nci_p B | U \cap K \cup S_B$ by decomposition. Moreover, $S_A \ci_G B | U \cap K \cup S_B$ follows from Remark \ref{rem:tree}. Therefore, $A \nci_p B | U \cap K \cup S_B \cup S_A$ by symmetry and contraction2.

Let $D$ be any node in $S$. Then, one of the following cases must occur.

\begin{description}

\item[Case 1] $A \ci_G D | U \cap K \cup S_B \cup S_A$ or $B \ci_G D | U \cap K \cup S_B \cup S_A$. Assume without loss of generality that $A \ci_G D | U \cap K \cup S_B \cup S_A$. Then, $A \nci_p B \cup D | U \cap K \cup S_B \cup S_A$ by decomposition and $A \nci_p B | U \cap K \cup S_B \cup S_A \cup D$ by contradiction2.

\item[Case 2] $A \nci_G D | U \cap K \cup S_B \cup S_A$ and $B \nci_G D | U \cap K \cup S_B \cup S_A$. Then, there are two paths $\rho_{A:D}$ and $\rho_{B:D}$ that are $(U \cap K \cup S_B \cup S_A)$-open. Note that $\rho_{A:D}$ and $\rho_{B:D}$ are of the forms $A \aa \ldots \aa C - \ldots D$ and $B \aa \ldots \aa C - \ldots D$, respectively, by Remark \ref{rem:tree}. Note also that $\rho_{A:C}$ does not contain $B$ because, otherwise, $\rho_{A:D}$ would not be $(U \cap K \cup S_B \cup S_A)$-open since $B$ would be a triplex node in $\rho_{A:D}$ that is not in $U \cap K \cup S_B \cup S_A$. Likewise, $\rho_{B:C}$ does not contain $A$. Now, let $C' \neq C$ denote the closest node to $A$ and $B$ that is in $\rho_{A:C}$ and $\rho_{B:C}$. Then, $\rho_{A:C'} \cup \rho_{C':B}$ is a path which, moreover, is $(U \cap K \cup S_B \cup S_A)$-open. To see the latter, note that $C'$ is a triplex node in both $\rho_{A:D}$ and $\rho_{A:C'} \cup \rho_{C':B}$ and, moreover, it is in $U \cap K \cup S_B \cup S_A$ because $\rho_{A:D}$ is $(U \cap K \cup S_B \cup S_A)$-open. However, this implies that there is a second path between $A$ and $B$ that is $U$-open, which contradicts the assumption that $A \de_G B | U$.

\end{description}

Therefore, by repeating the reasoning above for the rest of the nodes in $S$, we can conclude that $A \nci_p B | U \cap K \cup S_B \cup S_A \cup S$.

Finally, note that $A \ci_G U \setminus K \setminus S_B \setminus S_A \setminus S | U \cap K \cup S_B \cup S_A \cup S$ because there is no path between $A$ and $U \setminus K \setminus S_B \setminus S_A \setminus S$. Then, $A \nci_p B \cup U \setminus K \setminus S_B \setminus S_A \setminus S | U \cap K \cup S_B \cup S_A \cup S$ by decomposition and $A \nci_p B | U$ by contraction2.

Note that the above derivation of $A \nci_p B | U$ only made use of the dependencies in dependence base of $p$ and the nine properties introduced at the beginning of this section. Thus, $A \nci_p B | U$ is in the WTC graphoid closure of the dependence base of $p$.
\end{proof}

\begin{lemma}\label{lem:u}
Let $G$ be a MMCCG of a WTC graphoid $p$. Let $A$ and $B$ denote two nodes that are in the same undirected connectivity component of $G$. If $A \de_G B | U$, then $A \nci_p B | U$ is in the WTC graphoid closure of the dependence base of $p$.
\end{lemma}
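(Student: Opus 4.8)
The plan is to establish the lemma as the mirror image of Lemma~\ref{lem:b}, interchanging the roles of undirected and bidirected edges throughout. Let $K$ denote the undirected connectivity component of $G$ that contains $A$ and $B$.

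\emph{Step~1 (localization in $K$).} I would first show that the unique $U$-open path $\rho$ witnessing $A \de_G B | U$ lies entirely inside $K$, and hence uses only undirected edges. If $V_i \aa V_{i+1}$ were the first bidirected edge on $\rho$, then $V_i \in K$ (its prefix $A - \cdots - V_i$ being undirected), whereas $V_{i+1}$ lies in a bidirected connectivity component $K_b$ with $K \cap K_b = \{V_i\}$ by Remark~\ref{rem:tree}; but then the suffix of $\rho$ from $V_{i+1} \in K_b$ to $B \in K$ would, again by Remark~\ref{rem:tree}, have to pass through $V_i$ a second time, which is impossible on a path. The same argument shows that \emph{every} $A$--$B$ path in $G$ stays within $K$. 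Since the internal nodes of such a path all lie in $K$ and are non-triplex, the path is $U$-open if and only if it is $(U \cap K)$-open; consequently $\rho$ is also the unique $(U \cap K)$-open $A$--$B$ path, i.e.\ $A \de_G B | U \cap K$, witnessed by a path lying wholly within the concentration graph $G_K$. Because $G$ is a MMCCG, $G_K$ is the minimal concentration graph of $p$ over $K$ and the part of the dependence base of $p$ supported on $K$ coincides with the dependence base of $G_K$; so by the dependency-reading criterion for concentration graphs \cite{Pennaetal.2009} (the counterpart of the result invoked in Lemma~\ref{lem:b}), $A \nci_p B | U \cap K$ belongs to the WTC graphoid closure of the dependence base of $p$.

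\emph{Step~2 (reinstating the nodes of $U$ outside $K$).} I would then enlarge the conditioning set from $U \cap K$ back to $U$, following the bookkeeping of Lemma~\ref{lem:b} with $ne_G$ and $sp_G$ swapped. Let $S_A$ (respectively $S_B$) be the nodes of $U \setminus K$ reachable from $A$ (respectively $B$) by a path through $sp_G(A)$ (respectively $sp_G(B)$), and let $S$ consist of the remaining nodes of $U \setminus K$ reachable from $A$ or $B$ by a path through $ne_G(A)$ or $ne_G(B)$. By Remark~\ref{rem:tree}, any path from $A$ to a node of $S_B$ must run through $B$, where it switches from an undirected to a bidirected edge, turning $B$ into a triplex node that is not in $U \cap K$; hence $A \ci_G S_B | U \cap K$, and symmetrically $S_A \ci_G B | U \cap K \cup S_B$. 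Repeated use of decomposition, symmetry and contraction2 then promotes $A \nci_p B | U \cap K$ first to $A \nci_p B | U \cap K \cup S_B \cup S_A$, and, processing the nodes of $S$ one at a time, to $A \nci_p B | U \cap K \cup S_B \cup S_A \cup S$: for each $D \in S$, either $A$ or $B$ is $G$-separated from $D$ given the current set (so contraction2 absorbs $D$), or both reach $D$ by open paths, which one stitches at their last common node inside $K$ to obtain a second $U$-open $A$--$B$ path, contradicting $A \de_G B | U$. Finally $A$ is $G$-separated from the leftover nodes $U \setminus K \setminus S_A \setminus S_B \setminus S$ (they lie in components not reaching $A$), so one further application of decomposition and contraction2 gives $A \nci_p B | U$. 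Since every inference uses only the dependence base of $p$ and the nine contrapositive WTC graphoid properties, $A \nci_p B | U$ lies in the WTC graphoid closure of that base.

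\emph{Expected main obstacle.} The two points that will need genuine care, rather than mechanical dualization, are: (i) making the reduction to $G_K$ in Step~1 fully rigorous, i.e.\ verifying that the derivation furnished by the concentration-graph criterion never uses dependencies involving nodes outside $K$ --- Lemma~\ref{lem:b} takes the analogous fact for granted for its bidirected counterpart, so I would want to confirm the undirected version is equally available; and (ii) the stitching argument of Step~2, since a subpath lying inside $K$ is now entirely undirected and therefore has no triplex nodes, so the part of the Lemma~\ref{lem:b} argument that forces certain nodes into the conditioning set (or out of certain subpaths) cannot be transplanted verbatim and must be re-derived from the uniqueness of $\rho$ together with Remark~\ref{rem:tree}.
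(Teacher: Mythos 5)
Your plan reproduces the paper's proof almost step for step: the same reduction to $A \de_G B | U \cap K$ via Remark~\ref{rem:tree}, the same appeal to the concentration-graph criterion of \cite{Pennaetal.2009} to get $A \nci_p B | U \cap K$ into the closure, the same sets $S_A$, $S_B$, $S$ (with $ne_G$ and $sp_G$ swapped relative to Lemma~\ref{lem:b}), and the same final absorption of the unreachable remainder of $U$ by decomposition and contraction2. Step~1 and the $S_A$, $S_B$ bookkeeping of Step~2 are sound as written.

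The genuine gap is in your treatment of a node $D \in S$. Your dichotomy --- either $A$ or $B$ is $G$-separated from $D$ given the current set, or both reach $D$ by open paths which you stitch into a second $U$-open $A$--$B$ path --- is not exhaustive. There is a third case, which the paper isolates as its Case~2.1: both $A \nci_G D$ and $B \nci_G D$ hold, but every open path from $A$ to $D$ passes through $B$ (or symmetrically through $A$). Here stitching yields nothing: the prefix up to $B$ of such an $A$--$D$ path is just an open $A$--$B$ path, which may well be $\rho_{A:B}$ itself, so no contradiction with $A \de_G B | U$ is available, and contraction2 is not applicable either since $A \nci_G D$ given the current set. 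The paper's fix is to show that $A \ci_G D | U \cap K \cup S_B \cup S_A \cup B$ (any open $A$--$D$ path surviving the addition of $B$ to the conditioning set would force $B$ to have a spouse, whence a shortcut around $B$ would give an open $A$--$D$ path avoiding $B$, a contradiction) and then to absorb $D$ using the \emph{intersection} property, $A \nci_p B \cup D | \cdot \land A \ci_p D | \cdot \cup B \Rightarrow A \nci_p B | \cdot \cup D$, rather than contraction2. This case is exactly where the mirror of Lemma~\ref{lem:b} breaks: in the bidirected component an open $A$--$D$ path cannot pass through $B$, because $B$ would be a triplex node there lying outside the conditioning set, whereas in the undirected component $B$ is a harmless non-triplex node. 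You correctly flagged the stitching step as the place needing genuine care, but the missing case and the switch from contraction2 to intersection are the substance of that care, and without them the derivation for $S$ does not go through.
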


\begin{proof}
Let $K$ denote the undirected connectivity component that contains $A$ and $B$. Let $S_A$ denote the nodes in $U \setminus K$ that are in $sp_G(A)$ or connected to $A$ by a path that passes through $sp_G(A)$. Let $S_B$ denote the nodes in $U \setminus K$ that are in $sp_G(B)$ or connected to $B$ by a path that passes through $sp_G(B)$. Let $S$ denote the nodes in $U \setminus K \setminus S_A \setminus S_B$ that are connected to $A$ or $B$ by a path that passes through $ne_G(A)$ or $ne_G(B)$, respectively.

Note that $A \de_G B | U \cap K$. Note also that the path that makes this statement hold only contains undirected edges, because all its nodes are in $K$ by Remark \ref{rem:tree}. Thus, $A \nci_p B | U \cap K$ is in the WTC graphoid closure of the dependence base of $p$ \cite[Theorem 5]{Pennaetal.2009}. Then, $A \nci_p B \cup S_B | U \cap K$ by decomposition. Moreover, $A \ci_G S_B | U \cap K$ follows from Remark \ref{rem:tree}. Therefore, $A \nci_p B | U \cap K \cup S_B$ by contraction2 and $A \cup S_A \nci_p B | U \cap K \cup S_B$ by decomposition. Moreover, $S_A \ci_G B | U \cap K \cup S_B$ follows from Remark \ref{rem:tree}. Therefore, $A \nci_p B | U \cap K \cup S_B \cup S_A$ by symmetry and contraction2.

Let $D$ be any node in $S$. Then, one of the following cases must occur.

\begin{description}

\item[Case 1] $A \ci_G D | U \cap K \cup S_B \cup S_A$ or $B \ci_G D | U \cap K \cup S_B \cup S_A$. Assume without loss of generality that $A \ci_G D | U \cap K \cup S_B \cup S_A$. Then, $A \nci_p B \cup D | U \cap K \cup S_B \cup S_A$ by decomposition and $A \nci_p B | U \cap K \cup S_B \cup S_A \cup D$ by contradiction2.

\item[Case 2] $A \nci_G D | U \cap K \cup S_B \cup S_A$ and $B \nci_G D | U \cap K \cup S_B \cup S_A$. Then, one of the following cases must occur.

\begin{description}

\item[Case 2.1] All the paths between $A$ and $D$ that are $(U \cap K \cup S_B \cup S_A)$-open pass through $B$ or all the paths between $B$ and $D$ that are $(U \cap K \cup S_B \cup S_A)$-open pass through $A$. Assume without loss of generality that all the paths between $A$ and $D$ that are $(U \cap K \cup S_B \cup S_A)$-open pass through $B$. Since $B \notin U \cap K \cup S_B \cup S_A$, $B$ must be a non-triplex node in all these paths. Therefore, none of these paths is $(U \cap K \cup S_B \cup S_A \cup B)$-open because, otherwise, $B$ would have to have some spouse in $G$ and, thus, removing $B$ from any of these paths would result in a path between $A$ and $D$ by definition of MCCGs which, moreover, would be $(U \cap K \cup S_B \cup S_A)$-open and would not pass through $B$, which contradicts the assumption that such a path does not exist. Consequently, $A \ci_G D | U \cap K \cup S_B \cup S_A \cup B$. Then, $A \nci_p B \cup D | U \cap K \cup S_B \cup S_A$ by decomposition and $A \nci_p B | U \cap K \cup S_B \cup S_A \cup D$ by intersection.

\item[Case 2.2] There are two paths $\rho_{A:D}$ and $\rho_{B:D}$ that are $(U \cap K \cup S_B \cup S_A)$-open and such that they do not pass through $B$ and $A$, respectively. Note that $\rho_{A:D}$ and $\rho_{B:D}$ are of the forms $A - \ldots - C \aa \ldots D$ and $B - \ldots - C \aa \ldots D$, respectively, by Remark \ref{rem:tree}. Then, one of the following cases must occur.

\begin{description}

\item[Case 2.2.1] $C$ is the only node that is in $\rho_{A:C}$ and $\rho_{B:C}$. Then, $\rho_{A:C} \cup \rho_{C:B}$ is a path which, moreover, is $(U \cap K \cup S_B \cup S_A)$-open. To see the latter, note that $C$ is a non-triplex node in $\rho_{A:C} \cup \rho_{C:B}$ and it has some spouse in $G$. However, this implies that there is a second path between $A$ and $B$ that is $U$-open, which contradicts the assumption that $A \de_G B | U$.

\item[Case 2.2.2] $C$ is not the only node that is in $\rho_{A:C}$ and $\rho_{B:C}$. Then, let $C' \neq C$ denote the closest node to $A$ and $B$ that is in $\rho_{A:C}$ and $\rho_{B:C}$. Then, $\rho_{A:C'} \cup \rho_{C':B}$ is a path which, moreover, is $(U \cap K \cup S_B \cup S_A)$-open. To see the latter, note that $C'$ is a non-triplex node in both $\rho_{A:D}$ and $\rho_{A:C'} \cup \rho_{C':B}$ and, moreover, it is not in $U \cap K \cup S_B \cup S_A$ because $\rho_{A:D}$ is $(U \cap K \cup S_B \cup S_A)$-open. However, this implies that there is a second path between $A$ and $B$ that is $U$-open, which contradicts the assumption that $A \de_G B | U$.

\end{description}

\end{description}

\end{description}

Therefore, by repeating the reasoning above for the rest of the nodes in $S$, we can conclude that $A \nci_p B | U \cap K \cup S_B \cup S_A \cup S$.

Finally, note that $A \ci_G U \setminus K \setminus S_B \setminus S_A \setminus S | U \cap K \cup S_B \cup S_A \cup S$ because there is no path between $A$ and $U \setminus K \setminus S_B \setminus S_A \setminus S$. Then, $A \nci_p B \cup U \setminus K \setminus S_B \setminus S_A \setminus S | U \cap K \cup S_B \cup S_A \cup S$ by decomposition and $A \nci_p B | U$ by contraction2.

Note that the above derivation of $A \nci_p B | U$ only made use of the dependencies in dependence base of $p$ and the nine properties introduced at the beginning of this section. Thus, $A \nci_p B | U$ is in the WTC graphoid closure of the dependence base of $p$.
\end{proof}

\begin{theorem}\label{the:sound}
Let $G$ be a MMCCG of a WTC graphoid $p$. If $X \de_G Y | Z$, then $X \nci_p Y | Z$ is in the WTC graphoid closure of the dependence base of $p$.
\end{theorem}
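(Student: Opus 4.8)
The plan is to reduce the statement to one about two nodes and then induct on the number of times the witnessing path crosses between connectivity components of $G$; the base of the induction is exactly Lemmas \ref{lem:b} and \ref{lem:u}, and the inductive step splices the two pieces together with weak transitivity1.

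\emph{Reduction to two nodes.} By Definition \ref{def:joined}, $X \de_G Y | Z$ supplies nodes $A \in X$, $B \in Y$, a set $U$ with $Z \subseteq U \subseteq X \cup Y \cup Z \setminus \{A,B\}$, and a path $\rho_{A:B}$ that is the unique $U$-open path between $A$ and $B$; I would write $U = Z \cup U_X \cup U_Y$ with $U_X = U \cap X$ and $U_Y = U \cap Y$. Once $A \nci_p B | U$ is known to lie in the WTC graphoid closure of the dependence base, a short chain of the contrapositive forms of symmetry, weak union and decomposition moves $U_X$ onto the left side and $U_Y$ onto the right side of the statement, and two further decompositions fill $X$ and $Y$ up, giving $X \nci_p Y | Z$ in the closure. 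None of these steps uses an independence antecedent, so it suffices to prove the two-node version: $A \de_G B | U$ implies $A \nci_p B | U$ is in the closure.

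\emph{Induction.} Since $G$ has no cycle with both undirected and bidirected edges, Remark \ref{rem:tree} tells me the connectivity components of $G$ are glued in a forest along cut vertices, so $\rho_{A:B}$ runs through a chain of components $K_0, K_1, \dots, K_r$ with consecutive members of opposite kind (one undirected, one bidirected) meeting in single nodes $C_1, \dots, C_r$, and $A \in K_0$, $B \in K_r$. I would induct on $r$, the hypothesis being that $A' \de_G B' | U'$ with a witnessing path of fewer than $r$ transitions yields $A' \nci_p B' | U'$ in the closure. If $r = 0$ the path lies inside one component, so $A$ and $B$ share a bidirected (resp.\ undirected) connectivity component and Lemma \ref{lem:b} (resp.\ Lemma \ref{lem:u}) is exactly the claim. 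For $r \ge 1$ set $C = C_r$, a triplex node of $\rho_{A:B}$, so $C \in U$, and split $\rho_{A:B} = \rho_{A:C} \cup \rho_{C:B}$, where $\rho_{C:B} \subseteq K_r$ has $0$ transitions and $\rho_{A:C}$ has $r-1$. By Remark \ref{rem:tree} the node $C$ separates $A$ from $B$; let $V_B$ consist of $C$ together with every node all of whose paths to $A$ meet $C$, let $V_A = \{C\} \cup (V \setminus V_B)$, so $V_A \cap V_B = \{C\}$, $A \in V_A$, $B \in V_B$, and every $A$--$B$ path meets $C$, and put $U_A = (U \cap V_A) \setminus C$, $U_B = (U \cap V_B) \setminus C$, so that $U = U_A \cup \{C\} \cup U_B$ is a disjoint union.

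\emph{Gluing and combination.} Arguing exactly as in Remark \ref{rem:noinpath}, I would show $\rho_{A:C}$ is the unique $(U_A \cup U_B)$-open path between $A$ and $C$ and $\rho_{C:B}$ the unique $(U_A \cup U_B)$-open path between $C$ and $B$: a second such path, glued to the complementary half at $C$, would produce a second $U$-open $A$--$B$ path --- at $C$ it stays a triplex node because its two incident edges are of opposite kind, and if they were not, then $C$ has a spouse (it sits on a bidirected edge of $\rho_{A:B}$), so C1 and C2 let me shortcut past $C$ and reach an $A$--$B$ path missing $C$, which is impossible --- contradicting $A \de_G B | U$. Hence $A \de_G C | U_A \cup U_B$ and $C \de_G B | U_A \cup U_B$, so by the induction hypothesis and by Lemma \ref{lem:b} or \ref{lem:u} (the second piece having $0$ transitions), both $A \nci_p C | U_A \cup U_B$ and $C \nci_p B | U_A \cup U_B$ are in the closure. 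The same shortcut argument shows no $A$--$B$ path is $(U_A \cup U_B)$-open, because it must meet $C \notin U_A \cup U_B$, where $C$ is either a triplex node (excluded) or a non-triplex node with a spouse (shortcuttable); hence $A \ci_G B | U_A \cup U_B$, and therefore $A \ci_p B | U_A \cup U_B$ by soundness of the separation criterion for $p$ (Corollary \ref{cor:equivalence2} together with the definition of a MCCG of $p$). Applying weak transitivity1 with pivot node $C$ and conditioning set $U_A \cup U_B$ now gives $A \nci_p B | U_A \cup U_B \cup C = U$ in the closure, closing the induction. Finally, a routine remark is that if $A$ and $B$ are in different connected components of $G$ no pair can witness $X \de_G Y | Z$, so this case does not arise.

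\emph{Main obstacle.} I expect the delicate part to be the two shortcut arguments at the transition node $C$: whenever an alternative $(U_A \cup U_B)$-open path (or an alternative $(U_A\cup U_B)$-open $A$--$B$ path) would force $C$ to become a non-triplex node on some concatenated path, I must verify that the spouse $C$ necessarily has --- because it lies on a bidirected edge of $\rho_{A:B}$ --- together with constraints C1 and C2 produces a strictly shorter open path that bypasses $C$, contradicting either the uniqueness of $\rho_{A:B}$ or the cut-vertex property of $C$; and I must check throughout that the various concatenations are genuinely paths, which rests on $V_A \cap V_B = \{C\}$. Everything else is bookkeeping with the nine contrapositive WTC-graphoid rules.
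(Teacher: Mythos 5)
Your proposal is correct and follows essentially the same route as the paper's proof: reduce to the two-node statement $A \nci_p B \mid U$, induct on the number of connectivity components the witnessing path traverses with Lemmas \ref{lem:b} and \ref{lem:u} as the base case, peel off a cut vertex $C$ (the paper takes the one adjacent to $A$'s component, you take the one adjacent to $B$'s, which is immaterial), establish $A \de_G C \mid U\setminus C$, $C \de_G B \mid U\setminus C$ and $A \ci_G B \mid U\setminus C$ via Remark \ref{rem:tree}, and combine with weak transitivity1. Your extra detail on the shortcut argument at $C$ only fleshes out what the paper compresses into one sentence.
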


\begin{proof}
Let $\rho_{A:B}$ and $U$ denote the path and the set of nodes that make $X \de_G Y | Z$ hold. Then, $A \de_G B | U$. We show below that $A \nci_p B | U$, which implies $X \nci_p Y | Z$ by symmetry, decomposition and weak union.

Let $m$ denote the number of connectivity components $\rho_{A:B}$ passes through. If $m=1$, then the result holds by Lemma \ref{lem:b} or \ref{lem:u}. Assume as induction hypothesis that the result holds for all $m < n$. We now prove it for $m=n$. Let $C$ denote the farthest node from $A$ that is in $\rho_{A:B}$ and in the same connectivity component as $A$. Note that $C \in U$. Note also that $\rho_{A:C}$ and $\rho_{C:B}$ are the only paths between $A$ and $C$ and between $C$ and $B$ that are $(U \setminus C)$-open because, otherwise, there would be a second path between $A$ and $B$ that is $U$-open by Remark \ref{rem:tree}, which contradicts $A \de_G B | U$. Then, $A \de_G C | U \setminus C$ and $C \de_G B | U \setminus C$ and, thus, $A \nci_p C | U \setminus C$ and $C \nci_p B | U \setminus C$ by the induction hypothesis. Note that $A \ci_G B | U \setminus C$ by Remark \ref{rem:tree}. Then, $A \nci_p B | U$ by weak transitivity1.

Note that the above derivation of $X \nci_p Y | Z$ only made use of the dependencies in dependence base of $p$ and the nine properties introduced at the beginning of this section. Thus, $X \nci_p Y | Z$ is in the WTC graphoid closure of the dependence base of $p$.
\end{proof}

\begin{theorem}\label{the:complete}
Let $G$ be a MMCCG of a WTC graphoid $p$. If $X \nci_p Y | Z$ is in the WTC graphoid closure of the dependence base of $p$, then $X \de_G Y | Z$.
\end{theorem}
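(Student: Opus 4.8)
The plan is to prove the theorem by showing that the set $S$ of all triples $(X,Y,Z)$ of pairwise disjoint subsets of $V$ with $X \de_G Y | Z$ contains the dependence base of $p$ and is closed under each of the nine contrapositive WTC graphoid properties listed at the beginning of this section, with the independence antecedent of a property interpreted as the corresponding separation in $G$. Since the WTC graphoid closure of the dependence base is, by definition, the smallest set of dependence statements that contains the dependence base and is closed under the nine rules, it will follow that every dependence in the closure lies in $S$, which is exactly the statement of the theorem. Before the rule checks I would normalize, using the three remarks following Definition \ref{def:joined}, any witness $(A,B,U,\rho_{A:B})$ of $X\de_G Y|Z$ so that $A$ and $B$ are the only nodes of $\rho_{A:B}$ in $X$ and $Y$ and $U$ consists of $Z$ together with spouseless nodes of $(X\cup Y)\setminus\{A,B\}$; and I would record at the outset the consequence that any such normalized witness path $\rho_{A:B}$ is $Z$-open. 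Indeed, its non-triplex nodes lie outside $U\supseteq Z$, and each of its triplex nodes lies in $U\subseteq X\cup Y\cup Z\setminus\{A,B\}$ and, being internal to $\rho_{A:B}$, lies outside $X\cup Y$ by Remark \ref{rem:noinpath}, hence in $Z$. This reduces the criterion, up to bookkeeping of spouseless nodes that can never lie on a relevant path, to the existence of a \emph{unique} $Z$-open path between suitable endpoints, and it is the lever for all the rule checks.

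For the base case, an edge $A\aa B$ gives $A\de_G B|\emptyset$ because the edge is the only $\emptyset$-open path between $A$ and $B$: any other entirely undirected path would, together with $B\aa A$, form a cycle forbidden by C2, while any other path contains a bidirected edge and hence an internal triplex node. An edge $A - B$, with $K=co_G(A)$, gives $A\de_G B|K\setminus\{A,B\}$ by an analogous count, in which an all-undirected detour carries an internal non-triplex node inside $K\setminus\{A,B\}$ and a detour through a bidirected edge carries a triplex node which, by C1 and C2, cannot lie in $K\setminus\{A,B\}$. These two cases parallel the base cases for covariance and concentration graphs in \cite{Pennaetal.2009,Penna2013}.

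For the inductive step, symmetry, decomposition and weak union preserve membership in $S$ at once, since the same witness remains valid (these operations only reverse the pair or widen the admissible range of $U$). The four ``splitting'' rules — contraction1, contraction2, intersection, composition — each have the shape ``$(X,Y\cup W,Z)\in S$ and a separation in $G$ imply $(X,\cdot,\cdot)\in S$''; taking the case of contraction2 with the separation $X\ci_G W|Z$, the normalized witness path $\rho_{A:B}$ for $(X,Y\cup W,Z)$ is $Z$-open, so its endpoint $B$ must lie in $Y$ (else $\rho_{A:B}$ is a $Z$-open path from $X$ to $W$), and keeping the same path while enlarging the conditioning set by $W$ — which meets no node of $\rho_{A:B}$ — yields a $(Z\cup W)$-open path; it is still the unique one, because any competitor would carry a triplex node $t\in W$, and the initial segment of the competitor up to the first such $t$ would be a $Z$-open path from $X$ to $W$, contradicting $X\ci_G W|Z$ again. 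Finally, weak transitivity1 and weak transitivity2 have $K$ a single node on the splice: I would concatenate the normalized $Z$-open witness paths $\rho_{A:K}$ and $\rho_{K:B'}$; if their union is a path it cannot be $Z$-open (that would contradict $X\ci_G Y|Z$), so $K$ is a triplex node of it, the union is $(Z\cup K)$-open, and after checking uniqueness via the separation antecedent (and, if necessary, re-selecting endpoints to exclude internal $X$- or $Y$-nodes) this witnesses the conclusion of weak transitivity1; if the union is not a path, the closest-common-node surgery of the proofs of Lemmas \ref{lem:b} and \ref{lem:u} produces a $Z$-open $X$--$Y$ path, again a contradiction; weak transitivity2 is the symmetric bookkeeping.

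The main obstacle I anticipate is maintaining the \emph{uniqueness} clause of Definition \ref{def:joined} — the boldfaced ``single'' — at each step, rather than merely the existence of an open path: producing an open path after a rule application is routine, but excluding a second open path is where the separation antecedent of the rule and the structural constraints C1 and C2 (and, for witnesses spanning several connectivity components, Remark \ref{rem:tree}) must be combined. The hardest case should be the weak-transitivity rules, where the witness is genuinely built from two independent pieces and a spurious competitor can arise either by short-circuiting inside one piece — killed by C1/C2 as in the proof of Lemma \ref{lem:u} — or by bypassing the splice node $K$ — killed by the separation antecedent.
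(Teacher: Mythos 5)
Your overall architecture is the paper's: show that the dependence base satisfies the criterion of Definition \ref{def:joined} and that the criterion is closed under the nine contrapositive rules, reading each independence antecedent as a separation in $G$; your handling of symmetry, decomposition, weak union, and of the two weak-transitivity rules (concatenation at $K$, closest-common-node surgery, uniqueness via the separation antecedent) matches the paper's proof. The genuine gap is your opening reduction of the criterion to ``existence of a \emph{unique} $Z$-open path between suitable endpoints.'' That reduction is not valid, and the splitting-rule checks that lean on it do not close. The existential quantifier over $U$ with $Z \subseteq U \subseteq X \cup Y \cup Z \setminus \{A,B\}$ is not mere bookkeeping of spouseless nodes: a second $Z$-open path between $A$ and $B$ can pass, as a non-triplex node, through a spouseless node of $(X \cup Y) \setminus \{A,B\}$ that lies in $U$, so uniqueness among $U$-open paths does not give uniqueness among $Z$-open paths. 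Concretely, in your contraction2 argument you exclude only competitors carrying a triplex node in $W$; a second $(Z \cup W)$-open path all of whose triplex nodes lie in $Z$ is left standing, and the inductive hypothesis cannot kill it. In the paper's proof this is exactly ``Case 3'' of each splitting rule: the competitor's initial segment contains a triplex node $C \in U \cap Y$ (or a non-triplex node $C \in W \setminus U$, depending on the rule), and the witness set is repaired by an \emph{iterative}, node-by-node adjustment of $U$ --- removing or adding one such $C$ at a time, each time verifying that the witness path stays open, that the targeted competitor dies, and that no new path is activated (the last point again using C1/C2). The final $U'$ genuinely contains nodes of $Y$ or $W$ beyond $Z'$, so the $U$-machinery cannot be collapsed away; without it the induction does not go through for contraction1, contraction2, intersection, or composition.

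A smaller point: your base-case count (the edge $A - B$ is the unique $(K \setminus \{A,B\})$-open path) silently uses the simplified notion of open path; under the definition with the ``or has some spouse'' clause, a detour $A - C - B$ with $C \in K$ having a spouse is also open. The paper dismisses the base case with ``clearly,'' so you are no worse off, but if you rebuild the argument you should fix once and for all which notion of openness Definition \ref{def:joined} counts, since the two notions induce the same separations but not the same number of open paths.
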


\begin{proof}
Clearly, all the dependencies in the dependence base of $p$ are identified by the graphical criterion in Definition \ref{def:joined}. Therefore, it only remains to prove that this graphical criterion satisfies the nine properties introduced at the beginning of this section.

\begin{itemize}
\item Symmetry $Y \de_G X | Z \Rightarrow X \de_G Y | Z$. The path $\rho_{A:B}$ and the set of nodes $U$ that make the left-hand side hold also make the right-hand side hold.

\item Decomposition $X \de_G Y | Z \Rightarrow X \de_G Y \cup W | Z$. The path $\rho_{A:B}$ and the set of nodes $U$ that make the left-hand side hold also make the right-hand side hold.

\item Weak union $X \de_G Y | Z \cup W \Rightarrow X \de_G Y \cup W | Z$. The path $\rho_{A:B}$ and the set of nodes $U$ that make the left-hand side hold also make the right-hand side hold.

\item Contraction1 $X \de_G Y \cup W | Z \land X \ci_G Y | Z \cup W \Rightarrow X \de_G W | Z$. Let $\rho_{A:B}$ and $U$ denote the path and the set of nodes that make the left-hand side hold. Following Remark \ref{rem:noinpath}, we can assume without loss of generality that $A$ and $B$ are the only nodes in $\rho_{A:B}$ that are in $X$ and $Y W$, respectively. Note also that $X \ci_G Y | Z W$ implies that no node in the path $\rho_{A:B}$ can be in $Y$. Then, $\rho_{A:B}$ is $(U \setminus Y)$-open. If there is a second path between $A$ and $B$ in $G$ that is $(U \setminus Y)$-open, then let $\varrho_{A:B}$ be any of the shortest such paths. Then, we can find a node $C \in W \setminus U$ such that $\rho_{A:B}$ is $(U \setminus Y \cup C)$-open but $\varrho_{A:B}$ is not. To see it, note that if $\varrho_{A:B}$ is $(U \setminus Y)$-open, then it must contain a non-triplex node $D \in U \cap Y$ because, otherwise, $\varrho_{A:B}$ would be $U$-open, which is a contradiction. Moreover, note that $X \ci_G Y | Z \cup W$ implies that one of the following cases must occur.

\begin{description}

\item[Case 1] $\varrho_{A:D}$ contains a triplex node that is not in $Z$ or $W$. However, this contradicts the assumption that $\varrho_{A:B}$ is $(U \setminus Y)$-open.

\item[Case 2] $\varrho_{A:D}$ contains a non-triplex node that is in $Z$ or $U \cap W$. However, this contradicts the assumption that $\varrho_{A:B}$ is $(U \setminus Y)$-open. 

\item[Case 3] Cases 1 and 2 do not apply. Then, $\varrho_{A:D}$ must contain a non-triplex node $C \in W \setminus U$. Clearly, $\rho_{A:B}$ is $(U \setminus Y \cup C)$-open. Moreover, adding $C$ to $U \setminus Y$ does not activate new paths. That is, if a path $\varphi$ between two nodes in $G$ is not $(U \setminus Y)$-open, then it is not $(U \setminus Y \cup C)$-open because, otherwise, $C$ would have to be a triplex node in $\varphi$ and, thus, $C$ would have some spouse in $G$ and, thus, removing $C$ from $\varrho_{A:B}$ would result in a path between $A$ and $B$ in $G$ by definition of MCCGs which, moreover, would be $(U \setminus Y)$-open, which contradicts the assumption that $\varrho_{A:B}$ is one of the shortest such paths.

\end{description}

Therefore, by repeating the reasoning above we can obtain a set of nodes $U'$ such that $Z \subseteq U' \subseteq X \cup W \cup Z \setminus \{A,B\}$ and $\rho_{A:B}$ is the only path between $A$ and $B$ in $G$ that is $U'$-open. Consequently, $X \de_G W | Z$ holds.

\item Contraction2 $X \de_G Y \cup W | Z \land X \ci_G W | Z \Rightarrow X \de_G Y | Z \cup W$. Let $\rho_{A:B}$ and $U$ denote the path and the set of nodes that make the left-hand side hold. Following Remark \ref{rem:noinpath}, we can assume without loss of generality that $A$ and $B$ are the only nodes in $\rho_{A:B}$ that are in $X$ and $Y W$, respectively. Note also that $X \ci_G W | Z$ implies that no node in the path $\rho_{A:B}$ can be in $W$. Then, $\rho_{A:B}$ is $(U \cup W)$-open. If there is a second path $\varrho_{A:B}$ between $A$ and $B$ in $G$ that is $(U \cup W)$-open, then we can find a node $C \in U \cap Y$ such that $\rho_{A:B}$ is $(U \cup W \setminus C)$-open but $\varrho_{A:B}$ is not. To see it, note that if $\varrho_{A:B}$ is $(U \cup W)$-open, then it must contain a triplex node $D \in W \setminus U$ because, otherwise, $\varrho_{A:B}$ would be $U$-open, which is a contradiction. Moreover, note that $X \ci_G W | Z$ implies that one of the following cases must occur.

\begin{description}

\item[Case 1] $\varrho_{A:D}$ contains a non-triplex node that is in $Z$. However, this contradicts the assumption that $\varrho_{A:B}$ is $(U \cup W)$-open.

\item[Case 2] $\varrho_{A:D}$ contains a triplex node that is not in $Z$ or $Y \setminus U$. However, this contradicts the assumption that $\varrho_{A:B}$ is $(U \cup W)$-open. 

\item[Case 3] Cases 1 and 2 do not apply. Then, $\varrho_{A:D}$ must contain a triplex node $C \in U \cap Y$. Clearly, $\rho_{A:B}$ is $(U \cup W \setminus C)$-open. Moreover, removing $C$ from $U \cup W$ does not activate new paths. That is, if a path $\varphi$ between two nodes in $G$ is not $(U \cup W)$-open, then it is not $(U \cup W \setminus C)$-open because, otherwise, $C$ would have to be a non-triplex node in $\varphi$. However, recall that $C$ is a triplex node in $\varrho_{A:D}$. Then, $C$ has some spouse in $G$ and, thus, $\varphi$ would be $(U \cup W)$-open, which is a contradiction.

\end{description}

Therefore, by repeating the reasoning above we can obtain a set of nodes $U'$ such that $Z \cup W \subseteq U' \subseteq X \cup Y \cup W \cup Z \setminus \{A,B\}$ and $\rho_{A:B}$ is the only path between $A$ and $B$ in $G$ that is $U'$-open. Consequently, $X \de_G Y | Z \cup W$ holds.

\item Intersection $X \de_G Y \cup W | Z \land X \ci_G Y | Z \cup W \Rightarrow X \de_G W | Z \cup Y$. Let $\rho_{A:B}$ and $U$ denote the path and the set of nodes that make the left-hand side hold. Following Remark \ref{rem:noinpath}, we can assume without loss of generality that $A$ and $B$ are the only nodes in $\rho_{A:B}$ that are in $X$ and $Y W$, respectively. Note also that $X \ci_G Y | Z \cup W$ implies that no node in the path $\rho_{A:B}$ can be in $Y$. Then, $\rho_{A:B}$ is $(U \cup Y)$-open. If there is a second path between $A$ and $B$ in $G$ that is $(U \cup Y)$-open, then let $\varrho_{A:B}$ be any of the shortest such paths. Then, we can find a node $C \in W \setminus U$ such that $\rho_{A:B}$ is $(U \cup Y \cup C)$-open but $\varrho_{A:B}$ is not. To see it, note that if $\varrho_{A:B}$ is $(U \cup Y)$-open, then it must contain a triplex node $D \in Y \setminus U$ because, otherwise, $\varrho_{A:B}$ would be $U$-open, which is a contradiction. Moreover, note that $X \ci_G Y | Z \cup W$ implies that one of the following cases must occur.

\begin{description}

\item[Case 1] $\varrho_{A:D}$ contains a triplex node that is not in $Z$ or $W$. However, this contradicts the assumption that $\varrho_{A:B}$ is $(U \cup Y)$-open.

\item[Case 2] $\varrho_{A:D}$ contains a non-triplex node that is in $Z$ or $U \cap W$. However, this contradicts the assumption that $\varrho_{A:B}$ is $(U \cup Y)$-open. 

\item[Case 3] Cases 1 and 2 do not apply. Then, $\varrho_{A:D}$ must contain a non-triplex node $C \in W \setminus U$. Clearly, $\rho_{A:B}$ is $(U \cup Y \cup C)$-open. Moreover, adding $C$ to $U \cup Y$ does not activate new paths. That is, if a path $\varphi$ between two nodes in $G$ is not $(U \cup Y)$-open, then it is not $(U \cup Y \cup C)$-open because, otherwise, $C$ would have to be a triplex node in $\varphi$ and, thus, $C$ would have some spouse in $G$ and, thus, removing $C$ from $\varrho_{A:B}$ would result in a path between $A$ and $B$ in $G$ by definition of MCCGs which, moreover, would be $(U \cup Y)$-open, which contradicts the assumption that $\varrho_{A:B}$ is one of the shortest such paths.

\end{description}

Therefore, by repeating the reasoning above we can obtain a set of nodes $U'$ such that $Z \cup Y \subseteq U' \subseteq X \cup Y \cup W \cup Z \setminus \{A,B\}$ and $\rho_{A:B}$ is the only path between $A$ and $B$ in $G$ that is $U'$-open. Consequently, $X \de_G W | Z \cup Y$ holds.

\item Weak transitivity1 $X \de_G K | Z \land K \de_G Y | Z \land X \ci_G Y | Z \Rightarrow X \de_G Y | Z \cup K$. Let $\rho_{A:K}$ and $U$ denote the path and the set of nodes that make $X \de_G K | Z$ hold. Likewise, let $\rho_{K:B}$ and $W$ denote the path and the set of nodes that make $K \de_G Y | Z$ hold. We show below that the path $\rho_{A:K} \cup \rho_{K:B}$ and the set of nodes $U \cup W \cup K$ make $X \de_G Y | Z \cup K$ hold. Following Remark \ref{rem:noinpath}, we can assume without loss of generality that $A$ is the only node in $\rho_{A:K}$ that is in $X$, and that $B$ is the only node in $\rho_{K:B}$ that is in $Y$. Note also that $X \ci_G Y | Z$ implies that $\rho_{A:K}$ has no node in $Y$ and $\rho_{K:B}$ has no node in $X$. Following Remark \ref{rem:nospouse}, we assume without loss of generality that the nodes in $U$ that are not in $Z$ or $\rho_{A:K}$ have no spouse in $G$, and that the nodes in $W$ that are not in $Z$ or $\rho_{K:B}$ have no spouse in $G$.

First, note that $\rho_{A:K}$ is the only path between $A$ and $K$ in $G$ that is $(U \cup W)$-open. To see it, note that the nodes that are in both $W$ and $\rho_{K:B}$ are also in $Z$ and, thus, in $U$. On the other hand, the nodes that are in $W$ but not in $Z$ or $\rho_{K:B}$ do not have any spouse in $G$ and, thus, they cannot activate any new path between $A$ and $K$ in $G$. Likewise, $\rho_{K:B}$ is the only path between $K$ and $B$ in $G$ that is $(U \cup W)$-open.

Second, note that $\rho_{A:K} \cup \rho_{K:B}$ is a path, because $K$ is the only node that is in both $\rho_{A:K}$ and $\rho_{K:B}$. To see it, assume the contrary. Specifically, let $C \neq K$ denote the closest node to $A$ and $B$ that is in both $\rho_{A:K}$ and $\rho_{K:B}$. Note that the path $\rho_{A:C} \cup \rho_{C:B}$ cannot be $(U \cup W)$-open by $X \ci_G Y | Z$. Therefore, one of the following cases must occur.

\begin{description}

\item[Case 1] $C$ is a non-triplex node in $\rho_{A:C} \cup \rho_{C:B}$ and $C \in Z$. However, that $C \in Z$ together with the fact that $\rho_{A:K}$ is $(U \cup W)$-open imply that $C$ is a triplex node in $\rho_{A:K}$. Thus, $C$ has some spouse in $G$ and, thus, $\rho_{A:C} \cup \rho_{C:B}$ is $(U \cup W)$-open, which is a contradiction.

\item[Case 2] $C$ is a triplex node in $\rho_{A:C} \cup \rho_{C:B}$ and $C \notin Z$. However, that $C \notin Z$ together with the fact that $\rho_{A:K}$ is $(U \cup W)$-open imply that $C$ is a non-triplex node in $\rho_{A:K}$. Moreover, that $C$ is a triplex node in $\rho_{A:C} \cup \rho_{C:B}$ implies that $C$ has some spouse in $G$. Then, removing $C$ from $\rho_{A:K}$ results in a second path between $A$ and $K$ by definition of MCCGs which, moreover, is $(U \cup W)$-open, which is a contradiction.

\end{description}

Moreover, note that $\rho_{A:K} \cup \rho_{K:B}$ must be $(U \cup W \cup K)$-open because, otherwise, $K$ would have to be a non-triplex node in $\rho_{A:K} \cup \rho_{K:B}$, which would contradict $X \ci_G Y | Z$.

Finally, if there is a second path between $A$ and $B$ in $G$ that is $(U \cup W \cup K)$-open, then $K$ must be a triplex node in that path because, otherwise, that path would contradict $X \ci_G Y | Z$. However, this implies that there is a second path between $A$ and $K$ or between $K$ and $B$ in $G$ that is $(U \cup W)$-open, which is a contradiction.

\item Weak transitivity2 $X \de_G K | Z \land K \de_G Y | Z \land X \ci_G Y | Z \cup K \Rightarrow X \de_G Y | Z$. Let $\rho_{A:K}$ and $U$ denote the path and the set of nodes that make $X \de_G K | Z$ hold. Likewise, let $\rho_{K:B}$ and $W$ denote the path and the set of nodes that make $K \de_G Y | Z$ hold. We show below that the path $\rho_{A:K} \cup \rho_{K:B}$ and the set of nodes $U \cup W$ make $X \de_G Y | Z$ hold. Following Remark \ref{rem:noinpath}, we can assume without loss of generality that $A$ is the only node in $\rho_{A:K}$ that is in $X$, and that $B$ is the only node in $\rho_{K:B}$ that is in $Y$. Note also that $X \ci_G Y | Z \cup K$ implies that $\rho_{A:K}$ has no node in $Y$ and $\rho_{K:B}$ has no node in $X$. Following Remark \ref{rem:nospouse}, we assume without loss of generality that the nodes in $U$ that are not in $Z$ or $\rho_{A:K}$ have no spouse in $G$, and that the nodes in $W$ that are not in $Z$ or $\rho_{K:B}$ have no spouse in $G$.

First, note that $\rho_{A:K}$ is the only path between $A$ and $K$ in $G$ that is $(U \cup W)$-open, and that $\rho_{K:B}$ is the only path between $K$ and $B$ in $G$ that is $(U \cup W)$-open. To see it, repeat the reasoning in weak transitivity1.

Second, note that $\rho_{A:K} \cup \rho_{K:B}$ is a path between $A$ and $B$ in $G$. To see it, repeat the reasoning in weak transitivity1 (note that $X \ci_G Y | Z \cup K$ should be used instead of $X \ci_G Y | Z$).

Moreover, note that $\rho_{A:K} \cup \rho_{K:B}$ must be $(U \cup W)$-open because, otherwise, $K$ would have to be a triplex node in $\rho_{A:K} \cup \rho_{K:B}$, which would contradict $X \ci_G Y | Z \cup K$.

Finally, if there is a second path between $A$ and $B$ in $G$ that is $(U \cup W)$-open, then $K$ must be a non-triplex node in that path because, otherwise, that path would contradict $X \ci_G Y | Z \cup K$. However, this implies that there is a second path between $A$ and $K$ or between $K$ and $B$ in $G$ that is $(U \cup W)$-open, which is a contradiction.

\item Composition $X \de_G Y \cup W | Z \land X \ci_G Y | Z \Rightarrow X \de_G W | Z$. Let $\rho_{A:B}$ and $U$ denote the path and the set of nodes that make the left-hand side hold. Following Remark \ref{rem:noinpath}, we can assume without loss of generality that $A$ and $B$ are the only nodes in $\rho_{A:B}$ that are in $X$ and $Y W$, respectively. Note also that $X \ci_G Y | Z$ implies that no node in the path $\rho_{A:B}$ can be in $Y$. Then, $\rho_{A:B}$ is $(U \setminus Y)$-open. If there is a second path $\varrho_{A:B}$ between $A$ and $B$ in $G$ that is $(U \setminus Y)$-open, then we can find a node $C \in U \setminus Y$ such that $\rho_{A:B}$ is $(U \setminus Y \setminus C)$-open but $\varrho_{A:B}$ is not. To see it, note that if $\varrho_{A:B}$ is $(U \setminus Y)$-open, then it must contain a non-triplex node $D \in U \cap Y$ because, otherwise, $\varrho_{A:B}$ would be $U$-open, which is a contradiction. Moreover, note that $X \ci_G Y | Z$ implies that one of the following cases must occur.

\begin{description}

\item[Case 1] $\varrho_{A:D}$ contains a non-triplex node that is in $Z$. However, this contradicts the assumption that $\varrho_{A:B}$ is $(U \setminus Y)$-open.

\item[Case 2] $\varrho_{A:D}$ contains a triplex node that is not in $Z$ or $W \setminus U$. However, this contradicts the assumption that $\varrho_{A:B}$ is $(U \setminus Y)$-open. 

\item[Case 3] Cases 1 and 2 do not apply. Then, $\varrho_{A:D}$ must contain a triplex node $C \in U \cap W$. Clearly, $\rho_{A:B}$ is $(U \setminus Y \setminus C)$-open. Moreover, removing $C$ from $U \setminus Y$ does not activate new paths. That is, if a path $\varphi$ between two nodes in $G$ is not $(U \setminus Y)$-open, then it is not $(U \setminus Y \setminus C)$-open because, otherwise, $C$ would have to be a non-triplex node in $\varphi$. However, recall that $C$ is a triplex node in $\varrho_{A:D}$. Then, $C$ has some spouse in $G$ and, thus, $\varphi$ would be $(U \setminus Y)$-open, which is a contradiction.

\end{description}

Therefore, by repeating the reasoning above we can obtain a set of nodes $U'$ such that $Z \subseteq U' \subseteq X \cup W \cup Z \setminus \{A,B\}$ and $\rho_{A:B}$ is the only path between $A$ and $B$ in $G$ that is $U'$-open. Consequently, $X \de_G W | Z$ holds.

\end{itemize}
\end{proof}

While Theorem \ref{the:sound} may be somewhat expected because if there is a single path between $A$ and $B$ in $G$ that is $U$-open then there is no possibility of path cancelation, the combination of Theorems \ref{the:sound} and \ref{the:complete} is rather exciting: We now have a simple graphical criterion to decide whether a given dependence is or is not in the WTC graphoid closure of the dependence base of $p$, i.e. we do not need to try to find a derivation of it, which is usually a tedious task.

\begin{corollary}
Let $G$ be a MMCCG of a WTC graphoid $p$. Then, $X \de_G Y | Z$ if and only if $X \nci_G Y | Z$ is in the WTC graphoid closure of the dependence base of $p$.
\end{corollary}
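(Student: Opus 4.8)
The plan is to read this corollary as the exact conjunction of the two preceding theorems, so the proof is a one-line citation. First I would unpack the biconditional into its two halves. The ``only if'' direction asserts that whenever $X \de_G Y \mid Z$ holds, the dependence $X \nci_p Y \mid Z$ lies in the WTC graphoid closure of the dependence base of $p$; this is precisely the statement of Theorem \ref{the:sound}. The ``if'' direction asserts the converse, that membership of $X \nci_p Y \mid Z$ in the WTC graphoid closure of the dependence base of $p$ forces $X \de_G Y \mid Z$; this is precisely the statement of Theorem \ref{the:complete}. Since the two implications together are logically equivalent to the biconditional in the corollary, nothing further is needed. (I would also note that, as in the theorem statements, the right-hand side should be read with $\nci_p$ rather than $\nci_G$, i.e.\ it concerns a dependence in $p$, not a non-separation in $G$.)

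Accordingly, the body of the proof would be essentially ``It follows from Theorems \ref{the:sound} and \ref{the:complete}.'', perhaps with a sentence recalling that $G$ is assumed to have no cycle with both undirected and bidirected edges, since that hypothesis (inherited from the setting of those theorems and used via Remark \ref{rem:tree}) is what makes the $\de_G$ criterion behave correctly.

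There is no genuine obstacle at the level of the corollary itself; all the difficulty has already been absorbed into the two theorems. If I wanted to flag where the real work sits, I would point out that Theorem \ref{the:sound} rests on the single-path argument together with Lemmas \ref{lem:b} and \ref{lem:u} (handling the bidirected-component and undirected-component base cases) and an induction on the number of connectivity components a path traverses, while Theorem \ref{the:complete} requires verifying that the $\de_G$ criterion is closed under all nine contrapositive rules (symmetry, decomposition, weak union, contraction1/2, intersection, weak transitivity1/2, composition) — the contraction, intersection and weak-transitivity cases being the delicate ones, since there one must exhibit a conditioning set that keeps the witnessing path unique while killing off any competing open path. Given those results, the corollary is immediate.
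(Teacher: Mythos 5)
Your proposal is correct and matches the paper exactly: the corollary is stated without a separate proof precisely because it is the conjunction of Theorems \ref{the:sound} and \ref{the:complete}, and your reading of $\nci_G$ as a typo for $\nci_p$ is also right. Nothing further is needed.
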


It is worth mentioning that the graphical criterion in Definition \ref{def:joined} is not complete in the sense of identifying all the dependencies that are shared by all the WTC graphoids whose MMCCG is $G$. Note also that neither the graphical criterion in Definition \ref{def:joined} nor any other sound graphical criterion can be complete in the sense of identifying all the dependencies in $p$. See \cite[pp. 1082-1083]{Pennaetal.2009} and \cite[pp. 202-203]{Penna2013} for counterexamples.

One of the reasons for developing the graphical criterion in Definition \ref{def:joined} is that $X \nci_G Y | Z$ does not imply $X \nci_p Y | Z$. However, if $G$ has no cycle, then the corollary below proves that $X \nci_G Y | Z$ does imply $X \nci_p Y | Z$ and, moreover, that this way of identifying dependencies in $p$ is sound and complete in the strictest sense possible, since all and only all of them are identified.

\begin{corollary}
Let $G$ be a MMCCG of a WTC graphoid $p$. If $G$ has no cycle, then $p$ is faithful to $G$.
\end{corollary}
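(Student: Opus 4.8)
The plan is to verify the two implications defining faithfulness, $X \ci_G Y | Z \Rightarrow X \ci_p Y | Z$ and $X \ci_p Y | Z \Rightarrow X \ci_G Y | Z$, separately; only the acyclicity of $G$ is genuinely new input, the rest being a repackaging of results already established. The first implication is simply the soundness of the separation criterion, and I would invoke the argument already given in the text right after the definition of a MCCG of a WTC graphoid: taking ${\mathcal Q}$ to be the partition of $V$ into the undirected connectivity components of $G$, the separations read off $G$ coincide with those of $cp(G,{\mathcal Q})$ by Corollary \ref{cor:equivalence2}; since $G$ is in particular a MCCG of the WTC graphoid $p$, the independencies corresponding to the pairwise separation base of $G$ relative to ${\mathcal Q}$ hold in $p$, and since $p$ is a WTC graphoid every independence corresponding to $cp(G,{\mathcal Q})$ holds in $p$.

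For the second implication I would argue the contrapositive. Suppose $X \nci_G Y | Z$, so that some $Z$-open path $\rho_{A:B}$ joins a node $A \in X$ to a node $B \in Y$ in $G$. Because $G$ has no cycle, $\rho_{A:B}$ is the \emph{only} path between $A$ and $B$ in $G$, since two distinct paths with the same endpoints would together contain a cycle. Hence $X \de_G Y | Z$, witnessed by $\rho_{A:B}$ and $U = Z$ (note that $Z \subseteq Z \subseteq X \cup Y \cup Z \setminus \{A,B\}$ since $X$, $Y$, $Z$ are pairwise disjoint). An acyclic $G$ has no cycle with both undirected and bidirected edges, so Theorem \ref{the:sound} applies and yields that $X \nci_p Y | Z$ lies in the WTC graphoid closure of the dependence base of $p$. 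Finally, because $G$ is a MMCCG of $p$, every dependence in the dependence base genuinely holds in $p$, and because $p$ is a WTC graphoid it is closed under the nine contrapositive properties that generate that closure; therefore every dependence in the closure holds in $p$, in particular $X \nci_p Y | Z$. Combining the two implications gives that $p$ is faithful to $G$.

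I do not expect a real obstacle. The one point needing a moment's thought is the path-uniqueness observation, which is exactly what reduces the hypothesis $X \de_G Y | Z$ of Theorem \ref{the:sound} to the weaker $X \nci_G Y | Z$ when $G$ is acyclic; everything else is already available (Corollary \ref{cor:equivalence2}, Theorem \ref{the:sound}), and the closure-membership bookkeeping is routine.
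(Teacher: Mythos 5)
Your proposal is correct and follows essentially the same route as the paper: the soundness direction is the already-established soundness of the separation criterion, and the key step for the converse is exactly the paper's observation that acyclicity forces any $Z$-open path between $A\in X$ and $B\in Y$ to be the unique path between them, so $X \nci_G Y | Z$ upgrades to $X \de_G Y | Z$ and Theorem \ref{the:sound} applies. Your choice of witness $U=Z$ (rather than the paper's $U = X \cup Y \cup Z \setminus \{A,B\}$) is an immaterial, if anything slightly cleaner, variation.
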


\begin{proof}
Assume to the contrary that $p$ is not faithful to $G$. Since $G$ is a MCCG of $p$, this assumption is equivalent to assume that there exist three pairwise disjoint subsets of $V$, here denoted as $X$, $Y$ and $Z$, such that $X \nci_G Y | Z$ but $X \ci_p Y | Z$. However, $X \nci_G Y | Z$ implies that there must exist a path in $G$ between some node $A \in X$ and some node $B \in Y$ that is $(X \cup Y \cup Z \setminus \{A,B\})$-open. Furthermore, since $G$ has no cycle, that must be the only such path between $A$ and $B$ in $G$. However, this implies $X \de_G Y | Z$ and thus $X \nci_p Y | Z$ by Theorem \ref{the:sound}, which is a contradiction.
\end{proof}

\subsection{Discussion}\label{sec:discussion3}

In this section, we have introduced a sound and complete graphical criterion for reading dependencies from a MCCG of a WTC graphoid, e.g. a Gaussian probability distribution. Recall that one of the advantages of MCCGs is the ability to model the covariance and concentration matrices of a Gaussian probability distribution jointly with a single graph, rather than modeling the former with a covariance graph and the latter with a concentration graph. We have argued that, by doing so, MCCGs may model more accurately the probability distribution. We show below two examples that illustrate this. Specifically, the examples show that a MMCCG of a WTC graphoid $p$ can identify more (in)dependencies in $p$ than the covariance graph and the concentration graph of $p$ jointly.

\begin{example}
Consider a Gaussian probability distribution $p$ that is faithful to the MCCG $G$ below. Recall from Theorem \ref{the:faithfulness} that such a probability distribution exists. Note that $G$ is a MMCCG of $p$.

\begin{table}[H]
\centering
\scalebox{0.75}{
\begin{tabular}{c}
\begin{tikzpicture}[inner sep=1mm]
\node at (0,0) (A) {$A$};
\node at (1,0) (B) {$B$};
\node at (0,-1) (C) {$C$};
\node at (1,-1) (D) {$D$};
\path[-] (A) edge (B);
\path[-] (A) edge (C);
\path[<->] (B) edge (D);
\path[<->] (C) edge (D);
\end{tikzpicture}\\
$G$
\end{tabular}}
\end{table}

The covariance graph and the concentration graph of $p$ are depicted by the graphs $H$ and $F$ below.

\begin{table}[H]
\centering
\scalebox{0.75}{
\begin{tabular}{cc}
\begin{tikzpicture}[inner sep=1mm]
\node at (0,0) (A) {$A$};
\node at (1,0) (B) {$B$};
\node at (0,-1) (C) {$C$};
\node at (1,-1) (D) {$D$};
\path[<->] (A) edge (B);
\path[<->] (A) edge (C);
\path[<->] (B) edge (C);
\path[<->] (B) edge (D);
\path[<->] (C) edge (D);
\end{tikzpicture}
&
\begin{tikzpicture}[inner sep=1mm]
\node at (0,0) (A) {$A$};
\node at (1,0) (B) {$B$};
\node at (0,-1) (C) {$C$};
\node at (1,-1) (D) {$D$};
\path[-] (A) edge (B);
\path[-] (A) edge (C);
\path[-] (A) edge (D);
\path[-] (B) edge (C);
\path[-] (B) edge (D);
\path[-] (C) edge (D);
\end{tikzpicture}\\
$H$ & $F$
\end{tabular}}
\end{table}

Now, note that $B \ci_p C | A$ because $B \ci_G C | A$. However, $B \nci_H C | A$ and $B \nci_F C | A$.
\end{example}

\begin{example}
Consider a Gaussian probability distribution $p$ that is faithful to the MCCG $G$ below. Recall from Theorem \ref{the:faithfulness} that such a probability distribution exists. Note that $G$ is a MMCCG of $p$ and, moreover, that it has no cycle with both undirected and bidirected edges.

\begin{table}[H]
\centering
\scalebox{0.75}{
\begin{tabular}{c}
\begin{tikzpicture}[inner sep=1mm]
\node at (0,0) (A) {$A$};
\node at (0,-1) (B) {$B$};
\node at (1,-0.5) (C) {$C$};
\node at (2,-1) (D) {$D$};
\node at (2,0) (E) {$E$};
\path[-] (A) edge (B);
\path[-] (A) edge (C);
\path[-] (B) edge (C);
\path[<->] (C) edge (D);
\path[<->] (C) edge (E);
\path[<->] (D) edge (E);
\end{tikzpicture}\\
$G$
\end{tabular}}
\end{table}

The covariance graph and the concentration graph of $p$ are depicted by the graphs $H$ and $F$ below.

\begin{table}[H]
\centering
\scalebox{0.75}{
\begin{tabular}{cc}
\begin{tikzpicture}[inner sep=1mm]
\node at (0,0) (A) {$A$};
\node at (0,-1) (B) {$B$};
\node at (1,-0.5) (C) {$C$};
\node at (2,-1) (D) {$D$};
\node at (2,0) (E) {$E$};
\path[<->] (A) edge (B);
\path[<->] (A) edge (C);
\path[<->] (B) edge (C);
\path[<->] (C) edge (D);
\path[<->] (C) edge (E);
\path[<->] (D) edge (E);
\end{tikzpicture}
&
\begin{tikzpicture}[inner sep=1mm]
\node at (0,0) (A) {$A$};
\node at (0,-1) (B) {$B$};
\node at (1,-0.5) (C) {$C$};
\node at (2,-1) (D) {$D$};
\node at (2,0) (E) {$E$};
\path[-] (A) edge (B);
\path[-] (A) edge [bend left] (E);
\path[-] (A) edge [bend left] (D);
\path[-] (A) edge (C);
\path[-] (B) edge (C);
\path[-] (B) edge [bend right] (D);
\path[-] (B) edge [bend right] (E);
\path[-] (C) edge (D);
\path[-] (C) edge (E);
\path[-] (D) edge (E);
\end{tikzpicture}\\
$H$ & $F$
\end{tabular}}
\end{table}

Now, note that $A \nci_p D | B C$ because $A \de_G D | B C$. However, neither $A \de_H D | B C$ nor $A \de_F D | B C$ hold.
\end{example}

Despite the examples above, we do not discard the possibility that some Gaussian probability distributions are modeled more accurately by a covariance graph plus a concentration graph than by a MCCG. We would like to study when this occurs, if at all.

\section*{Acknowledgments}

We would like to thank the anonymous Reviewers and specially Reviewer 1 for their comments. This work is funded by the Center for Industrial Information Technology (CENIIT) and a so-called career contract at Link\"oping University, by the Swedish Research Council (ref. 2010-4808), and by FEDER funds and the Spanish Government (MICINN) through the project TIN2010-20900-C04-03.

\end{document}